\newcommand{\sol}{\mathrm{sol}}
\newcommand{\relu}{\mathrm{relu}}
\newcommand{\diff}{\mathrm{diff}}
\newcommand{\conv}{\mathrm{conv}\,}
\newcommand{\jac}{\mathrm{Jac}\,}
\newcommand{\aff}{\mathrm{aff}}
\newcommand{\backprop}{\mathrm{backprop}}
\newcommand{\RR}{\mathbb{R}}
\newcommand{\NN}{\mathbb{N}}
\newcommand{\N}{\mathbb{N}}
\newcommand{\dic}{{\rm Dic}}
\newcommand{\supp}{\mathrm{supp}\,}
\newcommand{\partialc}{\partial^c}
\newcommand{\sign}{\mathrm{sign}}
\newtheorem{theorem}{Theorem}
\newtheorem{proposition}{Proposition}
\newtheorem{corollary}{Corollary}
\newtheorem{definition}{Definition}
\newtheorem{assumption}{Assumption}
\newtheorem{remark}{Remark}
\newtheorem{example}{Example}
\newenvironment{proof}[1][]{\noindent {\bf Proof#1:\;}}{\hfill $\Box$}
\newenvironment{apptheorem}[1]
  {\innercustomthm}
  {\endinnercustomthm}
\newenvironment{appcor}[1]
  {\innercustomcor}
  {\endinnercustomcor}
\newenvironment{appprop}[1]
  {\innercustomprop}
  {\endinnercustomprop}
\newcommand{\R}{\mathbb{R}}
\title{Nonsmooth Implicit Differentiation for Machine Learning and Optimization}
\author{
   J\'er\^ome Bolte\\
   Toulouse School\\ of Economics\\
   Univ. Toulouse\\
   Toulouse, France
   \And
   Tam Le\\
   Toulouse School\\ of Economics \\
   Univ. Toulouse\\
   Toulouse, France
   \And
   Edouard Pauwels\\
    IRIT, CNRS\\
   Univ. Toulouse\\
   Toulouse, France
   \And
   Antonio Silveti-Falls\\
   Toulouse School\\ of Economics \\
   Univ. Toulouse\\
   Toulouse, France
}
\begin{document}
\etocdepthtag.toc{mtsection}
\etocsettagdepth{mtsection}{subsection}
\etocsettagdepth{mtappendix}{none}
\maketitle

\begin{abstract}

    In view of training increasingly complex learning architectures, we establish a nonsmooth implicit function theorem with an operational calculus. Our result applies to most practical problems (i.e., definable problems) provided that a nonsmooth form of the classical invertibility condition is fulfilled. This approach allows for {\em formal subdifferentiation}: for instance, replacing derivatives by Clarke Jacobians in the usual differentiation formulas is fully justified for a wide class of nonsmooth problems. Moreover this calculus is entirely compatible with  algorithmic differentiation (e.g., backpropagation). We provide several applications such as training deep equilibrium networks, training neural nets with conic optimization layers, or hyperparameter-tuning for nonsmooth Lasso-type models. To show the sharpness of our assumptions, we present numerical experiments showcasing the extremely pathological gradient dynamics one can encounter when applying implicit algorithmic differentiation without any hypothesis.
    
\end{abstract}

\section{Introduction}

\paragraph{Differentiable programming.} 
The recent introduction of deep equilibrium networks \cite{bai2019equilibrium}, the increasing importance of bilevel programming (e.g., hyperparameter optimization) \cite{pedregosa2016hyperparameter} and the ubiquity of differentiable programming (e.g., TensorFlow \cite{45381}, PyTorch \cite{NEURIPS2019_9015}, JAX \cite{jax2018github}) in modern optimization call for the development of a versatile theory of nonsmooth differentiation. Our focus is on nonsmooth implicit differentiation. There are currently two practices lying at the crossroads of mathematics and computer science: on the one hand the use of the standard smooth implicit function theorem ``almost everywhere'' \cite{gould2021deep,elghaoui2019} and on the other hand the development of algorithmic differentiation tools  \cite{Agrawal2019differentiable,agrawal2019differentiating,kolter2020tutorial}. The empirical use of the latter in the nonsmooth world has shown surprisingly  efficient results \cite{kolter2020tutorial}, but the current theories cannot explain this success. We bridge this gap by providing nonsmooth implicit differentiation results and  illustrating their impact on the training of neural networks and hyperparameter optimization.

\paragraph{Backpropagation: a formal differentiation approach.} Let us consider $z$ implicitly defined through  $F(z(x))=h(x)$ where $F$ and $h$ have full domain and adequate dimensions. 
How  does  autograd apply to evaluating the ``derivative'' of the implicitly defined function $z$? Regardless of differentiability or nonsmoothness, and provided that inversion is possible, one commonly uses (or dynamically approximates) this derivative by $$\left(\backprop_{F}(z(x))\right)^{-1}\backprop_{h}x,$$
where $\backprop$ outputs the result of formal backpropagation, see e.g., \cite{Rumelhart:1986we}. 
This identity\footnote{The notation $\backprop_{z}$ instead of $\backprop (z)$ is indicative of the fact that $\backprop$ is an operator that does not act on functions themselves but rather on the program used to represent them, see \cite{bolte2020mathematical}.}  is used to provide efficient training despite the fact that the rules of classical nonsmooth calculus are transgressed \cite{bai2019equilibrium,kolter2020tutorial}. Note that  spurious outputs may be created by this approach, but on a negligible set. Consider for example the simple implicit problem $x = f(z(x))$ where $f(z) :=  \tanh(z) + \relu(-z) + z -\relu(z)$, whose solution is  $z(x)=\tanh x$. Yet 

applying the implicit differentiation framework of  \cite{bai2019equilibrium} using \texttt{JAX} library, as presented in \cite{kolter2020tutorial}, 
provides inconsistency of the derivative at the origin, see Figure~\ref{fig:ennemies}.
As mentioned above, despite these unpredictable outputs,  propagating derivatives leads to an  undeniable efficiency. But can we parallel these propagation ideas with a simple mathematical counterpart? Is there a rigorous theory backing up {\em formal (sub)differentiation} or  {\em formal propagation}? The answer is positive and was initiated in \cite{bolte2020conservative,bolte2020mathematical} through conservative Jacobians (see also \cite{lewis2021structure,davis2021conservative}).
\paragraph{A mathematical model for propagating derivatives.}
Conservative calculus models nonsmooth algorithmic differentiation faithfully and allows for a sharp study of training methods in Deep Learning \cite{bolte2020conservative,bolte2020mathematical}. It involves a new class of derivatives, generalizing Clarke Jacobians \cite{clarke1983optimization}. A distinctive feature of conservative calculus is that it is preserved by Jacobian multiplication. Consider for example a feed forward network combining analytic or relu activations and max pooling. A conservative Jacobian for this network can be obtained by using Clarke Jacobians formally as classical Jacobians, regardless of qualification conditions. For instance, Figure~\ref{fig:ennemies} depicts a selection in a conservative Jacobian.
This approach is general enough to handle spurious points such as in Figure ~\ref{fig:ennemies} while keeping the essence of the properties one expects from a derivative. It was proved in \cite{bolte2020conservative} that $\backprop$, applied to any reasonable program of a function, is a conservative Jacobian for this function; in contrast, $\backprop$ cannot be modelled by some subdifferential operator. For instance for the fixed point problem above, given conservative Jacobians $J_F$ and $J_h$ (e.g., Clarke Jacobians) for $F$ and $h$ one obtains a new conservative Jacobian $J_z$ implicitly defined through
\begin{align*}
    J_{F}(z(x))J_z(x)=J_h (x).
\end{align*}
 This property exactly parallels the idea of  ``propagating derivatives'' in practice. It gives a strong meaning to the formal use of Jacobians proposed in \cite{bai2019equilibrium}, and many empirical approaches \cite{gu2020implicit,Agrawal2019differentiable,gould2021deep,elghaoui2019}.

\paragraph{Main contributions:} $\:$\\
\textbf{---} We establish a nonsmooth conservative implicit function theorem that comes with an {\em implicit calculus} which is the central focus of this paper. Our calculus amounts somehow to {\em formal subdifferentiation with Clarke Jacobians}. This approach cannot rely on classical tools like the inverse of a Clarke Jacobian or a composition of Clarke Jacobians, which are not in general Clarke Jacobians. Indeed, a surprising example (Example~\ref{ex:counterExClarkeInverse}) shows that an ``inverse function theorem with Clarke calculus'' is not possible. 

\textbf{---} We study a wide range of applications of our implicit differentiation theorem, covering deep equilibrium problems \cite{bai2019equilibrium}, conic optimization layers \cite{Agrawal2019differentiable}, and hyperparameter optimization for the Lasso \cite{bertrand2020implicit}. Each case is detailed and its specificities are discussed.

\textbf{---} As a consequence, we obtain convergence guarantees for mini-batched stochastic algorithms with vanishing step size for training wide classes of Neural Nets, or for Lasso hyperparameter selection. The assumptions needed for our results are mild and fulfilled by most losses occurring in ML in the spirit of \cite{bolte2020mathematical,lee2020correctness}: elementary log-exp functions \cite{bolte2020mathematical}, semialgebraic functions \cite{bochnak2013real}, all being subclasses of definable functions \cite{coste2000introduction,van1996geometric}. The use of such structural classes has become standard in nonsmooth optimization and is more and more common in ML (see, e.g.,  \cite{castera2019inertial,bolte2020mathematical,lee2020correctness,ji2020directional}).

\textbf{---} As in the smooth implicit function theorem, the invertibility condition is not avoidable in general. We provide various examples for which the assumption is not satisfied; this results in severe failures for the corresponding gradient methods. In Figure~\ref{fig:ennemies}, one sees how lack of invertibility on an otherwise ordinary problem may provide totally unpredictable behavior for smooth quadratic optimization.

\begin{figure}[ht]
\centering
    \includegraphics[width=.3\textwidth]{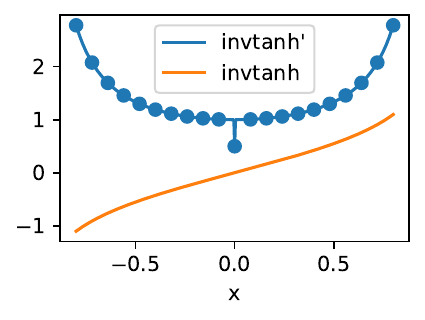} \qquad
     \includegraphics[width=.27\linewidth]{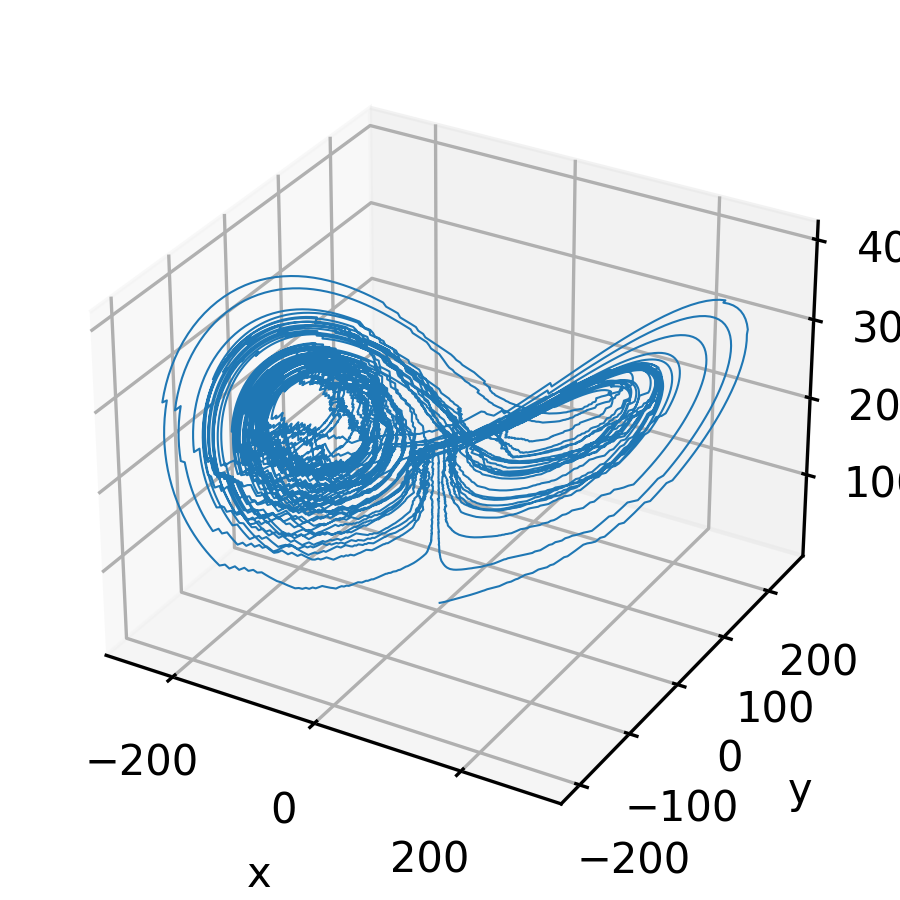}
    \caption{ Left: Inconsistencies due to combination of implicit differentiation and algorithmic differentiation. Right: A gradient  trajectory of an implicitly defined quadratic function.}
    \label{fig:ennemies}
\end{figure}

\paragraph{Definitions and Notations.}
A function $F:\R^n\to\R^m$ is {\em locally Lipschitz} if, for each $x\in\R^n$, there exists a neighborhood $\mcU$ of $x$ such that $F$ is Lipschitz on $\mcU$. Given matrices $A\in\R^{n\times m}$ and $B\in\R^{n\times p}$,  $[A\ B]\in\R^{n\times \para{m+p}}$ denotes their concatenation; $\Id_n$ denotes the $n\times n$ identity matrix. For $q\in\R^n$, $\diag\para{q}\in\R^{n\times n}$ denotes the diagonal matrix whose diagonal entries are given by the $q_i$; $\sign\para{q}\in\brac{-1,0,1}^n$ denotes the componentwise sign function. The {\em convex hull} of $\mcU$ is denoted  $\conv\, \mcU$.   The projection onto a closed convex set $\mcC\in\R^n$ is given, for each $x\in\R^n$, by $\pproj_{\mcC}\para{x} := \argmin\{\frac{1}{2}\norm{u-x}{}^2:u\in\mcC\}$. 
Given a convex proper lower semicontinuous function $f:\R^n\to\R\cup\brac{\pinfty}$, we define its proximal operator through $x\in\R^n$,
$ 
\prox_f \para{x}  :=  \argmin \{f\para{u} + \frac{1}{2}\norm{u-x}{}^2u\in\R^n\}.$ Set-valued maps are denoted by $\rightrightarrows$, for example the subgradient $\partial f \colon \RR^n \rightrightarrows \RR^n$. Additional details and notations are provided in Appendix~\ref{app:lexicon}.

\section{Implicit Differentiation with Conservative Jacobians}
\label{sec:conservativeJacobians}

\paragraph{Definitions and conservativity.} Conservative Jacobians are generalized forms of Jacobians well suited for automatic differentiation, introduced in \cite{bolte2020conservative}. Given a locally Lipschitz continuous function $F:\R^n\to \R^m$, we say that $J_F:\R^n\rightrightarrows \R^{n\times m}$ is a {\em conservative mapping} or a {\em conservative Jacobian} for $F$ if $J_F$ has a closed graph, is locally bounded, and is nonempty with 
\begin{equation}\label{eq:chainpath}
   \frac{\mathrm{d}}{\mathrm{d}t}F(\gamma(t)) = J_F(\gamma(t)) \dgamma (t) \mbox{ a.e.} 
\end{equation}
whenever $\gamma$ is an absolutely continuous curve in $\R^n$. When $m=1$, the corresponding vectors are called {\em conservative gradient fields}.  Note that when $J_F$ is conservative, so is its pointwise convexified extension $\conv J_F$.

A locally Lipschitz function is called {\em path differentiable} if it has a conservative Jacobian. 
 Recall that the {\em Clarke Jacobian} is defined as 
\begin{align*}
    \cjac F(x) = \operatorname{conv} \left\{ \lim_{k \to +\infty} \jac F(x_k) : x_k \in \diff_F, x_k \underset{k \to +\infty}{\xrightarrow[]{}} x\right\}
\end{align*}
where $\diff_F$ is the full measure set of points where $F$ is differentiable and $\jac F$ is the standard Jacobian of $F$. Path differentiability is equivalent to having a chain rule as in \eqref{eq:chainpath} for the Clarke subdifferential, see \cite{bolte2020conservative,davis2020stochastic}.

\paragraph{Examples of path differentiable functions and conservative Jacobians.}
(a) Convex functions and concave functions are path differentiable, see \cite{bolte2020conservative}. This implies that their subdifferential in the sense of convex analysis is a conservative field. \\
(b) The vast class of definable functions are path differentiable \cite{davis2020stochastic,bolte2020conservative}. As a result, the Clarke Jacobian of a Lipschitz definable mapping is a conservative Jacobian. Definable functions (see \cite{attouch2010proximal,davis2020stochastic, castera2019inertial, bolte2020conservative} for an optimization context and \cite{coste2000introduction} for a foundational work) encompass semialgebraic functions \cite{bochnak2013real}, elementary log-exp selection \cite{bolte2020conservative}, PAP \cite{lee2020correctness} (restricted to analytic functions with full domain), and many others, see \cite{van1996geometric} and references therein. This includes networks with common nonlinearities: for example analytic with full domain (e.g., square, exponential, logistic loss, hyperbolic tangent, sigmoid), relu, max pooling, sort, (see Appendix~\ref{app:definable} for more detail). \\
(c) The backpropagation can be seen as an oracle (in the optimization sense) for a conservative Jacobian. Let $P_F$ be a numerical program for a function $F$, aggregating elementary functions, for instance, relu, max pooling, affine mappings, polynomials (in general, any definable function). 
Then the backpropagation algorithm applied to $P_F$, which we denote (abusively) by $\backprop\, P_F:= \backprop_F$, outputs an element of a conservative Jacobian  \cite[Theorem 8]{bolte2020conservative} which depends on $P_F$ and can be constructed by a closure procedure \cite[definition 5]{bolte2020mathematical}.  As described in \cite{bolte2020mathematical}, due to spurious behaviors, $\backprop_F$ is not in general an element of the Clarke Jacobian of $F$.

\paragraph{The structure of conservative Jacobians.}
As established in \cite{lewis2021structure} in a semialgebraic context, the discrepancy between conservative gradients and Clarke subdifferentials is somehow negligible. Let us provide a version of that result matching our concerns.  We call conservative mappings of the null function {\em residual} or {\em residual conservative}. Such a mapping $R$ has the property that $R(x+tv)v=0$ for almost all $t$ in $\R$ and all $x,v$ in $\R^n\times \R^n$. 
The following theorem and proposition (partially) extend results from \cite{bolte2020conservative} and \cite{lewis2021structure}, their proof is given in Appendix~\ref{app:conservativeJacobians}.
\begin{theorem}[The Clarke Jacobian is a minimal conservative Jacobian]	Given a nonempty open subset $\mathcal{U}$ of $\R^n$ and $F : \mathcal{U}\subset \R^n \to \R^m$ locally Lipschitz, let $J_F$ be a convex-valued conservative Jacobian for $F$. Then for almost all $x \in \mathcal{U}, J_F(x) = \left\{\jac F \right\}$ and, for all $x \in \mathcal{U}$, $\jac^c F(x) \subset J_F(x)$. 
	\label{th:incluseionClarke}
\end{theorem}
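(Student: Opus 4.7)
I will prove the two assertions in reverse order. For (ii), the inclusion $\jac^c F(x) \subset J_F(x)$, I first establish that the classical Jacobian is a selection of $J_F$ on a full-measure subset and then invoke an outer-limit argument. For (i), the a.e.\ reduction $J_F(x) = \{\jac F(x)\}$, I compare $J_F$ with $\jac^c F$---which is itself a convex-valued conservative Jacobian for path-differentiable $F$, as recalled in the excerpt---via a residual-mapping argument.

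\textbf{Proof of (ii).} Fix a direction $v$ and consider the line $\gamma_y(t) = y + tv$. By Rademacher's theorem and Fubini, for a.e.\ $y$ the function $F \circ \gamma_y$ is absolutely continuous with derivative $\jac F(y+tv)v$ at a.e.\ $t$, while conservativity gives $\frac{d}{dt}F(\gamma_y(t)) \in J_F(\gamma_y(t))v$ a.e. Intersecting over a countable dense family $\{v_k\}$ and passing to the limit using local boundedness and outer semicontinuity of $J_F$ yields a full-measure set $A$ on which $\jac F(y) v \in J_F(y) v$ for every $v$. Applying the scalar version of this statement to each component $F_i$ (the $i$-th row of $J_F$ is itself a convex-valued conservative gradient for $F_i$) and then using convex-valuedness of $J_F$ upgrades this to $\jac F(y) \in J_F(y)$ on $A$. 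Finally, using the characterization $\jac^c F(x) = \bigcap_{\epsilon>0} \cl\conv\{\jac F(y) : y \in A \cap B(x,\epsilon)\}$, valid since $A$ has full measure, together with Carath\'eodory reduction, local boundedness, and outer semicontinuity of $J_F$ applied to bounded convex combinations, I conclude $\jac^c F(x) \subset J_F(x)$ for every $x \in \mathcal{U}$.

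\textbf{Proof of (i).} Define the Minkowski-difference set-valued map $R(y) := J_F(y) - \jac^c F(y)$. It inherits convex values, local boundedness, and closed graph from its two summands, and subtracting the two chain-rule identities along any AC curve $\gamma$ yields $R(\gamma(t))\dot\gamma(t) = \{0\}$ for a.e.\ $t$, so $R$ is a conservative mapping of the null function---i.e., residual. Invoking the residual property recorded in the excerpt, $R(x+tv) v = 0$ for a.e.\ $t$ and all $x, v \in \R^n$. A Fubini argument over $y$ combined with a countable dense family of directions then shows that, for a.e.\ $y$, every matrix $M \in R(y)$ satisfies $M v_k = 0$ for all $k$; linearity in $v$ forces $M = 0$, so $R(y) = \{0\}$ a.e. Since $R$ is a Minkowski difference, this means $J_F(y) = \jac^c F(y)$ at a.e.\ $y$, and combining with $\jac^c F(y) = \{\jac F(y)\}$ at every differentiability point yields assertion (i).

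\textbf{Main obstacle.} The delicate point in (ii) is the upgrade from ``$\jac F(y) v \in J_F(y) v$ for all $v$'' to the matrix membership ``$\jac F(y) \in J_F(y)$''. This implication is false for arbitrary convex compact sets of matrices, as the Frobenius unit ball $K = \{M : \|M\|_F \le 1\}$ shows (here $A = \Id_n$ satisfies $Av \in Kv$ for every $v$ yet $A \notin K$ when $n \ge 2$); the resolution relies on the conservative-Jacobian structure via the scalar reduction on each row. In (i), the subtlety is verifying that the Minkowski difference $R = J_F - \jac^c F$ really satisfies $R(\gamma)\dot\gamma = \{0\}$ rather than merely $0 \in R(\gamma)\dot\gamma$; this rests on the strengthened reading of the chain rule whereby $J_F(\gamma(t))\dot\gamma(t)$ is the singleton $\{\frac{d}{dt}F(\gamma(t))\}$ a.e., a standard consequence of convex-valuedness in this framework, from which the difference identity follows by direct subtraction.
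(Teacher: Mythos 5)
Your proof is essentially correct, and the two halves have different relationships to the paper's argument.

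For the inclusion $\jac^c F(x) \subset J_F(x)$, your route is in substance the same as the paper's: establish that $J_F$ coincides with the classical Jacobian on a set $A$ of full measure, then use the Warga-type representation of $\jac^c F$ as convex hull of limits of $\jac F$ along sequences in $A$, together with local boundedness, closed graph, and convexity of $J_F$. The difference is that you re-derive the a.e.\ coincidence from scratch (Rademacher, Fubini along lines, countable dense directions, and a per-row separation argument), whereas the paper simply cites a lemma from Bolte–Pauwels that delivers exactly this statement row by row. Your identification of the matrix obstacle (that $Av \in Kv$ for all $v$ does not imply $A \in K$) and its resolution by reducing to rows, where a supporting-hyperplane separation closes the gap, is the right point to isolate and you handle it correctly.

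For the a.e.\ singleton statement $J_F(x) = \{\jac F(x)\}$, you take a genuinely different route via the Minkowski-difference map $R = J_F - \jac^c F$: you show $R$ is residual (using the a.e.\ singleton-valuedness of both chain-rule products along AC curves, which is a standard lemma in the conservative-field framework rather than a consequence of convexity per se) and then Fubini over directions to force $R \equiv \{0\}$ a.e. This is a correct and rather elegant argument, very much in the spirit of Proposition~\ref{prop:conservfieldsdecomp}. But note it is also redundant: your own scalar reduction in part (ii), once applied to every row, already yields that each row of $J_F(y)$ is the singleton $\{\nabla F_i(y)\}$ a.e., which forces $J_F(y) = \{\jac F(y)\}$ directly. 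You under-claim there by recording only $\jac F(y) \in J_F(y)$; recognizing that you actually have equality would let you drop the residual detour entirely, matching the paper's economy (the paper gets (i) immediately from the cited lemma and then does the Warga argument for (ii)).

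One small caution: the claim that $J_F(\gamma(t))\dot\gamma(t)$ is a singleton a.e.\ is correct, but the attribution to convex-valuedness is misplaced; it holds for any conservative Jacobian and is proved by a Fubini-type density argument, not by convexity. This does not affect the validity of your residual argument, only its justification.
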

\begin{proposition}[Decomposition of conservative fields]\label{prop:conservfieldsdecomp}
Let $J_F$ be a conservative Jacobian for $F$, then there is a residual $R$ such that
\begin{align*}
    J_F\subset \jac^c F +R.
\end{align*}
\end{proposition}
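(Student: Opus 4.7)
The plan is to define $R$ explicitly as the Minkowski (set-valued) difference
\[
R(x) := J_F(x) - \jac^c F(x) = \bigl\{ M - N : M \in J_F(x),\ N \in \jac^c F(x) \bigr\},
\]
so that the inclusion $J_F(x) \subset \jac^c F(x) + R(x)$ holds tautologically: any $M \in J_F(x)$ can be written as $M = N + (M - N)$ for an arbitrary fixed $N \in \jac^c F(x)$. The work then reduces to showing that $R$ is residual, i.e., a conservative Jacobian for the zero function.

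First I would check the three structural conditions of a conservative mapping. Nonemptiness follows because both $J_F(x)$ and $\jac^c F(x)$ are nonempty. Local boundedness is inherited from the local boundedness of $J_F$ and $\jac^c F$. For the closed graph property, suppose $(x_n, r_n) \to (x, r)$ with $r_n = M_n - N_n$, $M_n \in J_F(x_n)$, $N_n \in \jac^c F(x_n)$; local boundedness of $J_F$ near $x$ allows extraction of a subsequence along which $M_n \to M$, hence $N_n = M_n - r_n \to M - r =: N$, and the closed graphs of $J_F$ and $\jac^c F$ force $M \in J_F(x)$, $N \in \jac^c F(x)$, so $r = M - N \in R(x)$.

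Next I would verify the chain rule $0 \in R(\gamma(t))\dgamma(t)$ a.e.\ for every absolutely continuous curve $\gamma$. Since $J_F$ is conservative for $F$, for almost every $t$ there exists $M(t) \in J_F(\gamma(t))$ with $M(t)\dgamma(t) = \tfrac{\mathrm{d}}{\mathrm{d}t}F(\gamma(t))$. The existence of the conservative Jacobian $J_F$ makes $F$ path differentiable, and by the equivalence recalled in the excerpt (path differentiability $\Leftrightarrow$ chain rule for the Clarke subdifferential, from \cite{bolte2020conservative,davis2020stochastic}), $\jac^c F$ is itself conservative, so for a.e.\ $t$ there exists $N(t) \in \jac^c F(\gamma(t))$ with $N(t)\dgamma(t) = \tfrac{\mathrm{d}}{\mathrm{d}t}F(\gamma(t))$. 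On the full-measure intersection, the element $r(t) := M(t) - N(t) \in R(\gamma(t))$ satisfies $r(t)\dgamma(t) = 0$, which is exactly the conservativity condition for the zero function.

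The main obstacle is recognizing that path differentiability of $F$, and hence conservativity of $\jac^c F$, is available for free from the existence of any conservative Jacobian $J_F$; this is what turns the formal subtraction into a genuinely conservative residual rather than a mere bookkeeping trick. The rest of the argument is a routine verification of closed graph and local boundedness for the Minkowski difference.
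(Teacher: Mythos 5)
Your approach is exactly the paper's: set $R := J_F - \jac^c F$ (pointwise Minkowski difference), note the inclusion $J_F \subset \jac^c F + R$ is tautological, and argue that $R$ is residual because $J_F$ and $\jac^c F$ are both conservative for the same $F$. The paper states this in one line; your write-up usefully spells out the structural checks (nonemptiness, local boundedness via that of $J_F$ and $\jac^c F$, closed graph via a standard bounded-subsequence argument), and you correctly isolate the one non-trivial hinge: that the existence of \emph{any} conservative Jacobian already makes $F$ path differentiable, so $\jac^c F$ itself satisfies the chain rule and the subtraction is meaningful.

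The one point to sharpen is the chain-rule step. You exhibit, for a.e.\ $t$, a \emph{single} element $r(t) = M(t) - N(t) \in R(\gamma(t))$ with $r(t)\dot\gamma(t) = 0$ and conclude $0 \in R(\gamma(t))\dot\gamma(t)$. But being conservative for the zero map is a ``for all'' statement: one needs $R(\gamma(t))\dot\gamma(t) = \{0\}$ a.e., equivalently the loop-integral condition of \cite{bolte2020conservative} must hold, and the paper itself uses this when it asserts $R(x+tv)v = 0$ (as a set equality). Mere existence of one annihilating element is not enough: for $F \equiv 0$ on $\R$, the map $x \rightrightarrows \{0,1\}$ satisfies $0 \in \{0,\dot\gamma(t)\}$ for every curve yet is not conservative for the zero function. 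The fix is immediate from material you already invoke: since $J_F$ and $\jac^c F$ are conservative for $F$, the chain-rule equality $M\dot\gamma(t) = \tfrac{\mathrm{d}}{\mathrm{d}t}F(\gamma(t)) = N\dot\gamma(t)$ holds a.e.\ for \emph{every} $M \in J_F(\gamma(t))$ and \emph{every} $N \in \jac^c F(\gamma(t))$, so $(M - N)\dot\gamma(t) = 0$ for every element of $R(\gamma(t))$. With that upgrade your argument is complete and coincides with the paper's.
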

Note that the above may not hold with equality. Consider $F(x)=|x|$ and $J_F(0)=[-1,1]\cup [2,3]$, $J_F(x)=\sign\, (x)$ otherwise. One cannot write $J_F=\jac^c F+R$ with a residual operator $R$.

\paragraph{Formal subdifferentiation in a nonsmooth setting.} Propagating derivatives within a nonsmooth function finds  its justification in the following:
\begin{proposition}[Stability by composition, \cite{bolte2020conservative}]
\label{prop:jacobianComposition}
Let $F:\R^n\to\R^m$ and $G:\R^m\to \R^l$ be two locally path differentiable functions having respective conservative Jacobians $J_F$ and $J_G$. Then $F\circ G$ is path differentiable and the point-to-set matrix-valued $x\rightrightarrows J_F(G(x)) J_G(x)$ is conservative.
\end{proposition}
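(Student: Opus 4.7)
The plan is to verify directly that the set-valued map $H(x) := J_F(G(x)) J_G(x)$, interpreted as the set of all products $\{AB : A \in J_F(G(x)),\, B \in J_G(x)\}$, satisfies the four requirements for being a conservative Jacobian of $F \circ G$: nonempty values, closed graph, local boundedness, and the chain rule along absolutely continuous curves. Path differentiability of $F \circ G$ is then an immediate byproduct.

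For the pointwise and topological regularity, nonemptiness is inherited from $J_F$ and $J_G$. Local boundedness follows by combining the continuity of $G$ (a consequence of local Lipschitzness) with the local boundedness of both $J_F$ and $J_G$: on any compact neighborhood $\mcU$ of $x$, the image $G(\mcU)$ is compact, and hence $J_F$ is bounded on $G(\mcU)$ while $J_G$ is bounded on $\mcU$, so matrix products are uniformly bounded on $\mcU$. For the closed graph, I take sequences $x_k \to x$ and $M_k = A_k B_k \to M$ with $A_k \in J_F(G(x_k))$ and $B_k \in J_G(x_k)$; local boundedness yields a convergent subsequence $(A_{k_j}, B_{k_j}) \to (A, B)$, and the closed graph of $J_F$ (applied with $G(x_{k_j}) \to G(x)$ by continuity) gives $A \in J_F(G(x))$, while the closed graph of $J_G$ gives $B \in J_G(x)$, so $M = AB \in H(x)$.

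For the chain rule, let $\gamma$ be an absolutely continuous curve in the source space. The first observation is that $G \circ \gamma$ is itself absolutely continuous, because $\gamma$ stays in a compact set on which $G$ is Lipschitz. Applying the conservativity property of $J_G$ to $\gamma$ provides, for a.e.\ $t$, an element $B(t) \in J_G(\gamma(t))$ with $\tfrac{\dd}{\dd t}(G \circ \gamma)(t) = B(t) \dgamma(t)$. Applying the conservativity property of $J_F$ to the absolutely continuous curve $G \circ \gamma$ provides, for a.e.\ $t$, an element $A(t) \in J_F(G(\gamma(t)))$ with $\tfrac{\dd}{\dd t}(F \circ G \circ \gamma)(t) = A(t)\, \tfrac{\dd}{\dd t}(G \circ \gamma)(t)$. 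Intersecting the two full-measure sets (still of full measure) and composing yields
\[
    \tfrac{\dd}{\dd t}(F \circ G \circ \gamma)(t) = A(t) B(t) \dgamma(t) \in J_F(G(\gamma(t))) J_G(\gamma(t)) \dgamma(t) = H(\gamma(t)) \dgamma(t) \quad \text{a.e.},
\]
which is exactly the conservativity condition for $H$.

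The main obstacle is quite mild and amounts to bookkeeping: justifying that one may compose a pointwise a.e.\ selection $B(t)$ from $J_G(\gamma(t))$ with a pointwise a.e.\ selection $A(t)$ from $J_F(G(\gamma(t)))$, which only requires that the two exceptional null sets union to a null set, and that $G \circ \gamma$ is genuinely absolutely continuous so that the second application of conservativity is legitimate. No measurable selection theorem is needed since the inclusion is pointwise. The argument mirrors the classical chain rule, lifted to a set-valued context by the stability properties already built into the definition of a conservative Jacobian.
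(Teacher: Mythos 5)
Your proof is correct. The paper does not actually give its own proof of this proposition---it is cited directly from \cite{bolte2020conservative}---but the argument you give (verify nonemptiness, local boundedness, and closed graph of the pointwise matrix product directly, then establish the chain rule by applying conservativity first to $J_G$ along $\gamma$ and then to $J_F$ along the absolutely continuous curve $G\circ\gamma$, intersecting the two full-measure sets) is the standard one and is precisely the technique the paper itself uses in the appendix to prove Theorem~\ref{th:implicitMainTheorem}.
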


\paragraph{A conservative Implicit Function Theorem.} There is already a long tradition of nonsmooth implicit function theorems, e.g.,  \cite{clarke1983optimization,AUBIN1987,robinson1991implicit,dontchev2009implicit}. What makes the following theorem useful is that it comes with a qualification-free calculus. The proofs are given in  Appendix~\ref{app:conservativeJacobians}.
\begin{theorem}[Implicit differentiation]\label{th:implicitMainTheorem}
Let $F:\R^n\times\R^m\to \R^m$ be path differentiable on $\mcU\times\mcV\subset\R^n\times \R^m$ an open set and $G:\mcU\to\mcV$ a locally Lipschitz function such that, for each $x\in \mcU$,
\nnewq{\label{eq:Gsolves}
F(x,G(x))=0.
}
Furthermore, assume that for each $x\in \mcU$, for each $[A\ B]\in \cmap_F(x,G(x))$, the matrix $B$ is invertible where $\cmap_F$ is a conservative Jacobian for $F$. Then, $G:\mcU\to\mcV$ is path differentiable with conservative Jacobian given, for each $x\in\mcU$, by
\newq{
\cmap_{G}\colon x \rightrightarrows  \left\{-B^{-1}A : [A\ B]\in \cmap_F(x, G(x))\right\}.
}
\end{theorem}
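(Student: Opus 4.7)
The plan is to verify directly the four defining properties of a conservative Jacobian for the candidate set-valued map
\[
J_G(x) \;=\; \bigl\{-B^{-1}A : [A\ B]\in J_F(x,G(x))\bigr\},
\]
namely nonemptiness, local boundedness, closed graph, and the path-chain rule. Nonemptiness is immediate from the nonemptiness of $J_F$ together with the pointwise invertibility hypothesis.

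The chain rule is the heart of the argument, but once set up it is essentially automatic. Given an absolutely continuous curve $\gamma\colon I\to\mcU$, the curve $t\mapsto (\gamma(t), G(\gamma(t)))$ is absolutely continuous because $G$ is locally Lipschitz. Since $F(x,G(x))\equiv 0$, conservativity of $J_F$ applied along this curve yields, for almost every $t$, an element $[A(t)\ B(t)]\in J_F(\gamma(t),G(\gamma(t)))$ such that
\[
0 \;=\; \frac{\mathrm{d}}{\mathrm{d}t} F(\gamma(t),G(\gamma(t))) \;=\; A(t)\dot\gamma(t) + B(t)\tfrac{\mathrm{d}}{\mathrm{d}t} G(\gamma(t)).
\]
The invertibility of $B(t)$ then gives $\tfrac{\mathrm{d}}{\mathrm{d}t} G(\gamma(t)) = -B(t)^{-1}A(t)\dot\gamma(t) \in J_G(\gamma(t))\dot\gamma(t)$, which is exactly the required chain rule \eqref{eq:chainpath}.

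The step I expect to be the main obstacle is establishing local boundedness and the closed-graph property of $J_G$. Both reduce to showing that, around any fixed $x_0\in\mcU$, the inverse $B^{-1}$ stays uniformly bounded as $[A\ B]$ ranges over $J_F(x,G(x))$ for $x$ in a small neighborhood of $x_0$. I would argue this by contradiction and compactness: if not, there exist $x_k\to x_0$ and $[A_k\ B_k]\in J_F(x_k,G(x_k))$ with either $\|B_k^{-1}\|\to\infty$ or $\|{-B_k^{-1}A_k}\|\to\infty$. Local boundedness of $J_F$ combined with continuity of $x\mapsto (x,G(x))$ (from $G$ locally Lipschitz) yields a convergent subsequence $[A_k\ B_k]\to[A\ B]$; the closed graph of $J_F$ places $[A\ B]\in J_F(x_0,G(x_0))$; so by hypothesis $B$ is invertible, forcing $B_k^{-1}\to B^{-1}$ by continuity of matrix inversion at invertible points, contradicting the blow-up.

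The same compactness scheme yields the closed graph of $J_G$: given $(x_k,M_k)\to (x,M)$ with $M_k=-B_k^{-1}A_k$, the argument above produces a limit $[A\ B]\in J_F(x,G(x))$ with $B$ invertible, whence $M_k\to -B^{-1}A$; by uniqueness of limits $M=-B^{-1}A\in J_G(x)$. Putting these pieces together establishes that $J_G$ is a conservative Jacobian for $G$, which by the characterization stated earlier is equivalent to $G$ being path differentiable, completing the proof.
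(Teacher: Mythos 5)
Your proof is correct and follows essentially the same route as the paper's: differentiate the identity $F(\gamma(t),G(\gamma(t)))\equiv 0$ along absolutely continuous curves, solve for $\tfrac{\mathrm{d}}{\mathrm{d}t}G(\gamma(t))$ using invertibility of $B$, and then check the remaining structural properties of $J_G$. The only difference is that the paper asserts local boundedness and closed graph of $J_G$ in a single sentence ("since $J_F$ is a conservative Jacobian and $B$ is invertible"), whereas you supply the explicit compactness argument — extracting a convergent subsequence from local boundedness of $J_F$, passing to the limit via closed graph, invoking the invertibility hypothesis at the limit, and using continuity of matrix inversion on the open set of invertible matrices. That fills in a detail the paper leaves implicit; the mathematical content is the same.
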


\begin{corollary}[Path differentiable implicit function theorem]\label{cor:implicitFun}
Let $F:\R^n\times\R^m\to \R^m$ be path differentiable with conservative Jacobian $J_F$. 
Let $(\hat{x},\hat{y})\in\R^n\times \R^m$ be such that $F(\hat{x},\hat{y})=0$. Assume that $\cmap_F(\hat{x},\hat{y})$ is convex and that, for each $[A\ B]\in \cmap_F(\hat{x},\hat{y})$, the matrix $B$ is invertible. Then, there exists an open neighborhood $\mcU\times\mcV\subset\R^n\times\R^m$ of $\para{\hat{x},\hat{y}}$ and a path differentiable function $G:\mcU\to\mcV$ such that the conclusion of Theorem~\ref{th:implicitMainTheorem} holds.
\end{corollary}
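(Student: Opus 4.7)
The plan is to deduce Corollary~\ref{cor:implicitFun} from Theorem~\ref{th:implicitMainTheorem} by two intermediate steps: first produce a locally Lipschitz solution $G$ on some neighborhood via a classical nonsmooth implicit function theorem, and then shrink that neighborhood so that the pointwise invertibility hypothesis at $(\hat{x},\hat{y})$ propagates along the graph of $G$. Once both are in place, Theorem~\ref{th:implicitMainTheorem} applies on the smaller neighborhood and yields both path differentiability of $G$ and the desired formula for its conservative Jacobian.

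\textbf{Existence of a locally Lipschitz $G$.} Without loss of generality we may replace $J_F$ by its pointwise convex hull $\conv J_F$, which remains conservative by the remark following \eqref{eq:chainpath} and agrees with $J_F$ at $(\hat{x},\hat{y})$ thanks to the convexity assumption there; so the invertibility hypothesis is preserved. By Theorem~\ref{th:incluseionClarke} applied to this convex-valued conservative Jacobian, $\jac^c F(\hat{x},\hat{y})\subset J_F(\hat{x},\hat{y})$, and therefore every $[A\ B]\in\jac^c F(\hat{x},\hat{y})$ has $B$ invertible. Since $F$ is locally Lipschitz (as a path differentiable function), Clarke's classical nonsmooth implicit function theorem \cite{clarke1983optimization} then produces neighborhoods $\mcU_0$ of $\hat{x}$ and $\mcV_0$ of $\hat{y}$ and a locally Lipschitz $G\colon\mcU_0\to\mcV_0$ with $F(x,G(x))=0$ on $\mcU_0$ and $G(\hat{x})=\hat{y}$.

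\textbf{Propagating invertibility to a neighborhood.} I claim that, after possibly shrinking $\mcU_0$ to some open $\mcU\ni\hat{x}$, for every $x\in\mcU$ and every $[A\ B]\in J_F(x,G(x))$, the matrix $B$ is invertible. Suppose by contradiction that this fails on any neighborhood: then there is a sequence $x_k\to\hat{x}$ and matrices $[A_k\ B_k]\in J_F(x_k,G(x_k))$ with $B_k$ singular. Continuity of $G$ gives $G(x_k)\to\hat{y}$, and local boundedness of $J_F$ on a neighborhood of $(\hat{x},\hat{y})$ forces $\{[A_k\ B_k]\}$ to be bounded; extract a convergent subsequence with limit $[A\ B]$. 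Continuity of the determinant makes $B$ singular, and the closed graph property of $J_F$ yields $[A\ B]\in J_F(\hat{x},\hat{y})$, contradicting the hypothesis.

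\textbf{Conclusion.} On the open set $\mcU\times\mcV$ (with $\mcV$ a small enough neighborhood of $\hat{y}$ containing $G(\mcU)$), the assumptions of Theorem~\ref{th:implicitMainTheorem} hold: $F$ is path differentiable with conservative Jacobian $J_F$, $G$ is locally Lipschitz and solves $F(x,G(x))=0$, and the $B$-block of every element of $J_F(x,G(x))$ is invertible. Applying Theorem~\ref{th:implicitMainTheorem} delivers the stated conservative Jacobian of $G$ and its path differentiability. The technically most delicate point is the propagation step, but it is in fact routine once one exploits the closed-graph/locally-bounded structure built into the definition of conservative mappings — the whole difficulty of the result is concentrated in Theorem~\ref{th:implicitMainTheorem}, which this corollary merely localises.
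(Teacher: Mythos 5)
Your proof is correct and follows essentially the same route as the paper: apply Theorem~\ref{th:incluseionClarke} to obtain the Clarke-Jacobian inclusion at $(\hat{x},\hat{y})$, invoke Clarke's implicit function theorem to produce a locally Lipschitz $G$, shrink the neighborhood so that the $B$-blocks stay invertible (via closed graph, local boundedness, and continuity of the determinant), and then apply Theorem~\ref{th:implicitMainTheorem}. In fact you make explicit a small gap the paper glosses over: Theorem~\ref{th:incluseionClarke} requires a convex-valued conservative Jacobian, while $J_F$ is assumed convex only at the single point $(\hat{x},\hat{y})$, and your passage to $\conv J_F$ (which is still conservative and agrees with $J_F$ at that point) is precisely the patch needed.
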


\begin{corollary}[Path differentiable inverse function theorem]\label{cor:inverse function}
Let $\mcU$ and $\mcV$ be open neighborhoods of $0$ in $\R^n$ and $\Phi:\mcU \to \mcV$ path differentiable  with $\Phi(0)=0$. Assume that $\Phi$ has a conservative Jacobian $\cmap_\Phi$ such that $\cmap_\Phi(0)$ contains only invertible matrices. Then, locally, $\Phi$ has a path differentiable inverse $\Psi$ with a conservative Jacobian given by
$$\cmap_\Psi(y)=\left\{A^{-1}: A\in J_{\Phi}(\Psi(y))\right\}.$$
\end{corollary}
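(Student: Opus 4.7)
The natural approach is to derive Corollary~\ref{cor:inverse function} from Theorem~\ref{th:implicitMainTheorem} applied to the auxiliary map
\begin{align*}
F:\mathcal{U}\times\mathcal{V}\to\mathbb{R}^n,\qquad F(y,x):=\Phi(x)-y.
\end{align*}
A direct chain-rule computation along absolutely continuous curves shows that the set-valued map $J_F(y,x):=\{[-\mathrm{Id}_n,\ B]:B\in J_\Phi(x)\}$ is a conservative Jacobian for $F$: if $t\mapsto(\gamma_1(t),\gamma_2(t))$ is absolutely continuous, conservativity of $J_\Phi$ gives $\tfrac{\dd}{\dd t}F(\gamma_1,\gamma_2)=\tfrac{\dd}{\dd t}\Phi(\gamma_2)-\dot\gamma_1\in J_\Phi(\gamma_2)\dot\gamma_2-\dot\gamma_1$ almost everywhere, which matches the prescribed form. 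At the base point, $F(0,0)=0$ and every $[A,B]\in J_F(0,0)$ has $B\in J_\Phi(0)$ invertible by hypothesis.

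Before invoking Theorem~\ref{th:implicitMainTheorem}, I must first produce a locally Lipschitz $\Psi$ satisfying $\Phi(\Psi(y))=y$ on some neighborhood of $0$. This is where the main obstacle lies: one cannot simply apply Clarke's classical inverse function theorem to the convex-valued surrogate $\conv J_\Phi$, since the convex hull of a set of invertible matrices need not consist of invertible matrices. The right object is the Bouligand limiting Jacobian $\partial_B\Phi(0):=\{\lim_k\jac\Phi(y_k):y_k\to0,\ y_k\in\diff_\Phi\}$. By Theorem~\ref{th:incluseionClarke} applied to the convex-valued conservative Jacobian $\conv J_\Phi$, one has $\conv J_\Phi(y)=\{\jac\Phi(y)\}$ at almost every differentiability point, so $\jac\Phi(y_k)\in J_\Phi(y_k)$ along such sequences; the closed-graph and local-boundedness properties of $J_\Phi$ then force $\partial_B\Phi(0)\subset J_\Phi(0)$, so every element of $\partial_B\Phi(0)$ is invertible. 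A nonsmooth inverse function theorem of Clarke--Pang--Qi type (requiring only invertibility of the Bouligand limiting Jacobian) then yields the desired locally Lipschitz inverse $\Psi$ of $\Phi$ near $0$.

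With $\Psi$ available, the closed-graph and local-boundedness properties of $J_\Phi$ further ensure that $J_\Phi(\Psi(y))$ consists only of invertible matrices for $y$ in a neighborhood of $0$ (using continuity of $\Psi$ at $0$), so the $B$-invertibility hypothesis of Theorem~\ref{th:implicitMainTheorem} holds along the graph of $\Psi$. Applying that theorem with $G=\Psi$ gives path differentiability of $\Psi$ together with the conservative Jacobian
\begin{align*}
J_\Psi(y)=\left\{-B^{-1}(-\mathrm{Id}_n):B\in J_\Phi(\Psi(y))\right\}=\left\{B^{-1}:B\in J_\Phi(\Psi(y))\right\},
\end{align*}
which is the announced formula. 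The delicate step is the existence of $\Psi$ under a hypothesis stated on the (possibly non-convex) $J_\Phi$ rather than on $\cjac\Phi$; the remainder is a direct translation of the implicit function theorem to the inverse-function setting.
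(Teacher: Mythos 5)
Your overall strategy — reduce to the implicit function theorem via $F(y,x)=\Phi(x)-y$ — is the same as the paper's (the paper uses $F(x,y)=x-\Phi(y)$ and simply invokes Corollary~\ref{cor:implicitFun}). The difference is in how you establish the local existence of a Lipschitz inverse $\Psi$, and there the argument has a genuine gap.

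You claim that $\partial_B\Phi(0)\subset J_\Phi(0)$, and deduce it as follows: at almost every point $y$ one has $\conv J_\Phi(y)=\{\jac\Phi(y)\}$ (Theorem~\ref{th:incluseionClarke}), so $\jac\Phi(y)\in J_\Phi(y)$ for $y$ in a full-measure set $S$, and the closed graph of $J_\Phi$ pushes these limits into $J_\Phi(0)$. That argument shows only that $\{\lim_k\jac\Phi(y_k): y_k\in S,\ y_k\to0\}\subset J_\Phi(0)$. The Bouligand set $\partial_B\Phi(0)$ is defined using \emph{all} sequences of differentiability points, not just those in $S$. For $y\in\diff_\Phi\setminus S$ you only know $\jac\Phi(y)\in\cjac\Phi(y)\subset\conv J_\Phi(y)$, which does not put $\jac\Phi(y)$ inside $J_\Phi(y)$. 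The cited Warga-type representation (used in the proof of Theorem~\ref{th:incluseionClarke}) equates \emph{convex hulls} of limiting Jacobians over different full-measure sets; it does not equate the limiting sets themselves, and in general $\partial_B\Phi(0)$ computed over $\diff_\Phi$ may be strictly larger than the one computed over $S$. So the invertibility of the full $\partial_B\Phi(0)$, needed for a Kummer/Clarke--Pang--Qi type inverse theorem, is not established, and the existence step is unproved.

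The concern you raise about convexity is real — $\conv J_\Phi(0)$ can contain singular matrices even when $J_\Phi(0)$ does not — but the paper handles it differently: the intended reading of the hypothesis is that $J_\Phi(0)$ is convex (so that $F(x,y)=x-\Phi(y)$ satisfies the assumptions of Corollary~\ref{cor:implicitFun}, which explicitly requires $J_F(\hat x,\hat y)$ convex), in which case $\cjac\Phi(0)\subset J_\Phi(0)$ by Theorem~\ref{th:incluseionClarke}, Clarke's classical inverse function theorem applies directly, and the Bouligand detour is unnecessary. Your final step — using the closed graph and local boundedness of $J_\Phi$ together with continuity of $\Psi$ to propagate invertibility to a neighborhood, then invoking Theorem~\ref{th:implicitMainTheorem} for the formula $J_\Psi(y)=\{B^{-1}: B\in J_\Phi(\Psi(y))\}$ — is correct and matches the paper. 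To repair the argument, either add the convexity of $J_\Phi(0)$ explicitly and follow the paper's route through Corollary~\ref{cor:implicitFun}, or give a correct justification that invertibility of $J_\Phi(0)$ alone suffices; the Bouligand inclusion as you stated it does not do this.
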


\begin{remark}{\rm (a) {\bf(On the necessity of conservativity)} Example \ref{ex:counterExClarkeInverse} in Appendix~\ref{app:conservativeJacobians} shows that one cannot hope for the formulas in Corollaries \ref{cor:implicitFun} \& \ref{cor:inverse function} to provide Clarke Jacobians in general, even if the input(s)  are Clarke Jacobians themselves.\\
(b) {(\bf Lipschitz definable implicit and inverse function theorems)}  See Theorem \ref{th:invdef} and \ref{th:inpdef} in the appendix}
\end{remark}

\section{Nonsmooth implicit differentiation in Machine Learning}
\label{sec:examplesML}
Detailed proof arguments for all considered models are given in Appendix~\ref{app:examplesML}.

\paragraph{Monotone deep equilibrium networks.} Deep Equilibrium Networks (DEQs) \cite{bai2019equilibrium} are specific neural network architectures including layers whose input-output relation is implicitly defined through a fixed point equation of the form
\begin{align}
    z = f(z,x)
    \label{eq:implicitLayer}
\end{align}
where $x \in \RR^p$ is a given input and $z \in \RR^m$ is the corresponding output. We may consider that the variable $x$ represents both the  input layer and layer parameters. Assuming that, for each $x \in \RR^p$, there is a unique $z \in \RR^m$ satisfying the relation \eqref{eq:implicitLayer}, this defines an input-output relation $z \colon \RR^p \to \RR^m$. Furthermore, if $f$ is path differentiable with convex-valued conservative Jacobian $\cmap_f \colon \RR^m \times \RR^p \rightrightarrows \RR^{m \times (m+p)}$ whose projection on the first $m$ columns are all invertible, then the function $z$ itself admits a conservative Jacobian which can be computed from Theorem~\ref{th:implicitMainTheorem}.

We now focus on monotone operator implicit layers \cite{winston2020monotone} for which assumptions are easily stated. Our method applies to other similar architectures, e.g., DEQs \cite{bai2019equilibrium} or implicit graph neural networks \cite{gu2020implicit}. 
Let $\sigma \colon \RR^m \to \RR^m$ be the proximal operator of a convex function and assume $\sigma$ is path differentiable with conservative Jacobian $\cmap_\sigma \colon \RR^m \rightrightarrows \RR^{m \times m}$, assumed to be convex-valued. This encompasses the majority of activation functions used in practice \cite{combettes2020deep}.
Let  $W \in \RR^{m \times m}$ be a matrix such that $W + W^T \succeq 2\theta I$ with $\theta>0$. Under these  assumptions the implicit equation
    \begin{align}\label{eq:monotonelayer}
        z = \sigma(Wz + b)
    \end{align}
has a unique output $z(W,b)$ \cite[Theorem 2]{winston2020monotone}. The transformation $(W,b)\mapsto z(W,b)$ is  a {\em monotone implicit layer.}

The set-valued mapping obtained from Theorem~\ref{th:implicitMainTheorem} provides a  conservative Jacobian for $(W,z) \mapsto z(W,z)$. A similar expression was described in \cite[Theorem 2]{winston2020monotone}, without using conservativity and using the Clarke Jacobian formally as a classical Jacobian. The proposition below provides a full justification of this heuristic and ensures convergence of algorithmic differentiation based training.
\begin{proposition}[Path differentiation through monotone layers]    \label{prop:DEQ}
    Assume that $\cmap_\sigma$ is convex-valued and that, for all $J \in \cmap_\sigma(Wz(W,b)+b)$, the matrix $(\Id_m - JW)$ is invertible.  
Consider a loss-like function $\ell \colon \RR^m \to \RR$ with conservative gradient $D_\ell \colon \RR^m \rightrightarrows \RR^m$, then  $g:(W,z) \mapsto \ell(z(W,b))$ is path differentiable and has a conservative gradient $D_g$ defined through
    \begin{align*}
        D_g \colon (W,b) \rightrightarrows \left\{ J^T(\Id_m - JW)^{-T}v z^T,  J^T(\Id_m - JW)^{-T}v): \,\, J \in J_\sigma(Wz + b),\, v \in D_\ell(z)  \right\}.
    \end{align*}
\end{proposition}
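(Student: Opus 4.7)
The strategy is to cast the monotone layer as an implicit equation and apply Theorem~\ref{th:implicitMainTheorem} followed by the composition rule of Proposition~\ref{prop:jacobianComposition}. Concretely, define
\[
F\colon \RR^{m\times m}\times \RR^m \times \RR^m \to \RR^m,\qquad F(W,b,z) := z - \sigma(Wz+b),
\]
so that $F(W,b,z(W,b)) = 0$ by definition of $z(W,b)$. The affine map $(W,b,z)\mapsto Wz+b$ is smooth with classical Jacobian; composing its Jacobian with the convex-valued conservative Jacobian $\cmap_\sigma$ via Proposition~\ref{prop:jacobianComposition} (and subtracting from the identity in the $z$-block) produces a convex-valued conservative Jacobian for $F$ of the block form $[A_W\ A_b\ B]$, where for any $J\in \cmap_\sigma(Wz+b)$ the $z$-block is $B = \Id_m - JW$, the $b$-block is $A_b = -J$, and the $W$-block $A_W$ acts on a direction $H\in\RR^{m\times m}$ by $A_W\cdot H = -JHz$.

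The invertibility hypothesis on $\Id_m - JW$ is exactly the qualification condition of Theorem~\ref{th:implicitMainTheorem}, which therefore applies and yields a conservative Jacobian $\cmap_z$ for $(W,b)\mapsto z(W,b)$ whose elements are $-B^{-1}[A_W\ A_b]$ with $J$ ranging over $\cmap_\sigma(Wz+b)$. Then Proposition~\ref{prop:jacobianComposition} applied to $\ell\circ z$ gives a conservative gradient
\[
D_g(W,b) = \bigl\{\,-[A_W\ A_b]^T B^{-T} v \,:\, J\in \cmap_\sigma(Wz+b),\ v\in D_\ell(z)\,\bigr\}.
\]

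The remaining step is a direct identification of the $W$- and $b$-components of this object. Writing $u := (\Id_m - JW)^{-T}v$, one has $-A_b^T u = J^T u$ for the $b$-component, and for the $W$-component the adjoint relation $\langle a, Hz\rangle = \langle a z^T, H\rangle$ (Frobenius inner product) yields $-A_W^T u = (J^T u)\, z^T$. Substituting $u = (\Id_m - JW)^{-T}v$ gives exactly the expression claimed in the proposition.

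\textbf{Expected obstacle.} The only genuinely delicate point is the bookkeeping for the matrix-valued parameter $W$: one must identify $\RR^{m\times m}$ with $\RR^{m^2}$, take care that the $W$-block $A_W$ is a linear map from this space into $\RR^m$, and verify via the Frobenius duality that its transpose produces the rank-one structure $J^T u\, z^T$. Everything else, including path differentiability of the composition and checking that the constructed Jacobian of $F$ is genuinely conservative, follows mechanically from Proposition~\ref{prop:jacobianComposition} and the convex-valuedness of $\cmap_\sigma$ combined with the fact that smooth Jacobians are conservative.
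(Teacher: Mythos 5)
Your proposal is correct and follows essentially the same route as the paper: define $F(W,b,z)=z-\sigma(Wz+b)$, build a conservative Jacobian for $F$ by composing $\jac\mathcal{B}$ with $\cmap_\sigma$ via Proposition~\ref{prop:jacobianComposition}, invoke Theorem~\ref{th:implicitMainTheorem} under the invertibility hypothesis, compose with $D_\ell$, and read off the blocks. The only cosmetic difference is that the paper vectorizes $W$ row-by-row with explicit matrices $Z_i$ whereas you handle the $W$-block via the Frobenius adjoint $\langle a,Hz\rangle=\langle az^T,H\rangle$, which yields the rank-one term $J^T(\Id_m-JW)^{-T}v\,z^T$ by the same calculation.
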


\begin{remark}
{\rm    Convexity and invertibility assumptions are satisfied when $\cmap_\sigma$ is the Clarke Jacobian \cite{winston2020monotone}.
}
\end{remark}

\paragraph{Optimization layers: the conic program case.} Optimization layers in deep learning may take many forms; we consider here those based on conic programming \cite{busseti2019solution,agrawal2019differentiating,Agrawal2019differentiable,amos2017optnet}. We follow  \cite{agrawal2019differentiating}, simplifying the analysis by ignoring infeasability certificates, which correspond to the absence of a primal-dual solution \cite{busseti2019solution}, in line with the implementation described in \cite[Appendix B]{Agrawal2019differentiable}. Consider a conic problem (P) and its dual (D):
\begin{center}
\begin{tabular}{p{6.5cm}p{6.5cm}}
	{\begin{equation*}
			\begin{array}{lll}
				\text{(P)}
				&\inf  &c^T x\\
				&\text{subject to} &  Ax+s=b\\
				&  &s\in  \mathcal{K}
			\end{array}
	\end{equation*}}
	&
	{ \begin{equation}
			\label{eq:primalDual}
			\begin{array}{lll}
				\text{(D)}&\inf& b^T y\\
				&\text{subject to}& A^T y+c=0\\
				&&y\in  \mathcal{K}^*,
			\end{array}
	\end{equation}}
\end{tabular}
\end{center}
with primal variable $x\in \RR^n$, dual variable $y\in \RR^m$, and  primal slack variable $s\in \RR^m$. The set $\mathcal{K}\subset \RR^m$ is a nonempty closed convex cone and $\mathcal{K}^*\subset \RR^m$ is its dual cone. The problem parameters are the matrix $A \in \RR^{m \times n}$ and the vectors $b\in \RR^m $ and $c\in \RR^n$; the cone $ \mathcal{K}$ is fixed. Under the assumption  that there is a unique primal-dual solution $(x,y,s)$, we study the path differentiability of the solution mapping  as a function of its parameters:
$$ (A,b,c)\mapsto \sol(A,b,c)=(x,y,s).$$

For this, let us interpret the solution mapping as a composition mapping involving equation-like implicit formulations. Set $N=n+m$, given $A,b,c \in \RR^{m\times n}\times \RR^m\times \RR^n$, define 
\begin{align*}
    Q(A,b,c) = 
        \begin{bmatrix}
            0 & A^T \\
            -A & 0 
        \end{bmatrix}
    \in \RR^{N\times N} \qquad V(b,c) = 
        \begin{bmatrix}
            c  \\
            b 
        \end{bmatrix}  \in \RR^{N}.
\end{align*}
Consider a vector $z =(u,v) \in \RR^n \times \RR^m$, denote by $\proj$ the projection onto $\RR^n \times \mathcal{K}^*$ and define the {\em residual map} $ \mathcal{N}: \RR^N \times \RR^{m\times n}\times \RR^m\times \RR^n \to \RR^N$ as 
\begin{align*}
    \label{eq:residualMap}
    \mathcal{N}(z,A,b,c) =(Q(A,b,c)-\Id_N)\proj z +  V(b,c) + z.
\end{align*}
 The mapping $\mathcal N$ is a synthetic form of optimality measure for 
(P) and (D), capturing KKT conditions. To simplify the presentation, we ignore the extreme cases of infeasibility and unboundedness which correspond to an absence of solution in \cite{busseti2019solution}. 

Define the function $\phi : \RR^{N} \to \RR^n\times \RR^m\times \RR^m$ through 
$\displaystyle \phi(u,v) := (u, P_\mathcal{K^*}(v), P_{\mathcal{K}^*}(v)-v)$. As shown in Appendix~\ref{app:cone}, $\phi(u,v)$ provides a primal-dual KKT solution of problems (P) and (D) if and only if $\mathcal{N}(z,A,b,c) = 0$.
When we assume that, for fixed $A,b,$ and $c$, there is a unique $z \in \RR^N$ such that $\mathcal{N}(z,A,b,c) = 0$, we have an implicitly defined a function $ z=\nu(A,b,c)$, such that
\begin{equation}\label{eq:solconic}
\sol(A,b,c)=\left[\phi \circ \nu \right](A,b,c).
\end{equation}

The following result extends the discussion in \cite{busseti2019solution,agrawal2019differentiating}, limited to situations where $\Pi$ is differentiable at the proposed solution $z$, to a fully nonsmooth setting; its proof is postponed to  Appendix~\ref{app:cone}.

\begin{proposition}[Path differentiation through cone programming layers]
\label{prop:pathdiffconeprog}
Assume that $P_\mathcal{K^*}$, $\mathcal{N}$ are path differentiable,  denote respectively by $\cmap_{P_\mathcal{K^*}}$, $\cmap_\mathcal{N}$ corresponding convex-valued conservative Jacobians. Assume that, for all $A,b,c \in \RR^{m\times n}\times \RR^m\times \RR^n$, $z = \nu(A,b,c) \in \RR^n \times \RR^m$ is the unique solution to $\mathcal{N}(z,A,b,c) = 0$ and that all matrices formed from the $N$ first  columns of $\cmap_\mathcal{N}(z, A, b, c)$ are invertible. 
Then, $\phi$, $\nu$, and $\sol$ are path differentiable functions with conservative Jacobians:
    \begin{align*}
&     \cmap_{\nu}(A,b,c) :=  \left\{-U^{-1}V: [U\ V]\in \cmap_{\mathcal{N}}(\nu(A,b,c),A,b,c)\right\},\\
& \cmap_\phi(z) := \begin{bmatrix}
        \Id_n & 0  \\
        0&\cmap_{P_{\mathcal{K}^*}}(v)  \\
        0 &(\cmap_{P_{\mathcal{K}^*}}(v) - \Id_m)  
    \end{bmatrix},\\
& \cmap_{\sol}(A,b,c) :=  \cmap_\phi(\nu(A,b,c)) \cmap_\nu(A,b,c).
\end{align*}

       \label{cor:optLayers}
\end{proposition}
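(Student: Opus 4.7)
The plan is to address the three conclusions sequentially: first the path differentiability of $\phi$ by elementary calculus, then that of $\nu$ via Theorem~\ref{th:implicitMainTheorem}, and finally that of $\sol$ via Proposition~\ref{prop:jacobianComposition} applied to the factorization $\sol=\phi\circ\nu$ obtained in \eqref{eq:solconic}. The only non-formal step is the upgrade of $\nu$ from a mere pointwise solution map to a locally Lipschitz function, which is the only place where work is needed.

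For $\phi(u,v)=(u,P_{\mathcal{K}^*}(v),P_{\mathcal{K}^*}(v)-v)$, I would treat each of the three output blocks separately: the first is linear in $(u,v)$; the second is $P_{\mathcal{K}^*}$ precomposed with the coordinate projection $(u,v)\mapsto v$; the third is the sum of the second with the linear map $(u,v)\mapsto -v$. Since $P_{\mathcal{K}^*}$ is path differentiable by assumption, Proposition~\ref{prop:jacobianComposition} and the standard sum rule for conservative fields give path differentiability of each block, hence of $\phi$. Stacking the partial Jacobians of the three blocks in the variables $(u,v)$ yields exactly the block expression announced for $\cmap_\phi$.

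For $\nu$, the main obstacle is that Theorem~\ref{th:implicitMainTheorem} presupposes local Lipschitz continuity, whereas uniqueness of $z=\nu(A,b,c)$ only defines $\nu$ set-theoretically. To bridge this gap I would fix an arbitrary $(A_0,b_0,c_0)$ and set $z_0=\nu(A_0,b_0,c_0)$. Convex-valuedness of $\cmap_\mathcal{N}(z_0,A_0,b_0,c_0)$ together with invertibility of every first-$N$-column block in it places us in the hypotheses of Corollary~\ref{cor:implicitFun}, which produces an open neighborhood of $(A_0,b_0,c_0)$ and a path differentiable (in particular locally Lipschitz) implicit solution $\tilde{\nu}$. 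The global uniqueness assumption forces $\tilde{\nu}=\nu$ on this neighborhood, so $\nu$ is locally Lipschitz everywhere on its domain. Theorem~\ref{th:implicitMainTheorem} now applies directly and yields both path differentiability of $\nu$ and the announced formula $\cmap_\nu(A,b,c)=\{-U^{-1}V:[U\ V]\in\cmap_\mathcal{N}(\nu(A,b,c),A,b,c)\}$.

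Once $\phi$ and $\nu$ are known to be path differentiable with conservative Jacobians as above, Proposition~\ref{prop:jacobianComposition} applied to the composition $\sol=\phi\circ\nu$ immediately gives path differentiability of $\sol$ with the matrix-product conservative Jacobian $\cmap_\phi(\nu(A,b,c))\cmap_\nu(A,b,c)$, matching the claim. The whole argument is thus a structured assembly within the conservative calculus of Section~\ref{sec:conservativeJacobians}; the only genuine subtlety is the pointwise-to-locally-Lipschitz upgrade of $\nu$ via Corollary~\ref{cor:implicitFun}, and I expect this to be the main obstacle to write carefully.
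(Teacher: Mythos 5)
Your proposal is correct and follows essentially the same route as the paper's proof: local Lipschitz continuity and path differentiability of $\nu$ are obtained by applying Corollary~\ref{cor:implicitFun} at an arbitrary base point and invoking the global uniqueness assumption to identify the local implicit solution with $\nu$; $\cmap_\phi$ is built by the sum and composition rules for conservative Jacobians applied blockwise; and $\sol=\phi\circ\nu$ is handled by Proposition~\ref{prop:jacobianComposition}. The only cosmetic difference is the ordering ($\phi$ before $\nu$) and that the paper reads off the conservative Jacobian of $\nu$ directly from Corollary~\ref{cor:implicitFun} rather than re-invoking Theorem~\ref{th:implicitMainTheorem} after upgrading $\nu$ to locally Lipschitz, which is an equivalent step.
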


In practice, the path differentiability of conic projections is pervasive since they are generally semialgebraic (orthant, second-order cone, PSD cone). See \cite{hayashi2005combined,malick2006clarke,kong2009clarke,malick2006clarke} for the computations of the corresponding Clarke Jacobians (which are conservative). Note that a conservative Jacobian for $\mathcal{N}$ may be obtained from $\cmap_{P_{\mathcal{K}^*}}$ using Proposition~\ref{prop:jacobianComposition}.

\paragraph{Hyperparameter selection for Lasso type problems.} Implicit differentiation can be used to tune hyperparameters via first-order methods optimizing some measure of task performance, see \cite{bertrand2021implicit} and references therein. In a nonsmooth context, we recall the formulation in \cite{bertrand2020implicit} of the general hyperparameter optimization problem as a bi-level optimization problem:
$$
\min\limits_{\lambda\in\R^m} C(\bhat(\lambda)) \quad\mbox{such that}\quad \bhat(\lambda) \in \argmin\limits_{\beta\in\R^p} \psi(\beta, \lambda)
$$
where $C:\R^p\to\R$ is continuously differentiable (e.g., test loss) and $\psi \colon \RR^p \times \RR^m \to \RR$ is a possibly nonsmooth training loss, convex in $\beta$, with hyperparameter $\lambda\in\R^m$. We seek a subgradient type method for this problem with convergence guaranties; our nonsmooth implicit differentiation results can be used for this purpose. We demonstrate this approach on the Lasso problem \cite{tibshirani1996regression}

\nnewq{\label{eq:lasso}
\bhat\left(\lambda\right)\in \argmin  \brac{\frac{1}{2}\norm{y-X\beta}{2}^2 + e^{\lambda}\norm{\beta}{1}: \beta \in \R^p}
}
where $y\in\R^n$ is the vector of observations, $X = [X_1,\ldots,X_p]\in\R^{n\times p}$ is the design matrix with columns $X_j\in \R^n$, $j \in \{1 ,\ldots p\}$, and $\lambda\in\R$ is the hyperparameter. 
Define $F:\R\times\R^p\to\R^p$ to be
\newq{\label{eq:fixedPt}
F\para{\lambda,\beta} :=  \beta - \prox_{e^{\lambda}\norm{\cdot}{1}}\para{\beta-X^T\para{X\beta-y}}
}
and recall that, for each $i\in\{1,\ldots,p\}$, $[\prox_{e^{\lambda}\norm{\cdot}{1}}(\beta)]_i = 
\sign(\beta_i)\max\{|\beta_i|-e^{\lambda},0\}$. The function 
$F(\lambda,\beta)$ is thus nonsmooth but locally Lipschitz on $\R\times\R^p$. An optimal $\bhat(\lambda)$ for \eqref{eq:lasso} must satisfy $F(\lambda,\bhat(\lambda))=0$ \cite[Prop. 3.1]{combettes2005signal}. For a given solution $\bhat(\lambda)$, we introduce the equicorrelation set by $\mcE:=  \{j \in \brac{1,\ldots,p}: |X_{j}^T(y-X\bhat(\lambda))|=e^{\lambda}\}$ which contains the support set $\supp \hat \beta  :=  \{i\in\{1,\ldots,p\}:\bhat_i \neq 0\}$. In fact, $\mcE$ does not depend on the choice of the solution $\hat \beta$, see  \cite[Lemma 1]{tibshirani2013lasso}. The proof of the following result is given in Appendix~\ref{app:HO}.

\begin{proposition}[Conservative Jacobian for the solution mapping]\label{prop:HOprop}

For all $\lambda\in\R$, assume $X_{\mcE}^TX_{\mcE}$ is invertible where $X_{\mcE}$ is the submatrix of $X$ formed by taking the columns indexed by $\mcE$. Then $\bhat(\lambda)$ is single-valued, path differentiable with conservative Jacobian, $\cmap_{\bhat}\para{\lambda}$,  given for all $\lambda$ as
$$\left\{
\sbrac{-e^{\lambda}\para{\Id_p - \diag\para{q}\para{\Id_p - X^TX}}^{-1} \diag\para{q}\sign\para{\bhat - X^T\para{X\bhat-y}}}\ : q \in \mathcal{M}(\lambda)\right\}
$$
where $\mathcal{M}(\lambda) \subset \RR^p$ is the set of vectors $q$ such that $q_i = 1$ if $i \in \supp \hat \beta$, $q_i = 0$ if $i \not \in \mcE$ and $q_i \in [0,1]$ if $i \in \mcE \setminus \supp \hat\beta$.
\end{proposition}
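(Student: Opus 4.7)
The strategy is to apply Theorem~\ref{th:implicitMainTheorem} to the fixed-point equation $F(\lambda,\beta)=0$ from \eqref{eq:fixedPt}. Uniqueness of $\bhat(\lambda)$ under the hypothesis that $X_\mcE^T X_\mcE$ is invertible is a classical fact for the Lasso \cite{tibshirani2013lasso}; the map $F$ is semialgebraic (a composition of polynomials with the scalar soft-thresholding) and hence path differentiable. What remains is to exhibit an explicit conservative Jacobian of $F$ and to verify the invertibility condition of Theorem~\ref{th:implicitMainTheorem}.

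To build $\cmap_F$, I would combine Proposition~\ref{prop:jacobianComposition} with the Clarke Jacobian of the scalar soft-thresholding $g(\lambda,t) = \sign(t)\max(|t|-e^\lambda, 0)$. At every point this Clarke Jacobian is exactly $\{(q,\,-q\,e^\lambda \sign(t)) : q \in [0,1]\}$, with $q = 1$ when $|t| > e^\lambda$, $q = 0$ when $|t| < e^\lambda$, and $q$ free in $[0,1]$ on the crease $|t|=e^\lambda$. Chain-ruling through $u := \beta - X^T(X\beta-y)$ produces elements $[A\ B] \in \cmap_F(\lambda,\beta)$ of the form
\[
A = e^\lambda \diag(q)\sign(u), \qquad B = \Id_p - \diag(q)(\Id_p - X^T X).
\]
Evaluating at $\beta = \bhat(\lambda)$ and invoking the Lasso KKT conditions (which enforce $X_i^T(y-X\bhat) = e^\lambda \sign(\bhat_i)$ on $\supp\bhat$ and $|X_i^T(y-X\bhat)| < e^\lambda$ off $\mcE$) one reads off $|u_i| = |\bhat_i| + e^\lambda > e^\lambda$ on $\supp\bhat$, so $q_i=1$; $|u_i|<e^\lambda$ off $\mcE$, so $q_i=0$; and $|u_i|=e^\lambda$ on $\mcE\setminus\supp\bhat$, so $q_i \in [0,1]$. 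Thus $q$ ranges precisely over $\mathcal{M}(\lambda)$.

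The main obstacle is the invertibility of $B$ for every $q \in \mathcal{M}(\lambda)$. Suppose $Bv=0$. For $i \notin \mcE$ we have $q_i=0$, so the $i$-th row of $B$ reduces to $e_i^T$ and $v_i=0$; hence $v$ is supported on $\mcE$ and the remaining system reads $[\diag(1-q_\mcE) + \diag(q_\mcE)M]\,v_\mcE = 0$ with $M:=X_\mcE^T X_\mcE$. On $\{i \in \mcE : q_i=0\}$ the same argument gives $v_i=0$; restricting to $S := \{i \in \mcE : q_i > 0\}$ and multiplying by $\diag(q_S)^{-1}$ yields
\[
\bigl[\diag((1-q_S)/q_S) + M_{S,S}\bigr]\,v_S = 0.
\]
The first summand is positive semidefinite since $q_i \in (0,1]$, while $M_{S,S}$ is a principal submatrix of the positive definite matrix $M$ and therefore positive definite; their sum is positive definite, so $v_S=0$ and $v=0$ globally. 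Theorem~\ref{th:implicitMainTheorem} then yields path differentiability of $\bhat$ with conservative Jacobian $\cmap_{\bhat}(\lambda) = \{-B^{-1}A : [A\ B] \in \cmap_F(\lambda,\bhat(\lambda))\}$, which coincides with the announced formula after substituting the expressions for $A$ and $B$.
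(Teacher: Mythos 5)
Your proof follows essentially the same plan as the paper: reduce to the nonsmooth implicit function theorem applied to the optimality-gap map $F$ from \eqref{eq:fixedPt}, use separability of $\|\cdot\|_1$ to compute an explicit conservative Jacobian $[A\ B]$ for $F$ via the Clarke Jacobian of scalar soft-thresholding, invoke the Lasso KKT conditions to pin down the range of $q$ as $\mathcal{M}(\lambda)$, and verify invertibility of $B$. Your KKT bookkeeping and the resulting characterization of $q$ over $\supp\hat\beta$, $\mcE\setminus\supp\hat\beta$, and the complement of $\mcE$ match the paper exactly.

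The genuine difference is the invertibility argument for $B = \Id_p - \diag(q)(\Id_p - X^TX)$. The paper invokes the similarity trick from \cite[Theorem~2]{winston2020monotone}, reducing to the symmetric matrix $\tilde B = \Id_p - Q^{1/2}(\Id_p - X^TX)Q^{1/2}$ and relying on a continuity argument in $q$ to cover the degenerate entries $q_i \in\{0,1\}$, after which the kernel of $\tilde B$ is traced to a principal submatrix of $X_\mcE^TX_\mcE$. You instead run a direct kernel argument on $B$: peeling off the coordinates where $q_i=0$ row by row, then rescaling by $\diag(q_S)^{-1}$ on the remaining support to land on $\diag((1-q_S)/q_S) + M_{S,S}$, which is a sum of a PSD diagonal and a positive definite principal submatrix of $X_\mcE^TX_\mcE$. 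This is arguably cleaner: it avoids the detour through a symmetrized matrix and the continuity argument needed when $Q^{1/2}$ is singular, and it exposes directly which principal block of $X_\mcE^TX_\mcE$ controls the kernel.

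One small gap: you invoke Theorem~\ref{th:implicitMainTheorem} at the end, but that theorem presupposes $G=\hat\beta$ is already known to be locally Lipschitz. You establish single-valuedness but not Lipschitz regularity. The paper sidesteps this by applying Corollary~\ref{cor:implicitFun} (the Clarke Jacobian is convex-valued, so its hypotheses hold), which \emph{produces} a locally Lipschitz, path-differentiable implicit function $G$; identifying $G$ with $\hat\beta$ by uniqueness then yields the regularity of $\hat\beta$ for free. Swapping your final citation from Theorem~\ref{th:implicitMainTheorem} to Corollary~\ref{cor:implicitFun}, plus this one-line identification, closes the gap.
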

Taking, in Proposition~\ref{prop:HOprop}, $q_i=1$ for all $i\in\mcE$ corresponds to the directional derivative given by LARS algorithm \cite{efron2004least}, see also \cite{mairal2012complexity}. Alternatively, taking $q_i=0$ for $i\not\in\supp\bhat$ gives the weak derivative described by \cite{bertrand2020implicit}. Both are particular selections in $J_{\hat{\beta}}$, which is the underlying conservative field. 
\section{Optimizing implicit problems with gradient descent}
\label{sec:algorithms}

We establish the convergence of gradient descent algorithms for compositional learning  problems involving implicitly defined functions. The result follows from the previous section and the general convergence results of \cite{bolte2020mathematical}.

\paragraph{The minimization problem.} The applications considered in the previous section all yield minimization problems of the type
\begin{align}
    \min_{w \in \RR^p} \ell(w) :=  \frac{1}{N} \sum_{i=1}^N \ell_i(w)\,
 \mbox{  with } \,  \ell_i = g_{i,L} \circ g_{i,L-1} \circ \ldots \circ g_{i,1}\label{eq:finiteSum}
\end{align}
    where, for each $i \in\{1, \ldots, N\}$, $\ell_i:\RR^p\to\RR$ is a composition of functions having appropriate input and output dimensions. The indices $i$ correspond in practice to learning samples while the loss $\ell$ embodies an empirical expectation, as for instance in deep learning. We will enforce the following structural condition.

\begin{assumption}
    \label{ass:structure}
    For $i\in\{1,\ldots, N\}$ and $j \in \{1,\ldots,L\}$, the function $g_{i,j}$ is locally Lipschitz with conservative Jacobian $\cmap_{i,j}$ and one of the following holds
    
    $\bullet$\quad $g_{i,j}$ and $J_{i,j}$ are semialgebraic (or, more generally, definable).
    
    $\bullet$\quad $g_{i,j}$ is defined as $G$ in Theorem~\ref{th:implicitMainTheorem}, with  $F$ and $J_F$ semialgebraic (or, more generally, definable).
\end{assumption}
Actually, in Assumption~\ref{ass:structure} the second point implies the first point; we list both for clarity.
More details on semialgebraicity and definability are given in Appendix~\ref{app:definable}. Let us stress that virtually all elements entering the definition of neural networks are semialgebraic or, more generally, definable, see for example \cite{bolte2020mathematical} for a constructive model. In particular, beyond classical networks with usual nonlinearities (e.g., relu, sigmoid, max pooling \ldots), this setting encompasses (through Corollary~\ref{cor:implicitFun}):

\textbf{(a)} Deep equilibrium networks: each $g_{i,j}$ may correspond to usual explicit layers or an implicit layer involving a fixed point mapping and a learning sample $i$ as in \eqref{eq:monotonelayer} or \eqref{eq:implicitLayer}.

\textbf{(b)} Training with optimization layers: similarly, the inner maps $g_{i,j}$ may also be solution mapping to convex conic programs and related to the $\sol$ function  \eqref{eq:solconic} of conic problems.

\textbf{(c)} One may assume that $N=1, L=2$ and retrieve the hyperparameter tuning for Lasso in its implicit formulation.

\paragraph{SGD with backpropagation.} Algorithmic differentiation (AD) is an automated application of the chain rule of differential calculus. 
When  applied to $\ell_i$, it amounts to computing one element of the product $\cmap_i : = \prod_{j=1}^L \cmap_{i,j}$ by choosing one element in each $J_{i,j}$ with appropriate inputs given by intermediate results kept in memory during a forward computation of the composition.

In this context AD stochastic gradient descent requires an initial  $w_0 \in \RR^p$ and a sequence of \textit{i.i.d.} random indices uniform in $\{1,\ldots,N\}$, $(I_k)_{k \in \NN}$. It gives:
\begin{align}
    w_{k+1} &= w_k - s\alpha_k v_k\\
    v_k &\in \cmap_{I_k} (w_k), \qquad\qquad \text{(given by $\backprop$)},
    \label{eq:sgd}
\end{align}
where $(\alpha_k)_{k \in \NN}$ is a sequence of positive step sizes and $s \in (s_{\min},s_{\max})$ is a scaling factor where $s_{\max}>s_{\min}>0$. A simpler choice could be $v_k \in \partialc \ell_{I_k}(w_k)$, however, the chain rule used within algorithmic differentiation routines does  not produce subgradients (see, e.g., Figure~\ref{fig:ennemies}). In contrast, conservative Jacobians are faithful models of AD outputs. 
The asymptotic behavior of the above algorithm depends on the variational properties of the conservative Jacobian $\cmap:=  \frac{1}{N} \sum_{i=1}^N \cmap_i$.

\begin{theorem}[Convergence result]
    Consider  minimizing $\ell$ given in \eqref{eq:finiteSum} using algorithm \eqref{eq:sgd} under Assumption~\ref{ass:structure}. Assume furthermore the following
    \begin{itemize}
        \item\textbf{Step size:} $\sum_{k=1}^{+\infty} \alpha_k = +\infty$ and $\alpha_k = o(1/\log(k))$.
        \item\textbf{Boundedness:} there exists $M>0$, and $K \subset \RR^p$ open and bounded, such that, for all $s \in (s_{\min},s_{\max})$ and $w_0 \in \mathrm{cl}\  K$, $\|w_k\| \leq M$ almost surely.
    \end{itemize}
   For almost all $w_0 \in K$ and $s\in(s_{\min},s_{\max})$, the objective value $\ell(w_k)$ converges and all accumulation points $\bar{w}$ of $w_k$ are  Clarke-critical in the sense that $0 \in \partialc \ell(\bar{w})$.
    \label{th:convergenceAlgo}
\end{theorem}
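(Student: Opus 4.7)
The plan is to reduce the statement to the generic convergence result for stochastic subgradient-type methods driven by a \emph{definable} conservative field, as developed in \cite{bolte2020mathematical}. To invoke that framework, I need to exhibit a definable, locally bounded, graph-closed, conservative Jacobian $\cmap$ for the objective $\ell$ such that the increments $v_k$ produced by $\backprop$ lie in $\cmap(w_k)$. The structural assumptions on $\ell$ in Assumption~\ref{ass:structure} are precisely tailored so that such a $\cmap$ can be built by composition.

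First, I would build, for each $i \in \{1,\ldots,N\}$, a conservative Jacobian for the composition $\ell_i = g_{i,L}\circ\cdots\circ g_{i,1}$. For layers falling under the first bullet of Assumption~\ref{ass:structure}, the conservative Jacobian $\cmap_{i,j}$ is given by assumption and is definable. For layers falling under the second bullet, the local function $g_{i,j}=G$ is supplied by Theorem~\ref{th:implicitMainTheorem}, and the formula $\{-B^{-1}A:[A\ B]\in J_F(x,G(x))\}$ together with the definable implicit function statement mentioned in Remark~(b) (Theorem~\ref{th:invdef}/\ref{th:inpdef} in the appendix) yields that $\cmap_{i,j}$ is itself definable and conservative. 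Applying Proposition~\ref{prop:jacobianComposition} inductively, the point-to-set map $\cmap_i(w):= \cmap_{i,L}(u_{i,L-1})\,\cmap_{i,L-1}(u_{i,L-2})\cdots \cmap_{i,1}(w)$ (with $u_{i,j}$ the forward activations at $w$) is a conservative Jacobian for $\ell_i$. Definability is preserved by composition and by finite sums, so $\cmap := \frac{1}{N}\sum_{i=1}^N \cmap_i$ is a definable conservative gradient for $\ell$. By construction, the backpropagation output $v_k$ produced at step $k$ belongs to $\cmap_{I_k}(w_k)$, which is an unbiased conditional selection of $\cmap(w_k)$.

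At this point, the algorithm \eqref{eq:sgd} is a stochastic approximation of the differential inclusion $\dot w \in -\cmap(w)$, where $\cmap$ is a definable, locally bounded, graph-closed conservative field. The hypotheses imposed in the theorem—$\sum \alpha_k=\infty$, $\alpha_k=o(1/\log k)$, and almost sure boundedness of $(w_k)$ on a bounded open set $K$—match verbatim the assumptions of the main convergence theorem of \cite{bolte2020mathematical} for stochastic algorithmic differentiation with vanishing step size. Invoking that result gives, for almost every initialization $w_0\in K$ and almost every scaling $s\in(s_{\min},s_{\max})$, convergence of the values $\ell(w_k)$ and the fact that every accumulation point $\bar w$ satisfies $0\in \cmap(\bar w)$. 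Because $\ell$ is definable and $\cmap$ is a conservative field, Theorem~\ref{th:incluseionClarke} (or Proposition~\ref{prop:conservfieldsdecomp}) shows $\partial^c \ell(\bar w)\subset \conv \cmap(\bar w)$ up to a residual term that vanishes at critical points, so one may upgrade $0\in \cmap(\bar w)$ to $0\in\partial^c \ell(\bar w)$.

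The main obstacle is the definability claim for the implicitly constructed layers in case (b) of Assumption~\ref{ass:structure}. The mere conservativity produced by Theorem~\ref{th:implicitMainTheorem} is not enough to apply the convergence machinery of \cite{bolte2020mathematical}, which crucially exploits a Sard-type theorem through o-minimal geometry to rule out pathological limit points and oscillation of values. I would handle this by combining three facts: the explicit formula $\cmap_G(x)=\{-B^{-1}A:[A\ B]\in \cmap_F(x,G(x))\}$ is a definable operation in $\cmap_F$ whenever $B$ is invertible; matrix inversion is semialgebraic on the open set of invertible matrices; and the graph of the implicitly defined $G$ is itself definable by the definable implicit function theorem in the appendix. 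Once definability of each $\cmap_{i,j}$ is secured, stability of definability under composition, finite union and finite sum finishes the construction of $\cmap$, and the rest is a direct citation of \cite{bolte2020mathematical}.
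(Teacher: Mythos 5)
Your overall scaffolding---build a definable conservative Jacobian $\cmap$ for $\ell$ by composition, verify the step-size and boundedness hypotheses, then invoke a stochastic-approximation-to-differential-inclusion theorem---is the right family of ideas and matches the paper in spirit for the first half. The definability argument for the implicitly constructed layers via Theorem~\ref{th:inpdef} and the semialgebraicity of matrix inversion on invertible matrices is sound and is indeed the reason Assumption~\ref{ass:structure} is stated as it is.

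The genuine gap is in the last step. Citing the generic convergence result for conservative fields gives only $0\in\cmap(\bar w)$ at accumulation points, i.e.\ $\cmap$-criticality, not Clarke criticality. Your attempted upgrade goes the wrong way: Theorem~\ref{th:incluseionClarke} and Proposition~\ref{prop:conservfieldsdecomp} give $\partial^c\ell(\bar w)\subset\cmap(\bar w)$ (plus a residual), which means Clarke critical implies $\cmap$-critical, not the converse. There is no ``residual term that vanishes at critical points''---the residual is allowed to be an arbitrary nonzero compact set at $\bar w$, so $0\in\cmap(\bar w)$ genuinely does not imply $0\in\partial^c\ell(\bar w)$; conservative fields can have spurious zeros that the Clarke subdifferential lacks. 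The paper closes this gap by an entirely different mechanism borrowed from \cite{bianchi2020convergence}: one shows that for almost every initialization $w_0\in K$ and almost every scaling $s\in(s_{\min},s_{\max})$, each iterate $w_k$ is an absolutely continuous random variable, and since $\cmap_i$ coincides with the classical gradient $\nabla\ell_i$ on a full-measure set, the update a.s.\ equals $w_{k+1}=w_k-s\alpha_k\nabla\ell_{I_k}(w_k)$, with conditional expectation $w_k-s\alpha_k\nabla\ell(w_k)=w_k-s\alpha_k\partial^c\ell(w_k)$. This is precisely what turns the sequence into a \emph{bona fide} Clarke stochastic subgradient sequence, to which the Lyapunov/Sard machinery of \cite{benaim2005stochastic} applies for $\dot w\in-\partial^c\ell(w)$ directly. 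Without this perturbation argument, the ``for almost all $w_0$ and $s$'' clause in the statement has no role and the claimed Clarke criticality does not follow.
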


This result shows that AD SGD may be applied successfully to all problems described in Section~\ref{sec:examplesML}, combining algorithmic differentiation with implicit differentiation. Its proof may be adapted directly from \cite{bolte2020conservative,bianchi2020convergence}; details are given in Appendix~\ref{app:Theorem3}.

\section{Numerical experiments}
\label{sec:pathologies}

Using implicit differentiation when the invertibility condition in Theorem~\ref{th:implicitMainTheorem} does not hold can result in absurd training dynamics.

\paragraph{A cyclic gradient dynamics via fixed-point/optimization layer.}

Consider the bilevel problem:
\begin{align}
    \label{eq:cvxcycledynamic}
    \min_{x,y,s} \quad &\ell(x,y,s) := (x-s_1)^2 + 4 (y-s_2)^2 \\
    \mathrm{s.t.} \quad& s \in s(x,y):=\arg \max \left\{ (a+b)( -3x +y + 2 ):\, a \in [0,3], b \in [0,5] \right\}. \nonumber
\end{align}
Problem~\eqref{eq:cvxcycledynamic} has an equivalent fixed-point formulation using projected gradient descent on the inner problem (Appendix~\ref{app:cycle_cvxopt_fixed_point}). Backpropagation applied to \eqref{eq:cvxcycledynamic} associates to $(x,y)$ the following:
\begin{equation}
    \label{eq:gradCycle}
     \nabla_{(x, y)} \ell(x, y, s(x)) +\tilde{J}_{s} (x,y)^T  \nabla_s \ell (x, y, s(x)) 
\end{equation}
where $\tilde{J}_s$ is piecewise derivative. 

We implement gradient descent for \eqref{eq:cvxcycledynamic}, evaluating \eqref{eq:gradCycle} either using \texttt{cvxpylayers} \cite{Agrawal2019differentiable} or the \texttt{JAX} tutorial \cite{kolter2020tutorial} for fixed-point layers. In both cases, the invertibility condition in Theorem~\ref{th:implicitMainTheorem} fails when $-3x+y+2=0$, resulting in discontinuity of $s$, affecting the dynamics globally: the gradient trajectory converges to a limit cycle of non critical points (Figure~\ref{fig:cycledynamic}); see Appendix~\ref{app:cycle} for details.

\textit{Persistence under small perturbations:} The limit cycle remains (Figure~\ref{fig:jittercycle}) when running the same experiment on a slightly perturbed version of Problem \eqref{eq:cvxcycledynamic} with perturbed initializations as well (Appendix~\ref{app:perturbedfunction}). 

\begin{figure}[h]
\centering
\begin{subfigure}{0.44\textwidth}
    \centering
    \includegraphics[width=0.7\textwidth]{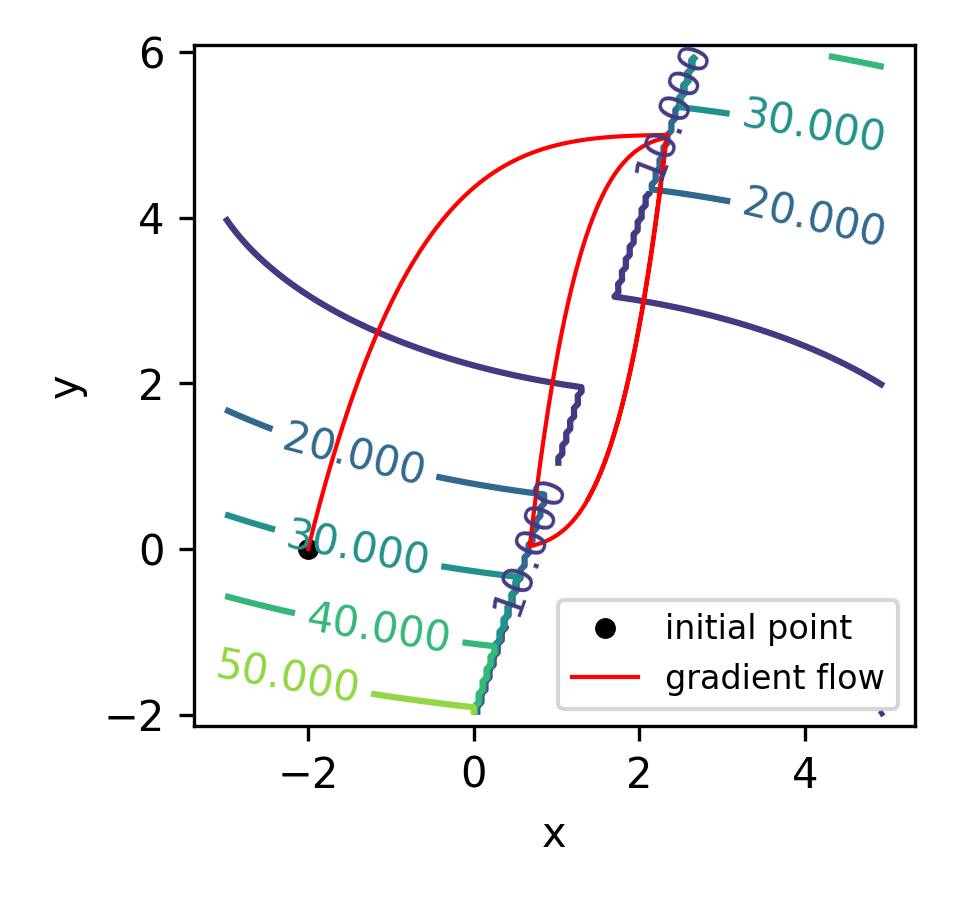}
    \caption{}
    \label{fig:cycledynamic}
\end{subfigure}%
\begin{subfigure}{0.44\textwidth}
    \centering
    \includegraphics[width=0.7\textwidth]{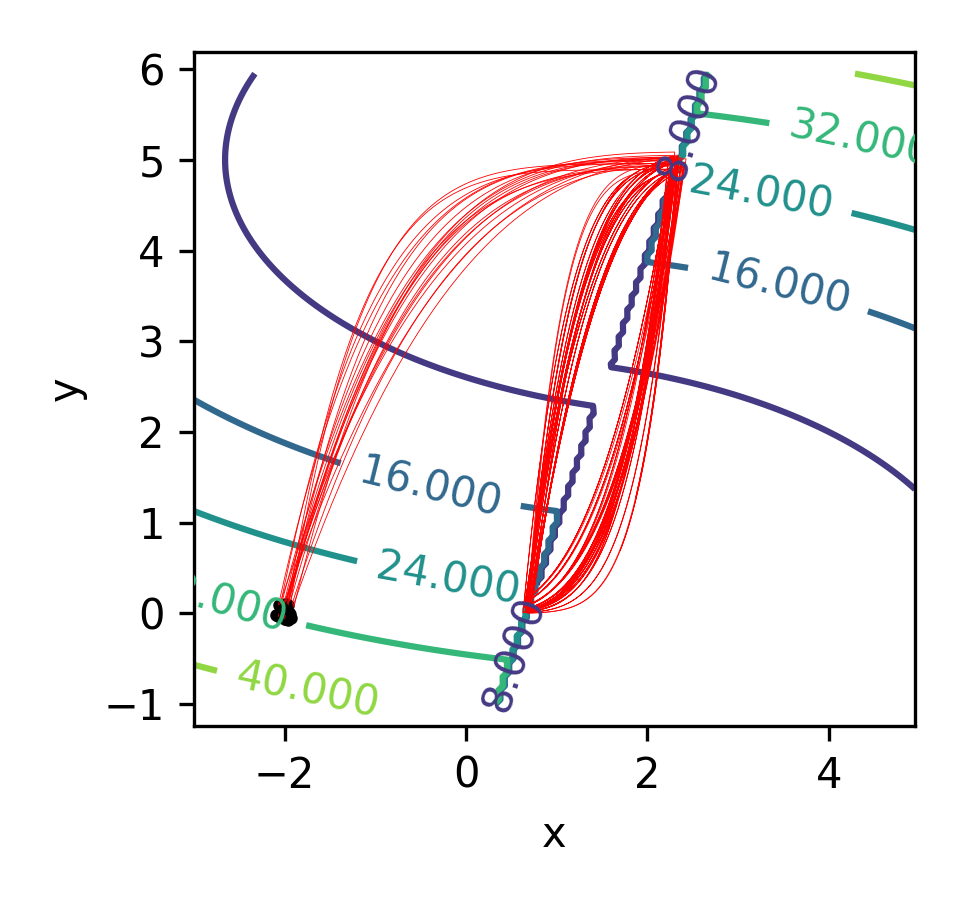}
    \caption{}
    \label{fig:jittercycle}
\end{subfigure}%
\caption{(a) Gradient flow for all implementations. (b) Gradient flows for 20 perturbed experiments.}
\end{figure}

\paragraph{A Lorenz-like dynamics via implicit differentiation.} The Lorenz Ordinary Differential Equation (ODE) writes:
\begin{equation}
\label{eq:lorenzsystem}
\begin{aligned}
    \dot{x} &= \sigma (y - x), \qquad    \dot{y} &= x (\rho - z) - y, \quad\mbox{and}\quad
    \dot{z} &= x y - \beta z.
\end{aligned}
\end{equation}
It is well-known that taking $(\sigma,\rho,\beta) = (10, 28,8/3)$, and $(x(0), y(0), z(0)) = (0, 1, 1.05)$ gives a chaotic trajectory, displayed in  Figure~\ref{fig:lorenzdynamic}. Denoting $F: (x, y, z) \mapsto (\sigma(y - x), x(\rho - z) - y, xy - \beta z)$ the vector field of the Lorenz system \eqref{eq:lorenzsystem}, consider the optimization problem:
\begin{align}
\label{eq:cvxpylayerlorenz}
    \max_{u \in \R^3} \quad u^T z \qquad \mathrm{s.t.} \qquad z \in \underset{s \in \R^3}{\arg\min} \|s - F(u)\|^4
\end{align}
which is obviously equivalent to
\begin{equation}
    \label{eq:pseudolorenzequivalentproblem}
    \max_{u \in \R^3} \quad u^T F(u).
\end{equation}
\begin{figure}[H]
\centering
\begin{subfigure}{.32\textwidth}
  \centering
  \includegraphics[width=1\linewidth]{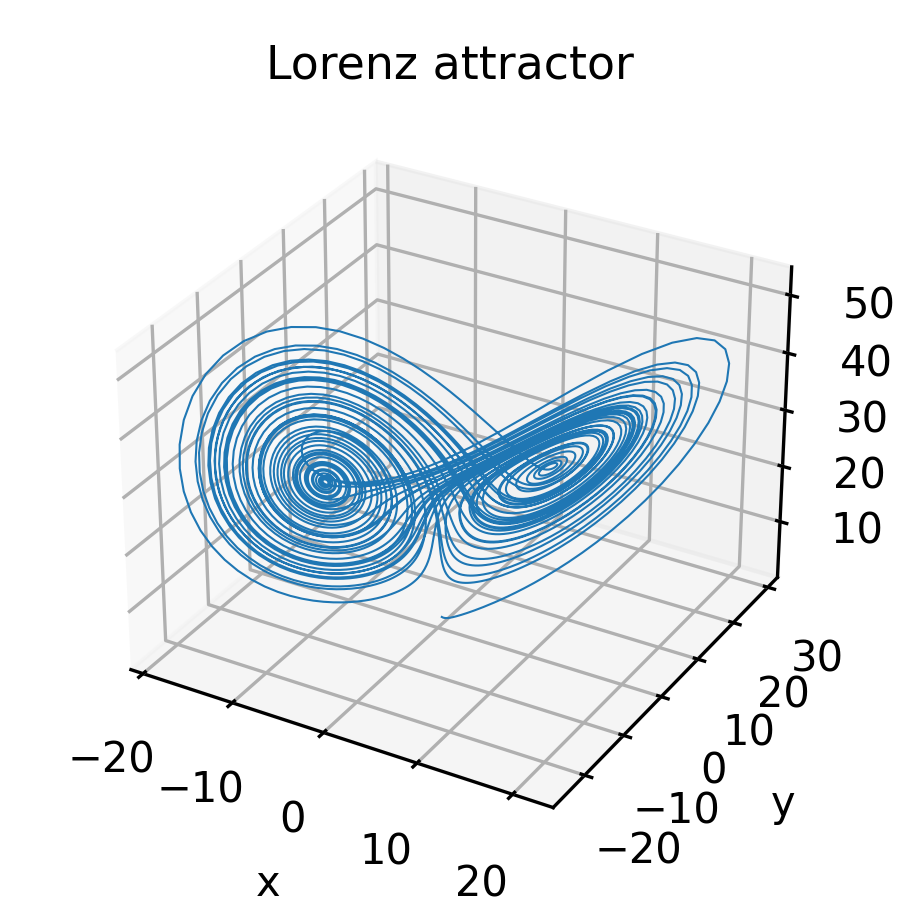}
  \caption{}
  \label{fig:lorenzdynamic}
\end{subfigure}%
\begin{subfigure}{.32\textwidth}
  \centering
  \includegraphics[width=1\linewidth]{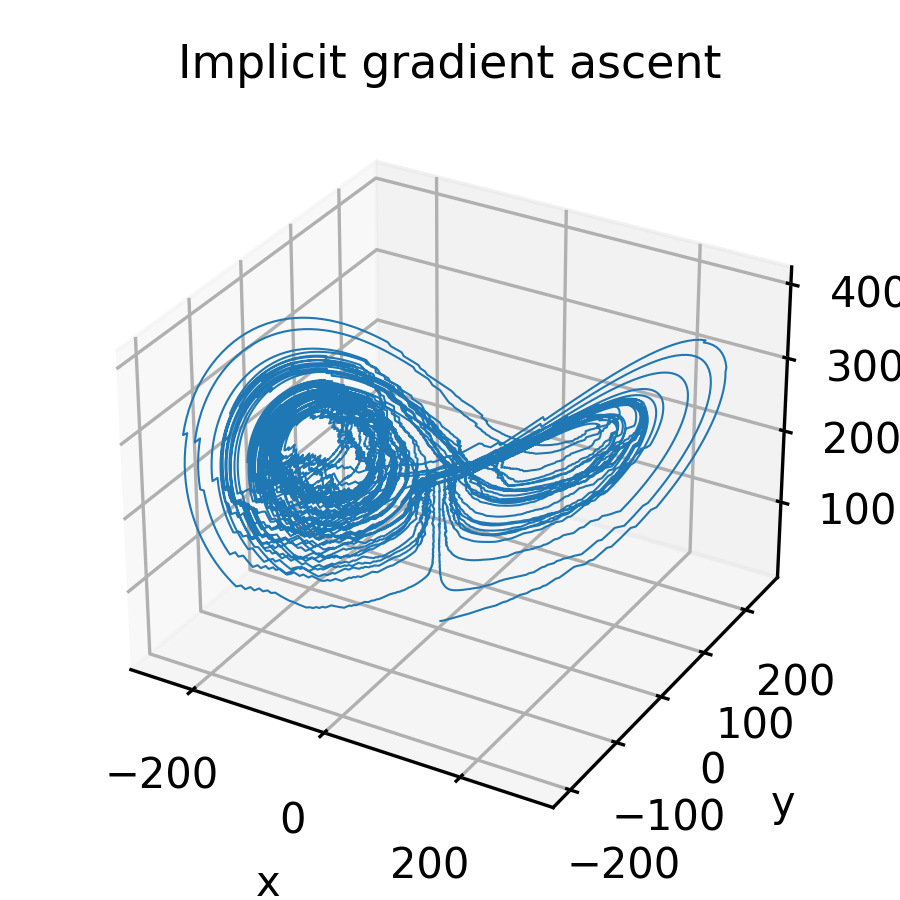}
  \caption{}
  \label{fig:pseudolorenzdynamic}
\end{subfigure}
\begin{subfigure}{.32\textwidth}
  \centering
  \includegraphics[width=1\linewidth]{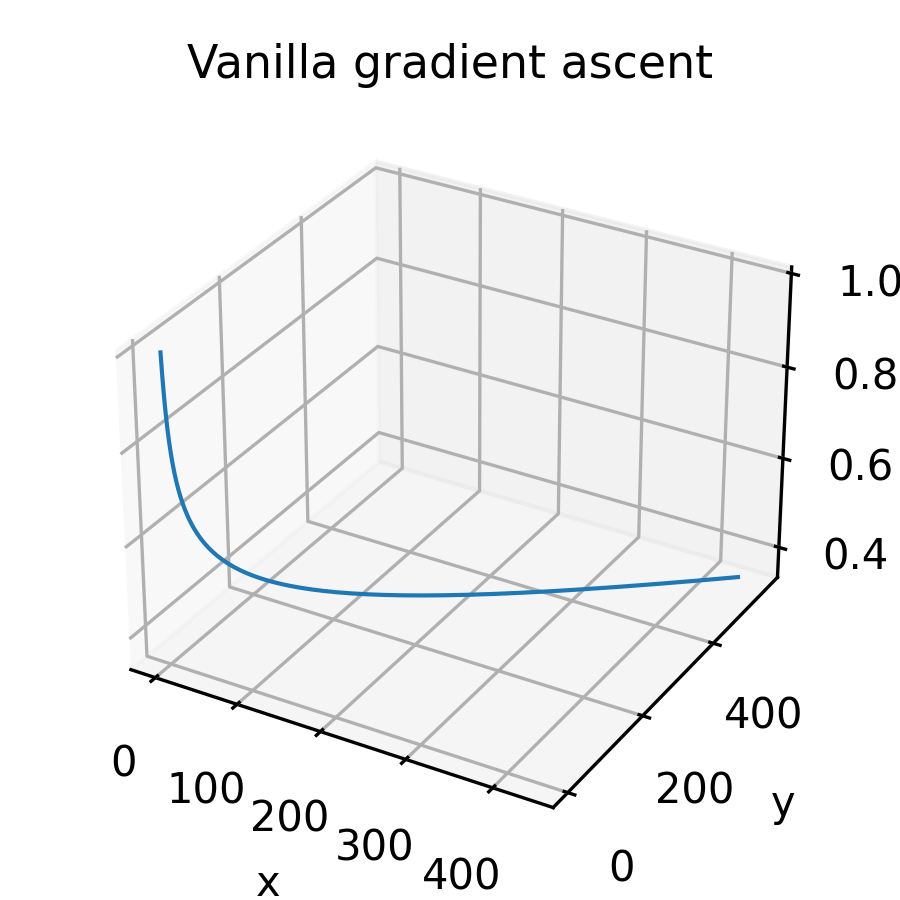}
  \caption{}
  \label{fig:lorenzvanillagradient}
\end{subfigure}
\caption{Implicit gradient ascent (b) outputs a pathological curve with some qualitative aspects of the Lorenz dynamics (a) and really different from a classical gradient (c).}
\end{figure}
The function $g : u \mapsto u^T F(u)$ is a nondegenerate quadratic function whose expression can be found in Appendix~\ref{app:lorenzquadraticform}. The function $g$ has for unique critical point $(0, 0, 0)$ which is a strict saddle-point. We perform gradient ascent with implicit differentiation using \texttt{cvxpylayers} on \eqref{eq:cvxpylayerlorenz}, and the classical gradient ascent on the equivalent problem \eqref{eq:pseudolorenzequivalentproblem}. The path obtained by implicit differentiation (Figure~\ref{fig:pseudolorenzdynamic}) resembles the Lorenz attractor (Figure~\ref{fig:lorenzdynamic}), in stark contrast to the conventional method (Figure~\ref{fig:lorenzvanillagradient}). The chaotic dynamics are a consequence of the lack of invertibility, due to the power $4$ in \eqref{eq:cvxpylayerlorenz}, and various numerical approximations related to optimization and implicit differentiation.

\section{Conclusion and future work}

This article provides a rigorous framework and calculus rules for nonsmooth implicit differentiation using the theory of conservative Jacobians. In particular, it describes precise conditions under which implicit differentiation can be used, in a way that is compatible with backpropagation and first-order algorithms. 

We show the applicability of our results on practical machine learning problems including training of neural networks involving layers with implicitly defined outputs (deep equilibrium nets, networks with optimization layers) and nonsmooth hyperparameter optimization (Lasso-type models).

Finally, we demonstrate the necessity of a rigorous theory of nonsmooth implicit differentiation through multiple numerical experiments. These illustrate the range of extremely pathological gradient dynamics that can occur when algorithmic differentiation is combined with nonsmooth implicit differentiation outside the scope of our theorem, i.e., without satisfying the invertibility condition we specify.

\bibliographystyle{abbrv}
\bibliography{references}
\newpage
\appendix
\etocdepthtag.toc{mtappendix}
\etocsettagdepth{mtsection}{none}
\etocsettagdepth{mtappendix}{section}
\tableofcontents
\section{Lexicon}\label{app:lexicon}

\subsection{Conservative fields}
We first collect the necessary definitions to define a conservative set-valued field, introduced in \cite{bolte2020conservative}, and by extension conservative Jacobians. Recall from multivariable calculus that the \emph{Jacobian} of a differentiable function $f\colon \R^n\to\R^m$ is given by
\newq{
\jac f := \begin{bmatrix}\frac{\partial f_1}{\partial x_1} &\ldots &\frac{\partial f_1}{\partial x_n}\\ \vdots &\ddots &\vdots\\ \frac{\partial f_m}{\partial x_1} &\ldots &\frac{\partial f_m}{\partial x_n}\end{bmatrix}.
}
\begin{definition}[Absolutely continuous curve]
A continuous function $\gamma:\R\to\R^n$ is an absolutely continuous curve if it has a derivative $\dot{\gamma}(t)$, for almost all $t\in\R$, which furthermore satisfies
\begin{equation*}
    \gamma(t) - \gamma(0) = \int\limits_{0}^t \dot{\gamma}(\tau)d\tau
\end{equation*}
for all $t\in\R$.
\end{definition}
The \textit{graph} of a set-valued mapping $D:\R^n\rightrightarrows\R^m$ is the set $\graph D := \{(x,z) : x\in\R^n, z\in D(x)\}$.
\begin{definition}[Closed graph]
A set-valued mapping $D:\R^n\rightrightarrows\R^m$ has closed graph or is graph closed if $\graph D$ is a closed subset of $\R^{n+m}$ or, equivalently, if, for any convergent sequences $(x_k)_{k\in\N}$ and $(z_k)_{k\in\N}$ with $z_k\in D(x_k)$ for all $k\in\N$, it holds
\newq{
\lim\limits_{k\to\infty} z_k\in D\para{\lim\limits_{k\to\infty}x_k}.
}
\end{definition}

\begin{definition}[Locally bounded] A set-valued mapping $D : \R^n \rightrightarrows \R^m$ is locally bounded if for all $x \in \R^n$, there exists a neighborhood $\mcU$ of $x$ and $M > 0$ such that, for all $u \in \mcU$, for all $y \in D(u)$, $\|y\| < M$.
\end{definition}

\begin{definition}[Conservative set-valued field]
A set-valued mapping $D:\R^n\rightrightarrows\R^m$ is a conservative field if the following conditions hold:
\begin{enumerate}
    \item For all $x\in\R^n$, $D(x)$ is nonempty.
    \item $D$ has a closed graph and is locally bounded.
    \item For any absolutely continuous curve $\gamma:[0,1]\to\R^n$ with $\gamma(0)=\gamma(1)$,
    \newq{
    \int\limits_{0}^1\max\limits_{z\in D(\gamma(t))}\langle\dot{\gamma}(t),z\rangle dt=0.
    }
\end{enumerate}
\end{definition}
Although conservative fields are not assumed to be locally bounded in \cite{bolte2020conservative}, we add this restriction here to ensure they are upper semicontinuous. This will allow us to use a nonsmooth Lyapunov method \cite{benaim2005stochastic} to prove convergence of first-order algorithms.
\begin{definition}[Monotone operator]
A set-valued mapping $D:\R^n\rightrightarrows\R^m$ is called a monotone operator if, for all $x,y\in\R^n$, $u\in D(x)$, and $v\in D(y)$,
\newq{
\langle x-y,u-v\rangle\geq 0.
}
\end{definition}

\subsection{A simpler and more operational view on  definability}\label{app:definable}
We recall basic definitions and results on  definable sets and functions used in this work. More details on this theory can be found in \cite{van1996geometric,coste2000introduction}.

\medskip

{\em We make a specific attempt to provide a new simple view on this subject by using dictionaries, in the hope that machine learning users consider utilizing these wonderful tools.}

\medskip

The archetypal o-minimal structure is the collection of \textit{semialgebraic} sets. Recall that a set $A \subset \R^n$ is semialgebraic if it can be written as
\begin{equation*}
    A = \bigcup_{i = 1}^I \bigcap_{j=1}^J\: \{x \in \mathbb{R}^n : P_{ij}(x) < 0, \ Q_{ij}(x) = 0 \}
\end{equation*}
where, for $i \in \{1,..., I\}$ and $j \in \{1,..., J\}$, $P_{ij}$ and  $Q_{ij}$ are polynomials. The  stability properties of semialgebraic sets may be axiomatized \cite{shiota2011geometry, van1996geometric} to give rise to the general notion of an o-minimal structure:

\begin{definition}[o-minimal structure] 
\label{def:ominimal}
Let $\mathcal{O} = (\mathcal{O}_p)_{p \in \mathbb{N}}$ be a collection of sets such that, for all $p \in \NN$, $\mathcal{O}_p$ is a set of subsets of $\R^p$. $\mathcal{O}$ is an o-minimal structure on $(\mathbb{R}, +, \cdot )$ if it satisfies the following axioms:
    \begin{enumerate}
        \item For all $p \in \mathbb{N}$, $\mathcal{O}_p$ is stable by finite intersection and union, complementation, and contains $\R^p$. 
        \item If $A \in \mathcal{O}_p$ then $A \times \mathbb{R}$ and $\mathbb{R} \times A$ belong to $\mathcal{O}_{p+1}$.
        \item\label{def:tarski} Denoting by $\pi$ the projection on the $p$ first coordinates, if $A \in \mathcal{O}_{p+1}$ then $\pi(A) \in \mathcal{O}_p$.
        \item For all $p \in \mathbb{N}$, $\mathcal{O}_p$ contains the algebraic subsets of $\mathbb{R}^p$, i.e., sets of the form $\left\{x \in \mathbb{R}^p : P(x) = 0\right\}$, where $P : \mathbb{R}^p \rightarrow \mathbb{R}$ is a polynomial function. 
        \item The elements of $\mathcal{O}_1$ are exactly the finite unions of intervals.
    \end{enumerate}
\end{definition}

A subset $A \subset \R^n$ is said to be \text{definable} in an o-minimal structure $\mathcal{O} = (\mathcal{O}_p)_{p \in \NN}$ if $\mathcal{O}_n$ contains $A$. A function $f : \R^n \rightarrow \R^m$ is said to be definable if its graph, a subset of $\R^{n + m}$, is definable.

Note that the collection of semialgebraic sets verifies \ref{def:tarski} in Definition~\ref{def:ominimal} according to the Tarski-Seidenberg theorem.

There are several  major structures which have been explored \cite{Wilkie1999ATO,van1996geometric,Dries1995OnTR}. But rather than relying on traditional description of these structures, we provide instead classes of functions that are contained in an o-minimal structure. The goals achieved are twofold:
\begin{itemize}
    \item The classes we provide are o-minimal and thus all the results provided in the main text apply to functions in these classes.
    \item It is very easy to verify that a function belongs to one of the classes. Everything boils down to checking that the problem under consideration can be expressed in one of the dictionaries we provide.
\end{itemize}
Note however that we do not aim at providing neither a comprehensive nor a sharp picture of what could be done with o-minimal structures. 

We consider first a collection of functions which will serve to establish dictionaries:
\begin{enumerate}[label=(\alph*), ref=\alph*]
    \item\label{list:anfun} Analytic functions restricted to semialgebraic compact domains (contained in their natural open domain), examples are $\cos$ and $\sin$ restricted to compact intervals.

    \item\label{list:globsubanfun} ``Globally subanalytic functions'': $\arctan,\tan_{|]-\pi/2,\pi/2[}$ or any functions in (\ref*{list:anfun}) (see \cite{Dries1995OnTR} for a precise definition of global subanalyticity).
    \item\label{list:logexp} The $\log$ and $\exp$ functions.
    \item\label{list:expbase} Functions of the form $x\mapsto x^r$ with $r$ a real constant and $x$ a positive real number. These can be represented as $x\mapsto \exp(r\log(x))$ which is definable in $(\RR,\exp)$.
    \item\label{list:implicit} Implicitly defined semialgebraic functions. That is, functions $G:\Omega\to\R^m$, with $\Omega$ open, which are maximal solutions (i.e., the domain $\Omega$ cannot be chosen to be bigger) to nonlinear equations of the type 
    $$F(x,G(x))=0$$ where $F$ is a semialgebraic function.
\end{enumerate}

With this collection of functions we may build {\em elementary dictionaries}. To demonstrate, we consider the following dictionaries
\begin{align*}
& \dic(\mbox{\ref*{list:anfun}})=\{\mbox{functions satisfying (\ref*{list:anfun})}\}\\
& \dic(\mbox{\ref*{list:expbase}, \ref*{list:implicit})=\{\mbox{functions satisfying  (\ref*{list:expbase}) \text{ or } (\ref*{list:implicit})}}\}\\
& \dic(\mbox{\ref*{list:anfun}, \ref*{list:globsubanfun}, \ref*{list:logexp}, \ref*{list:expbase}, \ref*{list:implicit}})=\{\mbox{functions satisfying (\ref*{list:anfun}) \text{ or }  (\ref*{list:globsubanfun}) \text{ or } (\ref*{list:logexp}) \text{ or }  (\ref*{list:expbase}) \text{ or }  (\ref*{list:implicit})}\}
\end{align*}
The last dictionary describes a larger class of functions, we shall come back on this later on.

Consider the dictionary $\mathcal{D}=\dic(\cdot)$ based on the properties (\ref*{list:anfun})-(\ref*{list:implicit}) described above.

Then, in the spirit of \cite{bolte2020mathematical}, we can extend the idea of piecewise selection functions with the following three definitions.
\begin{definition}[Elementary $\mathcal{D}$-function] An elementary $\mathcal{D}$-function is a $C^2$ function described by a finite compositional expression involving the basic operations $\times, +, /$, multiplication by a constant, and the functions of $\mathcal{D}$ inside their domain of definition. 
\end{definition}
Any elementary $\mathcal{D}$-function is definable in $\R_{\mbox{an,exp}}$ by stability of definable functions by composition. We shall denote $\mathfrak{S}\mathcal{D}$ the set of elementary  $\mathcal{D}$-functions. For instance, the following functions belong to $\mathfrak{S}\mathcal{D}$:

\begin{itemize}
    \item[--] $x \mapsto \frac{1}{1 + \exp(-x)}$. 
    \item[--] $x \mapsto \log(1 + \exp(x))$.
    \item[--] $(\beta, \lambda) \mapsto \|X\beta - Y\|_2 + e^{\lambda} \|\beta\|_1$.
\end{itemize}

\begin{definition}[Elementary $\mathcal{D}$-index]
Consider $r \in \NN^*$,  and $s : \R^n \rightarrow \left\{ 1,\ldots, r\right\}$. Then $s$ is said to be an elementary $\mathcal{D}$-index if, for $i \in \{1, \ldots, r\}$, each of the pre-images $s^{-1}(i)$ (i.e., the points in $\R^n$ such that $s$ selects the index $i$) can be written as
\begin{equation*}
    \bigcup_{i = 1}^I \bigcap_{j=1}^J\: \{x \in \mathbb{R}^n : g_{ij}(x) < 0, \ h_{ij}(x) = 0 \}
\end{equation*}
where, for $i \in \{1, \ldots, I\}$ and $j \in \{1, \ldots, J\}$, the $g_{ij}$ and $h_{ij}$ are elementary $\mathcal{D}$-functions.
\end{definition}

\begin{definition}[Piecewise $\mathcal{D}$-function] A function $f : \R^n \rightarrow \R^m$ is a piecewise $\mathcal{D}$-function if there exist $r \in \NN^*$, elementary $\mathcal{D}$-functions $f_1,\ldots, f_r$, and an elementary $\mathcal{D}$-index $s : \R^n \rightarrow \left\{1, \ldots, r\right\}$ such that for all $x \in \R^n$,
\begin{equation*}
   f(x) = f_{s(x)} (x).
\end{equation*}
\end{definition}

We denote $\mathcal{P} \mathcal{D}$ the set of piecewise $\mathcal{D}$-functions.
With the assumptions we have on the dictionary $\mathcal D$, the piecewise selections we consider are all definable (it 's not always the case in general).
Notice that piecewise log-exp functions \cite{bolte2020mathematical} are a specific case of $\mathcal{D}$-functions with the dictionary $\mathcal{D} = \dic(\mbox{\ref*{list:logexp}})= \left\{\log, \exp\right\}$. It is easy to see that the following functions are in $\mathcal{P}\mathcal{D}$ and thus definable:

\begin{itemize}
    \item[--] $x \mapsto \max(0, x)$ (relu).
    \item[--] $x \mapsto \max(x_1, ..., x_n)$.
    \item[--] sort function.
    \item[--] $x \mapsto  \begin{cases}
 \frac{1}{2}{x^2}                   & \text{for } |x| \le \delta, \\
 \delta (|x| - \frac{1}{2}\delta), & \text{otherwise,}
\end{cases}$ with $\delta > 0$ (Huber loss).
\end{itemize}

Moreover, composition of functions from $\mathcal{P} \mathcal{D}$ are definable. This allows to say that if $\rho(w, x)$ is the output of a neural network built with usual elementary blocks (for instance Dense, Max Pooling or Conv layers), or even implicit layers involving functions in $\mathcal{P}\mathcal{D}$,  with input $x$ and weights $w$, then the empirical risk $ \frac{1}{N} \sum_{i = 1}^N \ell(\rho(w,x_i), y_i)$ is definable with respect to $w$ provided that $\ell$ is also in $\mathcal{P} \mathcal{D}$.

\begin{remark}{\rm
(a) (Small and big dictionaries) It may be puzzling for the reader to see that there is a dictionary that contains all the others. A major comment is in order: bigger is not always better. The bigger the dictionary is, the weaker some properties are. For instance, any piecewise selection $f:\R^n\to \R$ built upon $\dic(\mbox{\ref*{list:anfun}, \ref*{list:globsubanfun}})$ satisfies $\|f(x)\|\leq c \|x\|^N$ for some $c>0,N>0$, which may have consequences in terms of convergence rates, see e.g., \cite{attouch2010proximal}. Thus in practice using the smallest dictionary possible may lead to sharper results. On top of this, there are no universal dictionaries \cite{Dries1995OnTR}.\\
(b) (PAP functions and definability) Recently PAP functions were introduced in order to deal with automatic differentiation matters \cite{lee2020correctness}. To deal with such types of functions in our framework and have guarantees in terms of automatic differentiation, implicit differentiation or convergence properties, we need to view them through the dictionary paradigm. For this we consider the dictionary of analytic functions defined on $\R^p$ for some $p$. In that case, piecewise functions are not necessarily definable but their restrictions to any ball (or any compact semialgebraic subset) are definable.}
\end{remark}

\section{Results from Section~\ref{sec:conservativeJacobians}}\label{app:conservativeJacobians}

\begin{apptheorem}{\ref{th:incluseionClarke}}[The Clarke Jacobian is a minimal conservative Jacobian]
	Given a nonempty open subset $\mathcal{U}$ of $\R^n$ and $F : \mathcal{U}\subset \R^n \to \R^m$ locally Lipschitz, let $J_F$ be a convex-valued conservative Jacobian for $F$. Then for almost all $x \in \mathcal{U}, J_F(x) = \left\{\jac F \right\}$ and  for all $x \in \mathcal{U}$, $\jac^c F(x) \subset J_F(x)$. 
\end{apptheorem}
\begin{proof}
Using \cite[Lemma 4]{bolte2020conservative} for $i \in \{1, \ldots, m\}, \:\left[J_F\right]_i$ is a conservative map for $F_i$ on $\mathcal{U}$ and it is equal to $\nabla F_i$ on a set of full measure $S_i\subset \mathcal{U}$. Hence for all $x \in S :=  \bigcap_{i = 1}^m S_i$, which is of full measure in $\mathcal{U}$,  $J_F(x)=\jac F(x)$. Since $S$ has full measure within $\mathcal U$, \cite{Warga1981FatHA} gives  the representation
\begin{align*}
    \jac^c F(x) = \operatorname{conv} \left\{ \lim_{k \to +\infty} \jac F(x_k) : x_k \in S, x_k \underset{k \to +\infty}{\xrightarrow[]{}} x\right\}, \mbox{ for any } x \in \mathcal{U}.
\end{align*}
But since $J_F$ coincides with $\jac F$ throughout $S$, we have
\begin{align*}
    \jac^c F(x) = \operatorname{conv} \left\{ \lim_{k \to +\infty} J_F(x_k) : x_k \in  S, x_k \underset{k \to +\infty}{\xrightarrow[]{}} x\right\}
\end{align*}
for each $x\in\mathcal{U}$. Finally, by graph closedness and convexity of $J_F$ we get, for each $x\in\mathcal{U}$,	
\begin{align*}
    \jac^c F(x) & \subset \operatorname{conv} \left\{  J_F\left(\lim_{k \to +\infty} x_k\right) : x_k \in S, x_k \underset{k \to +\infty}{\xrightarrow[]{}} x\right\} = J_F(x).
\end{align*}
\end{proof}

\begin{appprop}{\ref{prop:conservfieldsdecomp}}[Decomposition of conservative fields]\label{app:conservfieldsdecomp}
Let $J_F$ be a conservative Jacobian for $F$, then there is a residual $R$ such that
\begin{align*}
    J_F\subset \jac^c F +R.
\end{align*}
\end{appprop}

\begin{proof}
We have obviously the inclusion 
$$J_F\subset \jac^c F +(J_F-\jac^c F),$$ 
so it suffices to remark that $(J_F-\jac^c F)$ is residual due to the conservativity properties of both $J_F$ and $\jac^c F$.
\end{proof}

\begin{apptheorem}{\ref{th:implicitMainTheorem}}[Implicit differentiation]
Let $F:\R^n\times\R^m\to \R^m$ be path differentiable on $\mcU\times\mcV\subset\R^n\times \R^m$ an open set and $G:\mcU\to\mcV$ a locally Lipschitz function such that, for each $x\in \mcU$,
\nnewq{\label{eq:Gsolvesapp}
F(x,G(x))=0.
}
Furthermore, assume that for each $x\in \mcU$, for each $[A\ B]\in \cmap_F(x,G(x))$, the matrix $B$ is invertible where $\cmap_F$ is a conservative Jacobian for $F$. Then, $G:\mcU\to\mcV$ is path differentiable with conservative Jacobian given, for each $x\in\mcU$, by
\newq{
\cmap_{G} \colon x\rightrightarrows \left\{-B^{-1}A : [A\ B]\in \cmap_F(x, G(x))\right\}.
}
\end{apptheorem}
\begin{proof}
Let $\gamma:[0,1]\to \mcU$ be absolutely continuous, then the composition $G\circ \gamma$ is also absolutely continuous since $G$ is locally Lipschitz. By \eqref{eq:Gsolvesapp} we have, for all $t\in[0,1]$,
\newq{
F(\gamma(t),G(t))) = 0
}
which we can differentiate almost everywhere; for almost every $t\in[0,1]$, for any $[A\ B]\in \cmap_F(\gamma(t),G(\gamma(t)))$,
\newq{
[A\ B]\begin{bmatrix}\dgamma(t)\\ \frac{\mathrm{d}}{\mathrm{d} t}G(\gamma(t))\end{bmatrix}=0\implies -A\dgamma(t)=B\frac{\mathrm{d}}{\mathrm{d}t}G(\gamma(t)).
}
Since $B$ is assumed to be invertible, we have, for almost every $t\in[0,1]$,
\newq{
-B^{-1}A\dgamma(t) = \frac{\mathrm{d}}{\mathrm{d}t} G(\gamma(t)).
}
The set-valued mapping $\cmap_G \colon x\rightrightarrows  \left\{-B^{-1}A: [A\ B]\in \cmap_F(x,G(x))\right\}$ is nonempty, locally bounded, and has a closed graph for each $x\in \mcU$ since $\cmap_F(x,G(x))$ is a conservative Jacobian and $B$ is invertible . We conclude that $G$ is path differentiable on $\mcU$ with conservative Jacobian $\cmap_G$.
\end{proof}

\begin{appcor}{\ref{cor:implicitFun}}[Path differentiable implicit function theorem]
Let $F:\R^n\times\R^m\to \R^m$ be path differentiable with conservative Jacobian $J_F$. 
Let $(\hat{x},\hat{y})\in\R^n\times \R^m$ be such that $F(\hat{x},\hat{y})=0$. Assume that $\cmap_F(\hat{x},\hat{y})$ is convex and that, for each $[A\ B]\in \cmap_F(\hat{x},\hat{y})$, the matrix $B$ is invertible. Then, there exists an open neighborhood $\mcU\times\mcV\subset\R^n\times\R^m$ of $\para{\hat{x},\hat{y}}$ and a path differentiable function $G:\mcU\to\mcV$ such that the conclusion of Theorem~\ref{th:implicitMainTheorem} holds.
\end{appcor}

\begin{proof}
Since $\cmap_F(\hat{x},\hat{y})$ is convex, it follows from Theorem~\ref{th:incluseionClarke} that $\cjac F(\hat{x},\hat{y})\subset \cmap_F(\hat{x},\hat{y})$ and thus, for any $[A\ B]\in \cjac F(\hat{x},\hat{y})$, $B$ is invertible, i.e., the conditions to apply \cite[7.1 Corollary]{clarke1983optimization} to $F$ are satisfied. Therefore there exists an open neighborhood $\mcU_1\times\mcV_1\subset\R^n\times\R^m$ of $\para{\hat{x},\hat{y}}$ and a locally Lipschitz function $G:\mcU_1\to\mcV_1$ such that, for all $x\in\mcU_1$,
\newq{
F(x,G(x)) = 0.
}
By the continuity of the determinant and the fact that $\cmap_F$ has a closed graph, there exists an open neighborhood $\mcU_2\times\mcV_2\subset\R^n\times\R^m$ of $(\hat{x},\hat{y})$ such that, for all $(x,y)\in \mcU_2\times\mcV_2$, for all $[A\ B]\in J_F(x, y)$, the matrix $B$ is invertible. Let $\mcU\times\mcV:= (\mcU_1\cap\mcU_2)\times (\mcV_1\cap\mcV_2)$, which is an open neighborhood of $\para{\hat{x},\hat{y}}$. Then the requirements of Theorem~\ref{th:implicitMainTheorem} are met for $F$, $\cmap_F$, and $G$ on $\mcU\times\mcV$ and the desired claims follow.
\end{proof}

\begin{appcor}{\ref{cor:inverse function}}[Path differentiable inverse function theorem]
Let $\mcU$ and $\mcV$ be open neighborhoods of $0$ in $\R^n$ and $\Phi:\mcU \to \mcV$ path differentiable  with $\Phi(0)=0$. Assume that $\Phi$ has a conservative Jacobian $\cmap_\Phi$ such that $\cmap_\Phi(0)$ contains only invertible matrices. Then, locally, $\Phi$ has a path differentiable inverse $\Psi$ with a conservative Jacobian given by
$$\cmap_\Psi(y)=\left\{A^{-1}: A\in J_{\Phi}(\Psi(y))\right\}.$$
\end{appcor}
\begin{proof} Consider the function $F(x,y)=x-\Phi(y)$ and observe that it satisfies the assumptions of Corollary \ref{cor:implicitFun}, so that we obtain a function $G$ which is exactly the desired inverse.
\end{proof}

\medskip
 It is tempting to think that Corollary~\ref{cor:inverse function} should come with a formula of the type
$$\cjac\Psi(z)=[\cjac \Phi(\Psi(z))]^{-1},$$
for all $z$ in a neighborhood of $0$. This happens to be false, making the use of the notion of conservativity necessary to catpure the artifacts resulting from application of ordinary calculus rules to nonsmooth inverse functions. Note that since the inverse function theorem is a special case of the implicit function theorem, this also rules out a Clarke calculus for implicit functions. 

\begin{example}[\bf Counterexample to a potential  ``Clarke implicit differential calculus'']
\label{ex:counterExClarkeInverse}
\normalfont
We follow the example given by Clarke \cite[Remark 7.1.2]{clarke1983optimization}. Consider the  mapping $\Phi:\R^2\to \R^2$ given by 
$$\Phi(x,y)=\left(|x|+y,2x+|y|\right).$$
It is locally Lipschitz and semialgebraic and thus path differentiable with its Clarke Jacobian a conservative Jacobian. We have the following explicit piecewise linear representation
\begin{align*}
    \Phi(x,y)= 
    \begin{cases}
        (x + y, 2x + y) \qquad &\text{ if }\ x \geq 0 \mbox{ and } y \geq 0,\\
        (x + y, 2x - y) \qquad &\text{ if }\ x \geq 0 \mbox{ and } y \leq 0,\\
        (-x + y, 2x - y) \qquad &\text{ if }\ x \leq 0 \mbox{ and } y \leq 0, \\
        (-x + y, 2x + y) \qquad &\text{ if }\ x \leq 0 \mbox{ and } y \geq 0
    \end{cases}
\end{align*}
from which we deduce that the Clarke Jacobian of $\Phi$ has the following structure
\begin{align*}
& \cjac \Phi(0)=\conv \left\{\begin{bmatrix}
    1 & 1\\
    2 & 1\\
\end{bmatrix},
\begin{bmatrix}
    1 & 1\\
    2 & -1\\
\end{bmatrix},
\begin{bmatrix}
    -1 & 1\\
    2 & -1\\
\end{bmatrix},
\begin{bmatrix}
    -1 & 1\\
    2 & 1\\
\end{bmatrix}\right\}
\end{align*}
where the matrices correspond to linear maps in the explicit definition of $\Phi$. Therefore $\cjac \Phi(0)$ is an affine set whose dimension is $2$. In addition, it contains only invertible matrices \cite[Remark 7.1.2]{clarke1983optimization}. We will use the following explicit matrix inverses:
\begin{align*}
\begin{bmatrix}
    1 & 1\\
    2 & 1\\
\end{bmatrix}^{-1}=
\begin{bmatrix}
    -1 & 1\\
    2 & -1\\
\end{bmatrix}, \quad
\begin{bmatrix}
    1 & 1\\
    2 & -1\\
\end{bmatrix}^{-1} = \frac{1}{3} 
\begin{bmatrix}
    1 & 1\\
    2 & -1\\
\end{bmatrix}, \quad
\begin{bmatrix}
    -1 & 1\\
    2 & 1\\
\end{bmatrix}^{-1} = \frac{1}{3}
\begin{bmatrix}
    -1 & 1\\
    2 & 1\\
\end{bmatrix}.
\end{align*}

\begin{figure}[t]
    \centering
    \includegraphics[width=0.5\textwidth]{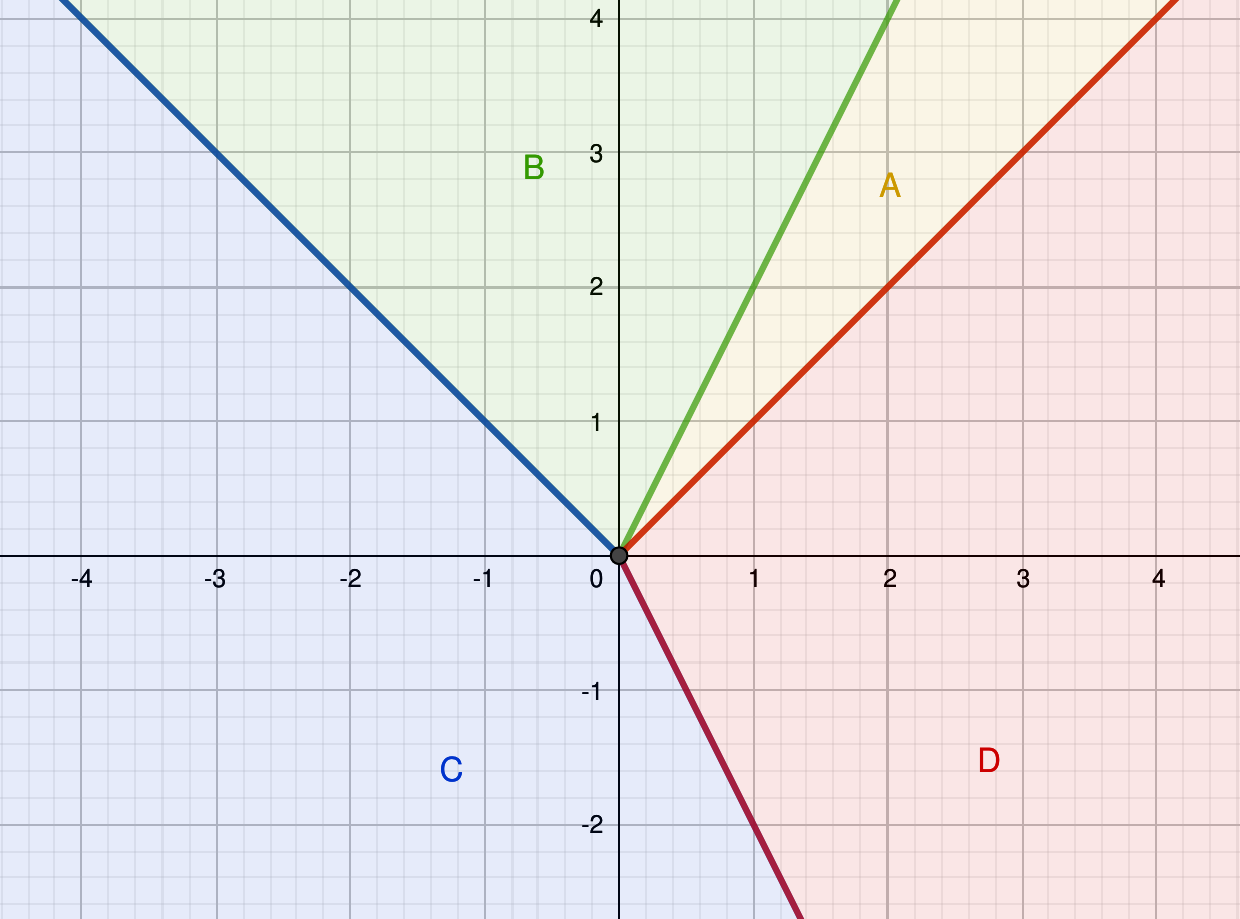}
    \caption{\footnotesize Illustration of the four different sets in the explicit piecewise affine representation of $\Psi = \Phi^{-1}$.}
    \label{fig:illustrCounterEx}
\end{figure}

Using the above, one can verify that $\Phi$ is a homeomorphism whose inverse is also piecewise linear. We set $\Psi = \Phi^{-1}$; it is  given by 
\begin{alignat*}{2}
&\Psi(u,v)=\left(v-u,2u-v\right) &&\quad\mbox{for }(u,v)\in A,\\
&\Psi(u,v)=\frac{1}{3}\left(u+v,2u-v\right) &&\quad\mbox{for }(u,v)\in B,\\
&\Psi(u,v)=(u+v,2u+v) &&\quad\mbox{for }(u,v)\in C,\\
&\Psi(u,v)=\frac{1}{3}  \left(v-u,2u+v\right) &&\quad\mbox{for }(u,v)\in D,
\end{alignat*}
where the subsets $A,B,C,D$ form a ``partition''\footnote{Each piece having two half lines in common with other pieces.} of $\R^2$
\begin{align*}
& A=\left\{(u,v)\in\R^2: v-u\geq0, 2u-v\geq 0\right\}&(\text{corresponding to } x \geq 0, y \geq 0),\\
& B=\left\{(u,v)\in\R^2: u+v\geq0, 2u-v\leq 0\right\}&(\text{corresponding to } x \geq 0, y \leq 0),\\
& C=\left\{(u,v)\in\R^2: u+v\leq0, 2u+v\leq 0\right\}&(\text{corresponding to } x \leq 0, y \leq 0),\\
& D=\left\{(u,v)\in\R^2: v-u\leq0, 2u+v\geq 0\right\}&(\text{corresponding to } x \leq 0, y \geq 0).
\end{align*}
A graphical representation of these sets is given in Figure \ref{fig:illustrCounterEx}.

From this explicit piecewise linear representation of $\Psi$, we deduce that its Clarke Jacobian at $0$ is the following
\begin{align*}
& \cjac \Psi(0)=\conv \left\{\begin{bmatrix}
    1 & 1\\
    2 & 1\\
\end{bmatrix},
\begin{bmatrix}
    -1 & 1\\
    2 & -1\\
\end{bmatrix},
\frac{1}{3}
\begin{bmatrix}
    -1 & 1\\
    2 & 1\\
\end{bmatrix},
\frac{1}{3}
\begin{bmatrix}
    1 & 1\\
    2 & -1\\
\end{bmatrix}\right\}.
\end{align*}
For a given subset of linear space we denote by $\aff\, F$ the affine span of $F$. It is easy to see that 
$\dim \aff [\,\cjac \Phi(0)]=2 $ while $\dim \aff\, [\cjac \Psi(0)]=3$. More concretely, vectorialize the set $\cjac \Psi(0)$ at $M=\frac{1}{3} \begin{bmatrix}
    1 & 1\\
    2 & -1\\
\end{bmatrix}$ by considering the matrices given by 
\begin{align*}
\begin{bmatrix}
    1 & 1\\
    2 & 1\\
\end{bmatrix}-M,\quad
\begin{bmatrix}
    -1 & 1\\
    2 & -1\\
\end{bmatrix}-M,\quad
\frac{1}{3}
\begin{bmatrix}
    -1 & 1\\
    2 & 1\\
\end{bmatrix}-M
\end{align*}
that is 
\begin{align*}
\frac{1}{3}
\begin{bmatrix}
    2 & 2\\
    4 & 4\\
\end{bmatrix}
,\quad
\frac{1}{3}
\begin{bmatrix}
    -4 & 2\\
    4 & -2\\
\end{bmatrix},
\quad
\frac{1}{3}
\begin{bmatrix}
    -2 & 0\\
    0 & 2\\
\end{bmatrix}.
\end{align*}
These matrices are independent so that $\cjac \Psi(0)$ is an affine set whose dimension is $3$.

Matrix inversion is a semialgebraic diffeomorphism (when restricted to invertible matrices) so it preserves dimension. For this reason the set $[\cjac \Psi(0)]^{-1} = \{M^{-1}, M \in \cjac \Psi(0)\}$ is a semialgebraic set of dimension $3$, and we have 
\begin{align}
   [\cjac \Psi(0)]^{-1}\not\subset[\cjac \Phi(0)].\label{pasinc}
\end{align}
However, we have shown that   $z\mapsto [\cjac \Psi(\Phi(z))]^{-1}$ is a conservative Jacobian. This example excludes the possibility of a simple inverse (implicit) function theorem with a ``Clarke Jacobian calculus'' and illustrates the requirement for a more flexible notion (conservativity) when using calculus rules in an implicit function (or inverse function) context.

\end{example}

\paragraph{The Lipschitz definable implicit and inverse function theorems.}

In the definable (e.g. semialgebraic case) our results have a remarkably simple expression that we give below.
\begin{theorem}[Lipschitz definable inverse function theorem]\label{th:invdef}
Let $\mcU$ and $\mcV$ be two open neighborhoods of $0$ in $\R^n$ and $\Phi:\mcU \to \mcV$ a locally Lipschitz definable mapping with $\Phi(0)=0$. Assume that $\Phi$ has a conservative Jacobian $J_\Phi$ such that $J_\Phi(0)$ contains only invertible matrices. Then, locally, $\Phi$ has locally Lipschitz definable inverse $\Psi$ with a conservative Jacobian given by
$$J_\Psi(y)=\left\{A^{-1}: A\in J_{\Phi}(\Psi(y))\right\}.$$
\end{theorem}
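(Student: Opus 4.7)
My plan is to reduce Theorem~\ref{th:invdef} to two ingredients already available in the paper: Corollary~\ref{cor:inverse function} for the conservative inverse function theorem and formula, and the o-minimal machinery to upgrade from locally Lipschitz to locally Lipschitz \emph{definable}.

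First, I would set up path differentiability. Since $\Phi$ is locally Lipschitz definable, item (b) of the examples of path differentiable functions in Section~\ref{sec:conservativeJacobians} tells us that $\Phi$ is path differentiable and, in particular, admits $\cjac \Phi$ as a conservative Jacobian. Combining this with the given conservative Jacobian $J_\Phi$ (and, if a convex-valued conservative Jacobian is needed for the application of Corollary~\ref{cor:inverse function}, passing to the pointwise convexification $\conv J_\Phi$, which is still conservative as noted after \eqref{eq:chainpath}), the hypothesis that $J_\Phi(0)$ contains only invertible matrices lets us invoke Corollary~\ref{cor:inverse function} at $0$. This yields, on some open neighborhoods $\mcU'\subset\mcU$ and $\mcV'\subset\mcV$ of $0$, a locally Lipschitz path differentiable inverse $\Psi:\mcV'\to\mcU'$ of $\Phi$ with conservative Jacobian $J_\Psi(y)=\{A^{-1}:A\in J_\Phi(\Psi(y))\}$.

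Second, I would upgrade regularity of $\Psi$ from locally Lipschitz to locally Lipschitz definable. The key observation is that the graph of $\Psi$ coincides locally with
\[
\{(y,x)\in \mcV'\times\mcU': \Phi(x)=y\},
\]
which is a definable subset of $\R^n\times\R^n$ since $\Phi$ is definable and $\mcU',\mcV'$ can be chosen definable (e.g., as open balls). A set that is locally the graph of a single-valued continuous function yields a definable function by the stability axioms for o-minimal structures (see \cite{van1996geometric,coste2000introduction}). Local Lipschitz continuity of $\Psi$ was already obtained from Corollary~\ref{cor:inverse function}, so $\Psi$ is locally Lipschitz definable, and the conservative Jacobian formula is the one delivered by that corollary.

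The main obstacle I expect is reconciling the invertibility hypothesis on $J_\Phi(0)$ with the convexity conditions implicit in the proof of Corollary~\ref{cor:inverse function} (through Corollary~\ref{cor:implicitFun}): convex combinations of invertible matrices are not automatically invertible, so passing to $\conv J_\Phi$ may spoil the hypothesis. To handle this, I would argue in the definable setting that one can shrink $\mcU,\mcV$ around $0$ so that $J_\Phi$ remains valued in a compact set of matrices on which invertibility is stable under convexification and matrix inversion is continuous; alternatively, one can use the classical Clarke inverse function theorem on the definable function $\Phi$ to establish existence of a locally Lipschitz inverse directly, then plug this into the chain rule for conservative Jacobians (Proposition~\ref{prop:jacobianComposition}) applied to the identity $\Phi\circ\Psi = \Id$ to identify the set $\{A^{-1}:A\in J_\Phi(\Psi(y))\}$ as a conservative Jacobian for $\Psi$. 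Either route yields the statement, with the o-minimal step remaining straightforward.
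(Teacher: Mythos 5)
Your proposal follows essentially the same route as the paper's (one-line) proof: use path differentiability of locally Lipschitz definable maps to put yourself in the setting of Corollary~\ref{cor:inverse function}, obtain the locally Lipschitz inverse $\Psi$ together with the conservative-Jacobian formula, and then observe that the graph of $\Psi$ is $\{(y,x):\Phi(x)=y\}$ intersected with a definable box, hence first-order definable, so $\Psi$ is locally Lipschitz \emph{definable}. That is precisely the argument the paper records. Your extra worry about convexity is a genuine subtlety that the paper glosses over, but neither of your proposed workarounds actually closes the gap: shrinking $\mcU,\mcV$ does not change the set $J_\Phi(0)$, so it cannot make $\conv J_\Phi(0)$ avoid singular matrices; and invoking Clarke's inverse function theorem directly requires knowing that $\jac^c\Phi(0)$ contains only invertible matrices, for which the only tool in the paper is the inclusion $\jac^c\Phi(0)\subset J_\Phi(0)$ from Theorem~\ref{th:incluseionClarke}, which itself assumes $J_\Phi$ convex-valued. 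The correct reading (consistent with the proof of Corollary~\ref{cor:inverse function} which routes through Corollary~\ref{cor:implicitFun}) is that $J_\Phi$ is implicitly taken convex-valued here, e.g.\ $J_\Phi=\jac^c\Phi$ or the invertibility hypothesis is stated after convexification; under that reading your proof and the paper's coincide and are both complete.
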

\begin{proof}
It suffices to use the fact that definable mappings are path differentiable, see \cite{bolte2020conservative}, and that the the graph of $\Psi$ is given by a first-order formula.
\end{proof}

The same type of arguments gives:

\begin{theorem}[Lipschitz definable implicit function theorem]\label{th:inpdef}
Let $F:\R^n\times\R^m\to \R^m$ be locally Lipchitz and definable with conservative Jacobian $J_F$. 
Let $(\hat{x},\hat{y})\in\R^n\times \R^m$ be such that $F(\hat{x},\hat{y})=0$. Assume that $\cmap_F(\hat{x},\hat{y})$ is convex and that, for each $[A\ B]\in \cmap_F(\hat{x},\hat{y})$, the matrix $B$ is invertible. Then, there exists an open neighborhood $\mcU\times\mcV\subset\R^n\times\R^m$ of $\para{\hat{x},\hat{y}}$ and a locally Lipschitz definable function $G:\mcU\to\mcV$ such that, for all $x\in\mcU$,
\newq{
F(x,G(x)) = 0.
}
Moreover, for each $x\in\mcU$, the mapping
$
\cmap_{G}\colon x \rightrightarrows \left\{-B^{-1}A : [A\ B]\in \cmap_F(x, G(x))\right\}
$
is conservative for $G$.

\end{theorem}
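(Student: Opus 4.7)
\textbf{Proof plan for Theorem~\ref{th:inpdef}.} The plan is to piggyback on the path-differentiable implicit function theorem (Corollary~\ref{cor:implicitFun}) for the existence, local Lipschitz continuity, and conservative-Jacobian formula, and then to upgrade the regularity to definability by a first-order-formula argument in the spirit of the proof of Theorem~\ref{th:invdef}. Since any locally Lipschitz definable function is path differentiable (see \cite{bolte2020conservative}), definability will give us the remaining structural conclusion at no extra cost.

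First, I would observe that the hypotheses of Corollary~\ref{cor:implicitFun} are satisfied: $F$ is definable and locally Lipschitz, hence path differentiable with conservative Jacobian $J_F$, and the invertibility/convexity assumptions at $(\hat x,\hat y)$ are assumed. Applying that corollary yields an open neighborhood $\mcU_0 \times \mcV_0 \subset \R^n \times \R^m$ of $(\hat x,\hat y)$ together with a locally Lipschitz $G : \mcU_0 \to \mcV_0$ such that $F(x,G(x))=0$ for all $x \in \mcU_0$, and such that the set-valued mapping
$$
x \rightrightarrows \{ -B^{-1} A : [A\ B] \in J_F(x,G(x)) \}
$$
is a conservative Jacobian for $G$ on $\mcU_0$. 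This takes care of every conclusion of the theorem except definability of $G$.

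Next I would upgrade the ambient neighborhood to a definable one while preserving uniqueness of the implicit solution. Using continuity of $G$ at $\hat x$ together with openness of $\mcV_0$, pick definable open boxes (say, open balls of suitable rational radius) $\mcU \subset \mcU_0$ and $\mcV \subset \mcV_0$ with $(\hat x,\hat y) \in \mcU \times \mcV$ such that $G(\mcU) \subset \mcV$, and such that for every $x \in \mcU$ the equation $F(x,y)=0$ has $G(x)$ as its unique solution in $\mcV$ (this uniqueness is already guaranteed on $\mcU_0 \times \mcV_0$ by Clarke's implicit function theorem invoked in Corollary~\ref{cor:implicitFun}). Then the graph of $G$ restricted to $\mcU$ admits the first-order description
$$
\mathrm{graph}(G|_{\mcU}) \;=\; \{ (x,y) \in \mcU \times \mcV : F(x,y) = 0 \},
$$
which is a definable set since $F$, $\mcU$, and $\mcV$ are all definable. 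By projection, $G|_{\mcU}$ is a definable function, hence path differentiable, and the formula for its conservative Jacobian is the one already provided by Corollary~\ref{cor:implicitFun}.

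The main point requiring care is the passage to a \emph{definable} neighborhood on which uniqueness still holds; everything else is a direct invocation of results from the main text. Once $\mcU$ and $\mcV$ are chosen to be definable (which is straightforward because open balls of rational radius belong to every o-minimal structure containing the semialgebraic sets), the graph description above is a first-order formula in $F$, and the remaining conclusions reduce to bookkeeping. Thus the only real obstacle is separating the ``definability'' step from the ``existence plus Lipschitz regularity'' step, which is exactly what the above plan achieves.
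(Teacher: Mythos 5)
Your proposal is correct and follows essentially the same strategy as the paper: invoke Corollary~\ref{cor:implicitFun} for existence, Lipschitz regularity, and the conservative Jacobian formula, then obtain definability of $G$ from the first-order description of its graph via $F$ (combined with definability implying path differentiability). Your version is more explicit about shrinking to definable neighborhoods $\mcU\times\mcV$ on which uniqueness persists, a detail the paper leaves implicit, but the underlying argument is the same.
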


\section{Results from Section~\ref{sec:examplesML}}\label{app:examplesML}

\subsection{Monotone operator deep equilibrium networks}
\label{app:MONDEQ}
\begin{appprop}{\ref{prop:DEQ}}[Path differentiation through monotone layers]
    Assume that $\cmap_\sigma$ is convex-valued and that, for all $J \in \cmap_\sigma(Wz(W,b)+b)$, the matrix $(\Id_m - JW)$ is invertible.  
Consider a loss-like function $\ell \colon \RR^m \to \RR$ with conservative gradient $D_\ell \colon \RR^m \rightrightarrows \RR^m$, then  $g:(W,z) \mapsto \ell(z(W,b))$ is path differentiable and has a conservative gradient $D_g$ defined through
    \begin{align*}
        D_g \colon (W,b) \rightrightarrows \left\{ J^T(\Id_m - JW)^{-T}v z^T,  J^T(\Id_m - JW)^{-T}v): \,\, J \in J_\sigma(Wz + b),\, v \in D_\ell(z)  \right\}.
    \end{align*}
\end{appprop}

\begin{proof}
    The quantity $z(W,b)$ is defined implicitly by the relation
    \begin{align}
        z(W,b) - \sigma(W z(W,b) + b) = 0.
        \label{eq:implicitz}
    \end{align}
    
    We set $M =m +  m + m \times m$ and represent the pair $(W,b) \in \RR^{m \times m} \times \RR^m$ as $(w_1,\ldots,w_m,b) \in \RR^{M - m}$ where $w_i \in \RR^m$ is the $i$-th row of $W$ for $i \in \{1,\ldots,m\}$. We denote by $\mathcal{B}\colon \RR^M \to \RR^m$ the bilinear map defined as
    \begin{align*}
        \mathcal{B}(w_1,\ldots, w_m, b, z) := Wz + b
    \end{align*}
    so that $\mathcal{B}$ is infinitely differentiable. Equation~\eqref{eq:implicitz} is then equivalent to 
    \begin{equation*}
        z - (\sigma \circ \mathcal{B})(w_1,\ldots, w_m, b, z) = 0.
    \end{equation*}
    We denote by $F$ the mapping 
    \begin{align*}
        F \colon (w_1,\ldots, w_m, b, z) \mapsto z - (\sigma \circ \mathcal{B})(w_1,\ldots, w_m, b, z).
    \end{align*}
     For $i \in \{1 \ldots m\}$, denote by $Z_i \in \RR^{m \times m}$ the matrix whose $i$-th row is $z$, and remaining rows are null. The Jacobian of $\mathcal{B}$, $\jac \mathcal{B} \colon \RR^M \to \RR^{m \times M}$ is  as follows:
    \begin{align*}
        \jac \mathcal{B}(w_1,\ldots, w_m, b, z) = \left[Z_1\quad \ldots \quad Z_m \quad\Id_m \quad W \right]
    \end{align*}
    where $[A\,B]$ is used to denote the columnwise concatenation of matrices $A$ and $B$. By hypothesis, we have a conservative Jacobian for $\sigma$, $J_\sigma$. Conservative Jacobians may be composed as usual Jacobians \cite[Lemma 5]{bolte2020conservative}. As $\mathcal{B}$ is continuously differentiable, $\jac \mathcal{B}$ is also a conservative Jacobian for $\mathcal{B}$. Therefore, we have the following conservative Jacobian for $F$, 
    \begin{align*}
        J_F (w_1,\ldots, w_m, b, z) \rightrightarrows \left\{ \left[-J Z_1 \quad\ldots\quad -J Z_m\quad -J\quad \Id_m-J W \right],\, J \in J_\sigma(Wz + b)  \right\}.
    \end{align*}
    Finally, by hypothesis, for any $W,b$, and $z$ such that $F(W,b,z) = 0$ and any $J\in J_\sigma(Wz + b)$, the matrix $\Id_m-J W$ is invertible. Therefore, Theorem \ref{th:implicitMainTheorem} applies and, setting $\tilde{M} = m\times m + m = M - m$, the set-valued mapping
    \begin{align*}
        J_z \colon \RR^{\tilde{M}} &\rightrightarrows \RR^{m \times \tilde{M}}\\
        (w_1,\ldots, w_m, b) &\rightrightarrows \left\{ (\Id_m-J W)^{-1} J \left[Z_1 \quad\ldots\quad Z_m\quad \Id_m \right],\, J \in J_\sigma(Wz + b)  \right\}
    \end{align*}
    is conservative for $(W,b) \mapsto z(W,b)$ as defined in \eqref{eq:implicitz}. We denote by $Z \in \RR^{m \times \tilde{M}}$ the matrix $\left[Z_1 \ldots Z_m \: \Id_m \right]$ appearing in the definition of $J_z$. Given the loss function $\ell$, the mapping $J_\ell \colon z \mapsto \{v^T,\,v \in D_\ell(z)\}$ is a conservative Jacobian for $\ell$ \cite[Lemma 3]{bolte2020conservative} and therefore, the set-valued mapping
    \begin{align*}
        J_g \colon \RR^{\tilde{M}} &\rightrightarrows \RR^{1 \times \tilde{M}}\\
        (w_1,\ldots, w_m, b) &\rightrightarrows \left\{ v^T (\Id_m-J W)^{-1} J Z,\, J \in J_\sigma(Wz + b), \, v \in D_\ell(z(W,b))  \right\}
    \end{align*}
    is a conservative Jacobian for $g \colon (W,b) \mapsto \ell(z(W,b))$. Using \cite[Lemma 4]{bolte2020conservative}, we obtain a conservative gradient field for $g$ by a simple transposition as follows
    \begin{align*}
        D_g \colon (w_1,\ldots, w_m, b) \rightrightarrows \left\{ Z^T J^T  (\Id_m-J W)^{-T} v,\, J \in J_\sigma(Wz + b), \, v \in D_\ell(z(W,b))  \right\}.
    \end{align*}
    We now identify the terms by block computation; recall that $Z = \left[Z_1\, \ldots\, Z_m\, \Id_m \right]$ and that $Z_i \in \RR^{m \times m}$ is the matrix whose $i$-th row is $z$ with remaining rows null for each $i\in \{1, \ldots, m\}$. The term associated to $b$ corresponds to the last $m \times m$ block in $Z$, it is indeed of the form $J^T  (\Id_m-J W)^{-T} v$. Similarly, for each $i \in\{1,\ldots, m\}$, the term associated to $w_i$ is of the form $Z_i^T J^T  (\Id_m-J W)^{-T} v$. For any $a \in \RR^m$ and $i \in \{1,\ldots,m\}$, we have $Z_i^T a = a_i z$ where $a_i$ is the $i$-th coordinate of $a$ and $z$ corresponds to the $i$-th row of $Z_i^T$. So the component associated to $w_i$ in $D_g$ is of the form $[ J^T  (\Id_m-J W)^{-T} v]_i z$, where $[\cdot]_i$ denotes the $i$-th coordinate. Since $w_i$ denotes the $i$-th row of $W$, rearranging this expression in matrix format provides a term of the form $J^T (\Id_m-J W)^{-T} v z^T$ for the $W$ component. This concludes the proof.
\end{proof}

\subsection{Optimization layers: the conic program case}
\label{app:cone}
Let us first expand on the link between zeros of the residual map and KKT solutions. We provide a simplified view of \cite{busseti2019solution,agrawal2019differentiating}, ignoring cases of infeasibility and unboundedness. Note that this corresponds to enforcing $w = 1$ as done in \cite{Agrawal2019differentiable,agrawal2019differentiating}.

The following is due to Moreau \cite{moreau1962decomposition}. Recall that the polar of a closed convex cone $\mathcal{K} \subset \RR^m$ is given by $\mathcal{K}^\circ = \left\{x \in \RR^m,\, y^T x \leq 0,\, \forall y \in \mathcal{K} \right\}$, in which case $(\mathcal{K}^\circ)^\circ = \mathcal{K}$ and the dual cone satisfies $\mathcal{K}^* = - \mathcal{K}^\circ$.
\begin{proposition}
    Let $s,y,v\in\RR^m$; the following are equivalent
    \begin{itemize}
        \item $v = s + y$, $s \in \mathcal{K}$, $y \in \mathcal{K}^\circ$, $s^Ty = 0$.
        \item $s = P_{\mathcal{K}} (v)$, $y = P_{\mathcal{K}^\circ} (v)$.
    \end{itemize}
    \label{prop:moreauCone}
\end{proposition}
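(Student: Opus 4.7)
The plan is to prove the two implications separately, relying on the variational characterization of the projection onto a nonempty closed convex set: for $C\subset\RR^m$ closed and convex, $p = P_{C}(v)$ if and only if $p\in C$ and $\langle v-p,\, u-p\rangle \leq 0$ for all $u\in C$. Throughout, I will use that $\mathcal{K}$ and $\mathcal{K}^\circ$ are closed convex cones with $(\mathcal{K}^\circ)^\circ=\mathcal{K}$.

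For the implication from the second bullet to the first, assume $s=P_{\mathcal{K}}(v)$ and $y=P_{\mathcal{K}^\circ}(v)$. I would first apply the variational inequality for $s$: since $\mathcal{K}$ is a cone, plugging in $u=2s\in\mathcal{K}$ and $u=0\in\mathcal{K}$ yields $\langle v-s,s\rangle=0$ and hence $\langle v-s,u\rangle\leq 0$ for all $u\in\mathcal{K}$, i.e., $v-s\in\mathcal{K}^\circ$. It remains to identify $v-s$ with $y=P_{\mathcal{K}^\circ}(v)$, which I would do by verifying the variational inequality for the projection onto $\mathcal{K}^\circ$: for every $w\in\mathcal{K}^\circ$, $\langle v-(v-s),w-(v-s)\rangle = \langle s,w\rangle-\langle s,v-s\rangle=\langle s,w\rangle\leq 0$ because $s\in\mathcal{K}=(\mathcal{K}^\circ)^\circ$. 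This gives $y=v-s$, and consequently $v=s+y$, $s\in\mathcal{K}$, $y\in\mathcal{K}^\circ$, and $s^Ty=\langle s,v-s\rangle=0$.

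For the converse, assume $v=s+y$ with $s\in\mathcal{K}$, $y\in\mathcal{K}^\circ$, and $s^Ty=0$. The key computation is, for an arbitrary $u\in\mathcal{K}$,
\[
\|v-u\|^2 = \|s-u+y\|^2 = \|s-u\|^2 + 2\langle y, s-u\rangle + \|y\|^2.
\]
Using $s^Ty=0$ and the defining property $\langle y,u\rangle\leq 0$ for $y\in\mathcal{K}^\circ$ and $u\in\mathcal{K}$, we get $\langle y, s-u\rangle = -\langle y,u\rangle \geq 0$, so
\[
\|v-u\|^2 \geq \|s-u\|^2 + \|y\|^2 \geq \|y\|^2 = \|v-s\|^2,
\]
which exactly characterizes $s$ as the minimizer, i.e., $s=P_{\mathcal{K}}(v)$. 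The symmetric argument, swapping the roles of $\mathcal{K}$ and $\mathcal{K}^\circ$ and using $(\mathcal{K}^\circ)^\circ=\mathcal{K}$, yields $y=P_{\mathcal{K}^\circ}(v)$.

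There is no real obstacle here, since this is the classical Moreau decomposition for closed convex cones; the only delicate step is the careful use of the cone structure (taking $u=0$ and $u=2s$ in the variational inequality) to deduce both $\langle v-s,s\rangle=0$ and $v-s\in\mathcal{K}^\circ$ from a single projection identity.
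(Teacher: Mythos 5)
Your proof is correct and complete, but note that the paper does not actually prove this proposition: it simply cites Moreau's 1962 paper for the result and moves on. You have therefore supplied the standard textbook argument for the Moreau cone decomposition, and it is sound. The forward direction correctly extracts $\langle v-s,s\rangle=0$ and $v-s\in\mathcal{K}^\circ$ from the variational inequality by the cone trick (testing at $u=0$ and $u=2s$), then verifies $v-s=P_{\mathcal{K}^\circ}(v)$ directly; the converse is a clean distance comparison using $s^Ty=0$ and $\langle y,u\rangle\le 0$. One small remark: the symmetric step at the end deserves a half-line of explicit justification that the set of hypotheses in the first bullet is literally invariant under swapping $(s,\mathcal{K})$ with $(y,\mathcal{K}^\circ)$, using $(\mathcal{K}^\circ)^\circ=\mathcal{K}$, so that the same computation applied to the pair $(\mathcal{K}^\circ,y)$ yields $y=P_{\mathcal{K}^\circ}(v)$ — but this is purely expository and the argument is valid as written.
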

We may reformulate this equivalence as follows, using changes of signs on $y$ and $v$, noticing that $- P_{\mathcal{K}^\circ} (- \cdot) = P_{\mathcal{K}^*}(\cdot)$ since $\mathcal{K}^* = - \mathcal{K}^\circ$,
\begin{itemize}
    \item[(i)] $v = y-s$,\quad $s \in \mathcal{K}$,\quad $y \in \mathcal{K}^*$,\quad $s^Ty = 0$.
    \item[(ii)] $s = P_{\mathcal{K}^*} (v) - v$,\quad $y = P_{\mathcal{K}^*} (v)$.
\end{itemize}
Now the KKT system in $(x,y,s)$ for problem (P) and (D) can be written as follows (see, for example, \cite{busseti2019solution}),
\begin{alignat*}{2}
    A^T y + c &= 0, &&\quad y \in \mathcal{K}^* \\
    -Ax + b &= s, &&\quad s \in \mathcal{K} \\
    s^T y &= 0 && 
\end{alignat*}
which is equivalent, by setting $v = y-s$ and $u = x$, to 
\nnewq{\label{eq:lastsys}
    A^T P_{\mathcal{K}^*} (v)  + c &= 0 \\
    -Au + b &= P_{\mathcal{K}^*} (v) - v
}
The system \eqref{eq:lastsys} is equivalent to $\mathcal{N}(z,A,b,c) = 0$ with $z = (u,v)$. We have shown that $(x,y,s)$ is a KKT solution to the system if and only if $(x,y,s) = (u, P_{\mathcal{K}^*} (v), P_{\mathcal{K}^*} (v) - v) = \phi(z)$ for $z = (x, y-s)$ such that $\mathcal{N}(z,A,b,c) = 0$.

\begin{appprop}{\ref{prop:pathdiffconeprog}}[Path differentiation through cone programming layers]
Assume that $P_\mathcal{K^*}$, $\mathcal{N}$ are path differentiable,  denote respectively by $\cmap_{P_\mathcal{K^*}}$, $\cmap_\mathcal{N}$ corresponding convex-valued conservative Jacobians. Assume that for all $A,b,c \in \RR^{m\times n}\times \RR^m\times \RR^n$, $z = \nu(A,b,c) \in \RR^n \times \RR^m$ is the unique solution to $\mathcal{N}(z,A,b,c) = 0$, and that all matrices formed from the $N$ first  columns of $\cmap_\mathcal{N}(z, A, b, c)$ are invertible. 
Then, $\phi$, $\nu$, and $\sol$ are path differentiable functions with conservative Jacobians:
    \begin{align*}
&     \cmap_{\nu}(A,b,c) :=  \left\{-U^{-1}V: [U\ V]\in \cmap_{\mathcal{N}}(\nu(A,b,c),A,b,c)\right\},\\
& \cmap_\phi(z) := \begin{bmatrix}
        \Id_n & 0  \\
        0&\cmap_{P_{\mathcal{K}^*}}(v)  \\
        0 &(\cmap_{P_{\mathcal{K}^*}}(v)- \Id_m)  
    \end{bmatrix},\\
& \cmap_{\sol}(A,b,c) :=  \cmap_\phi(\nu(A,b,c)) \cmap_\nu(A,b,c).
\end{align*}
\end{appprop}
\begin{proof} 
First, the assumptions clearly ensure that $\nu$ and $\sol$ are single-valued and can be interpreted as functions such that $\sol = \phi \circ \nu$. By assumption, $\phi$ is differentiable. We will first use Corollary \ref{cor:implicitFun} to obtain a conservative Jacobian for $\nu$ and then justify the expression for $\phi$. The composition obtained for $J_\sol$ results from Proposition~\ref{prop:jacobianComposition}.

Let $A,b,c \in \RR^{m\times n}\times \RR^m\times \RR^n$, $z := (u,v) \in \RR^n \times \RR^m $ such that $\mathcal{N}(z,A,b,c) = 0$. By assumption, the submatrices formed from the first $N$ columns of $\cmap_\mathcal{N}(z,A,b,c)$ are invertible. Then applying Corollary~\ref{cor:implicitFun}, there exist open neighborhoods $\mcU \subset \R^{m \times n} \times \R^{m} \times \R^{n}$ and $\mcV \subset \R^N$ and a locally Lipschitz function $G : \mcU \rightarrow \mcV$ satisfying, for all $s \in \mcU$ $\mathcal{N}(G(s), s) = 0$ with $G$ is path differentiable. Since, by assumption, the solution $\nu(A,b,c)$ to $\mathcal{N}(\nu(A,b,c), A,b,c) = 0$ is unique, $\nu$ coincides with $G$ on $\mathcal{U}$. Thus, $\nu$ is path differentiable and a conservative Jacobian for $\nu$ is given by:
\begin{equation*}
    \cmap_{\nu}(A,b,c) = \left\{-U^{-1}V: [U\ V]\in \cmap_{\mathcal{N}}(\nu(A,b,c),A,b,c)\right\}
\end{equation*}

Let us now turn to $\phi$. Since $P_{\mathcal{K}^*}$ has for conservative Jacobian $J_{P_{\mathcal{K}^*}}$, we may construct a conservative Jacobian for the function $\phi$ as follows using \cite[Lemmas 3, 4, and 5]{bolte2020conservative}:
\begin{align*}
    \cmap_\phi(z) = 
    \begin{bmatrix}
        \Id_n  & 0 \\
        0&\cmap_{P_{\mathcal{K}^*}}(v) \\
        0 &(\cmap_{P_{\mathcal{K}^*}}(v) - \Id_m)
    \end{bmatrix}.
\end{align*}

It follows from Proposition~\ref{prop:jacobianComposition} that the composition $\sol = \phi \circ \nu$ is also path differentiable with conservative Jacobian

\begin{equation*}
   \cmap_{\sol}(A,b,c) = \cmap_\phi(\nu(A,b,c)) \cmap_\nu(A,b,c).
\end{equation*}
\end{proof}

\subsection{Hyperparameter selection for nonsmooth Lasso-type model}\label{app:HO}
\begin{appprop}{\ref{prop:HOprop}}[Conservative Jacobian for the solution mapping]
For all $\lambda\in\R$, assume $X_{\mcE}^TX_{\mcE}$ is invertible where $X_{\mcE}$ is the submatrix of $X$ formed by taking the columns indexed by $\mcE$. Then $\bhat(\lambda)$ is single-valued, path differentiable with conservative Jacobian, $\cmap_{\bhat}\para{\lambda}$,  given for all $\lambda$ as
$$\left\{
\sbrac{-e^{\lambda}\para{\Id_p - \diag\para{q}\para{\Id_p - X^TX}}^{-1} \diag\para{q}\sign\para{\bhat - X^T\para{X\bhat-y}}}\ : q \in \mathcal{M}(\lambda)\right\}
$$
where $\mathcal{M}(\lambda) \subset \RR^p$ is the set of vectors $q$ such that $q_i\in\begin{cases}\brac{1} & i\in\supp \hat \beta \\ [0,1] & i\in\mcE\setminus\supp \hat \beta \\ \brac{0} & i\not\in\mcE \end{cases}$.
\end{appprop}
\begin{proof}
Our goal is to apply Corollary~\ref{cor:implicitFun} to the path differentiable ``optimality gap'' function $F:\R\times\R^{p}\to\R^p$ defined in \eqref{eq:fixedPt}. For each $\lambda\in\R$, the invertibility of $X_{\mcE}^TX_{\mcE}$ guarantees the uniqueness of $\bhat\para{\lambda}$ (see \cite{osborne2000lasso}, \cite[Lemma 1]{mairal2012complexity}), i.e., $\bhat:\R\to\R^p$ is a function. 
Because $\norm{\cdot}{1}$ is separable, the components of the prox can be written, for any $\para{\lambda,u}\in\R\times\R^p$, for all $i\in\brac{1,\ldots,p}$, as
\newq{
[\prox_{e^{\lambda}\norm{\cdot}{1}}\para{u}]_i = \prox_{e^{\lambda}\absv{\cdot}}\para{u_i}
}
which have Clarke subdifferentials
\newq{
\partial^c\prox_{e^{\lambda}\absv{\cdot}}\colon u_i \rightrightarrows \One_{u_i,e^{\lambda}}\quad \times \begin{bmatrix} 1 \\ -\sign(u_i) \end{bmatrix}\quad\mbox{where}\quad\One_{e^{\lambda}}\para{u_i} :=  \begin{cases} 0 & \absv{u_i}< e^{\lambda}\\ [0,1] & \absv{u_i}=e^{\lambda}\\ 1 & \absv{u_i}>e^{\lambda}\end{cases}.
}
Thus a conservative Jacobian for $F$ at $(\lambda,\beta)$ is given by
\nnewq{\label{eq:consJac}
        \cmap_F\colon \para{\lambda,\beta}\rightrightarrows
            \{ [
                 \underbrace{e^{\lambda}\mathrm{diag}(q)
                \sign(\beta - X^T(X \beta - y))}_{A}
            \quad
                \underbrace{\Id_p - \mathrm{diag}(q)\para{\Id_p - X^TX}
            }_{B}] : q \in \mathcal{C}
            \}
}
with $\mathcal{C}:=\{q: q_i\in\One_{e^\lambda}\para{\beta_i - X_{i}^T\para{X\beta - y}}\}$.
Let us estimate the factors $q_i$ above in terms of the equicorrelation set $\mcE$. Recall the KKT conditions \cite{tibshirani2013lasso} for the Lasso problem; a solution $\bhat$ must satisfy
\nnewq{\label{eq:KKT}
X^T\para{y-X\bhat}=e^{\lambda}\delta\quad\quad\mbox{where}\quad\quad\delta_i\in\begin{cases}\brac{\sign\para{\bhat_i}} & i\in\supp \hat \beta \\ [-1,1] & i\not\in\supp \hat \beta \end{cases}.
}
For $i\in\supp \hat \beta $, \eqref{eq:KKT} gives
\newq{
X_i^T\para{y-X\bhat}=e^{\lambda}\sign\para{\bhat_i}&\implies \sign\para{X_i^T\para{y-X\bhat}} = \sign\para{\bhat_i}\\
&\implies \sign\para{\bhat_i} = \sign\para{\bhat_i - X_i^T\para{X\bhat-y}}\\
&\quad\quad\quad\quad\quad\quad\ \ \ = \sign\para{X_i^T\para{y-X\bhat}}.
}
Noting that $\absv{\bhat_i}>0$ and $\absv{X_i^T\para{y-X\bhat}}=e^{\lambda}$ since $i\in\supp \hat \beta \subset\mcE$,
\newq{
\absv{\bhat_i - X_i^T\para{X\bhat-y}} &= \sign\para{\bhat_i - X_i^T\para{X\bhat-y}} \para{\bhat_i - X_i^T\para{X\bhat-y}}\\
&= \sign\para{\bhat_i}\bhat_i + \sign\para{X_i^T\para{y-X\bhat}}X_i^T\para{y-X\bhat}\\
&= \underbrace{\absv{\bhat_i}}_{>0} + \underbrace{\absv{X_i^T\para{y-X\bhat}}}_{=e^{\lambda}}\\
&\implies q_i=1.
}
For $i\not\in\mcE$, $\bhat_i=0$ since $\supp \hat \beta \subset\mcE$. By \eqref{eq:KKT}, we have $\absv{X_i^T\para{y-X\bhat}}\leq e^{\lambda}$. However, since $i\not\in\mcE$, the inequality is strict
\newq{
\absv{X_i^T\para{y-X\bhat}} < e^{\lambda}
}
and can be used to solve for $q_i$
\newq{
\absv{\bhat_i-X_i^T\para{X\bhat-y}} = \absv{X_i^T\para{y-X\bhat}} < e^{\lambda} \implies q_i = 0.
}
Finally, for $i\in\mcE\setminus\supp \hat \beta $, $\bhat_i=0$ and $\absv{X_i^T\para{X\bhat-y}} = e^{\lambda}$ which gives
\newq{
\absv{\bhat_i - X_i^T\para{X\bhat-y}} = \absv{X_i^T\para{X\bhat-y}} = e^{\lambda}
}
and thus $q_i\in[0,1]$. Putting everything together we get an expression for $q_i$ in terms of $\mcE$ and $\supp \hat \beta $
\nnewq{\label{eq:qexpression}
q_i \in\begin{cases} \brac{1} & i\in\supp \hat \beta \\ [0,1] & i\in\mcE\setminus\supp \hat \beta \\ \brac{0} & i\not\in\mcE\end{cases},
}
i.e., $q\in\mathcal{M}$. We proceed to show that $B$ is invertible for all $\lambda\in\R$. Denote $Q:= \diag\para{q}$ for brevity; using the same argument of \cite[Theorem 2]{winston2020monotone} involving similarity transformations and continuity, the matrix $B$ is invertible if and only if
\newq{
\tilde{B} :=  \Id_p - Q^{1/2}\para{\Id_p - X^TX}Q^{1/2} = \Id_p - Q + Q^{1/2}X^TXQ^{1/2}
}
is invertible. Since $\tilde{B}\succeq \Id_p - Q$, it follows that $\lker \para{\tilde{B}} \subset \lker \para{\Id_p-Q}$, however $\lker \para{\Id_p-Q}$ is a subspace of $W_{\mcE}:=  \lspan\brac{e_j: j\in\mcE}$ corresponding to $q_j = 1$. Since $q_j=1\implies j\in\mcE$ by \eqref{eq:qexpression}, the restriction of $\tilde{B}$ to $\lker\para{\Id_p-Q}$ is a principal submatrix of (possibly equal to) $X_{\mcE}^TX_{\mcE}$ which is invertible by assumption. Thus $B$ is invertible and applying Corollary~\ref{cor:implicitFun} then yields the final result.
\end{proof}
\begin{remark}{\rm
Taking $q_i=1$ for all $i\in\mcE$ gives a selection of the conservative Jacobian for $\bhat$ in Proposition~\ref{prop:HOprop}, for all $j\in\brac{1,\ldots,p}$,
\newq{
[\cmap_{\bhat}\para{\lambda}]_j = -e^{\lambda}\sbrac{\para{X_{\mcE}^TX_{\mcE}}^{-1}\sign\para{X_{\mcE}^T\para{y-X\bhat}}}_j \mbox{ if } j\in\mcE, \mbox{ and } [\cmap_{\bhat}\para{\lambda}]_j =0 \mbox{ otherwise}.
}
This corresponds to the directional derivative given by LARS algorithm \cite{efron2004least}, see also \cite{mairal2012complexity}. Alternatively, taking $q_i=0$ for $i\not\in\supp\bhat$ gives, for all $j\in\brac{1,\ldots,p}$, 
$$
[\cmap_{\bhat}(\lambda)]_j=-e^{\lambda}\sbrac{(X_{\supp\bhat}^TX_{\supp\bhat}^{-1})\sign(X^T_{\supp\bhat}(y-X\bhat))}_j, \mbox{ if } j\in \supp\bhat$$
and $[\cmap_{\bhat}(\lambda)]_j =0$ otherwise. This is the weak derivative given by \cite{bertrand2020implicit}. Both of these expressions are particular selections in $J_{\hat{\beta}}$, which is the underlying conservative field. They agree if $\mcE=\supp\bhat$, which holds under qualification assumptions, see for example \cite{bertrand2021implicit} and references therein.}
\end{remark}

\section{Results from Section~\ref{sec:algorithms}}
\label{app:Theorem3}
\begin{apptheorem}{\ref{th:convergenceAlgo}}[Convergence result]
    Consider minimizing $\ell$ given in \eqref{eq:finiteSum} using algorithm \eqref{eq:sgd} under Assumption~\ref{ass:structure}. Assume furthermore the following
    \begin{itemize}
        \item\textbf{Step size:} $\sum_{k=1}^{+\infty} \alpha_k = +\infty$ and $\alpha_k = o(1/\log(k))$.
        \item\textbf{Boundedness:} there exists $M>0$, and $K \subset \RR^p$ open and bounded, such that, for all $s \in (s_{\min},s_{\max})$ and $w_0 \in \mathrm{cl}\  K$, $\|w_k\| \leq M$ almost surely.
    \end{itemize}
   For almost all $w_0 \in K$ and $s\in(s_{\min},s_{\max})$, the objective value $\ell(w_k)$ converges and all accumulation points $\bar{w}$ of $w_k$ are  Clarke-critical in the sense that $0 \in \partialc \ell(\bar{w})$.
\end{apptheorem}
\begin{proof}
We first show that if $w_0$ is taken uniformly at random on $K$ then, almost surely, all iterates $(w_k)_{k \in \NN}$ are random variables which are absolutely continuous with respect to the Lebesgue measure. This is essentially a repeating of the arguments developed in \cite{bianchi2020convergence} for constant step sizes. Assume from now on that $w_0$ is random, uniformly on $K$.

For $i\in\{1,\ldots,N\}$, denoting by $\phi(\cdot,i) \colon \RR^p \to \RR^p$ the output of backpropagation applied to $\ell_i = g_{i,L} \circ g_{i,L-1} \circ \ldots \circ g_{i,1}$, we have that $x \mapsto \phi(x,i)$ is a selection in the conservative Jacobian (actually conservative gradient) $J_i$. Therefore, using \cite[Proposition 1]{bianchi2020convergence} the sequence $(w_k)_{k \in \NN}$ is an SGD sequence in the sense of \cite[Definition 2]{bianchi2020convergence}. 

Compositions of definable functions and functions implicitly defined based on definable functions are definable. Therefore by Assumption~\ref{ass:structure}, for each $i\in\{1,\ldots,N\}$, $\ell_i$ is locally Lipschitz and definable and thus so is $\ell$. Definable functions are twice differentiable almost everywhere so that \cite[Proposition 3]{bianchi2020convergence} applies. Following the recursion argument in \cite[Proposition 2]{bianchi2020convergence}, there exists a set $\Gamma\subset (0,\infty)$ of full Lebesgue measure such that, if $s\alpha_k \in \Gamma$ for all $k \in \NN$, each iterate $(w_k)_{k\in \NN}$ is a random variable which is absolutely continuous with respect to the Lebesgue measure.
We have that
\newq{
\{s\in(s_{\min},s_{\max}): \exists k\in\N, s\alpha_k\in(0,\infty)\backslash\Gamma\} = \bigcup\limits_{k=1}^\infty \{s\in(s_{\min},s_{\max}): s\alpha_k\in(0,\infty)\backslash\Gamma\}
}
is a countable union of null sets and thus a null set, i.e., for almost all $s\in(s_{\min},s_{\max})$, for all $k\in\N$, $s\alpha_k\in\Gamma$. As a result, for almost all $s$, $w_k$ has a density with respect to the Lebesgue measure for all $k\in\N$. 

Conservative gradients are gradients almost everywhere and so there is a full measure set $S$ such that, for all $w \in S$ and all $i\in\{1, \ldots, N\}$, $J_i(w) = \{\nabla \ell_i(w)\}$ \cite[Theorem 1]{bolte2020conservative}. Combining this with the fact that each element of the sequence is absolutely continuous with respect to the Lebesgue measure, the same argument as in \cite[Theorem 1]{bianchi2020convergence} gives, for almost all $s\in(s_{\min},s_{\max})$, for every $k\in\N$, almost surely
\newq{
w_{k+1} = w_k - s\alpha_k \nabla \ell_{I_k} (w_k)
}
 and
\newq{
\mathbb{E} (w_{k+1}| w_0,\ldots,w_k) = w_k - s\alpha_k \nabla \ell (w_k) = w_k -  s\alpha_k \partial^c \ell (w_k).
}
Therefore, the sequence is actually a Clarke stochastic subgradient sequence almost surely (see, for example, \cite{davis2020stochastic}) and thus can be analyzed using the method developed in \cite{benaim2005stochastic}. Indeed, conservativity ensures that $\ell$ is a Lyapunov function for the differential inclusion $\dot{w} \in - \partial^c \ell (w)$, that is decreasing along solutions, strictly outside of $\mathrm{crit}_\ell := \{w \in \RR^p, \, 0 \in\partial^c \ell (w) \}$. Since $\ell$ is definable, the set of its critical values, $\ell(\mathrm{crit}_\ell)$ is finite \cite{bolte2007clarke} and thus has empty interior. By \cite[Theorem 3.6]{benaim2005stochastic} and \cite[Proposition 3.27]{benaim2005stochastic}, it is then guaranteed that $\ell(\bar{w})$ is constant for all accumulation points $\bar{w}$ of $(w_k)_{k\in\N}$ and that $0\in\partial^c \ell(\bar{w})$. This occurs almost surely with respect to the randomness induced by $w_0$ and $(I_k)_{k\in\N}$ and therefore it is true with probability one for almost all $w_0$.
\end{proof}

\section{Results from Section~\ref{sec:pathologies}}\label{sec:auxiliaryResults}

\subsection{Cyclic gradient descent}\label{app:cycle}

\subsubsection{Fixed-point formulation}
\label{app:cycle_cvxopt_fixed_point}
Consider the optimization problem
\begin{equation}
    \label{app:cvxlayer}
    (s_1, s_2) \in \underset{(a,b)\in [0,3]\times[0,5]}{\arg \max} (a+b)( -3x +y + 2 ).
\end{equation}

The optimality condition for this problem can be expressed using the fixed-point equation of the projected gradient descent algorithm. Denote for $x, y \in \R^2$, $q_{x,y} :  (a, b) \mapsto (a + b)(-3x + y + 2)$; we can verify $(s_1, s_2)$ is solution to \eqref{eq:cvxcycledynamic} if and only if it satisfies the equality

\begin{equation*}
    \begin{bmatrix}
        s_1  \\
        s_2  
    \end{bmatrix} = P_{\mcU} \left(    \begin{bmatrix}
        s_1  \\
        s_2  
    \end{bmatrix} + \nabla q_{x, y}(s_1, s_2) \right) = P_{\mcU} \left(     \begin{bmatrix}
        s_1  \\
        s_2  
    \end{bmatrix}  + 
    \begin{bmatrix}
        -3x + y + 2 \\
        -3x + y + 2
    \end{bmatrix} \right).
\end{equation*}

Where $P_{\mcU}$ is the projection on the set $\mcU :=  \left[0, 3\right] \times \left[0, 5\right]$ which can be implemented as a difference of relu functions
\begin{equation*}
    P_{\mcU}(x,y) = \relu(x,y) - \relu(x-3, y-5).
\end{equation*}
Let $h : \R^2 \times \R \times \R \rightarrow \R^2$ be the function
\begin{equation*}
    h : (s,x,y) \mapsto P_{\mathcal{U}} \left(     \begin{bmatrix}
        s_1  \\
        s_2  
    \end{bmatrix}  + 
    \begin{bmatrix}
        -3x + y + 2 \\
        -3x + y + 2
    \end{bmatrix} \right).
\end{equation*}

Then the original problem \eqref{app:cvxlayer} is equivalent to the fixed point equation $s = h(x, y, s)$. Indeed, we can easily verify the solutions $s : \R^2 \rightarrow \R^2$ to \eqref{app:cvxlayer} are

\begin{equation*}
s(x,y) =
\left\lbrace
\begin{array}{ccc}
\left\{(0,0) \right\}  & \mbox{if} & -3x + y + 2 < 0\\
\left\{(3, 5)\right\} & \mbox{if} & -3x + y + 2 > 0\\
\left[0, 3\right] \times \left[0, 5\right] & \mbox{if} & -3x + y + 2 = 0
\end{array}\right.
\end{equation*}

which creates a discontinuity for the function $\ell(\cdot, s(\cdot))$, now expressed as

\begin{equation*}
\ell(x,y,s(x,y)) =
\left\lbrace
\begin{array}{ccc}
x^2 + 4y^2  & \mbox{if} & -3x + y + 2 < 0\\
(x - 3)^2 + 4(y - 5)^2 & \mbox{if} & -3x + y + 2 > 0
\end{array}\right. .
\end{equation*}

\subsubsection{Perturbed experiments}
Perturbed experiments are done on the following perturbed loss function
\label{app:perturbedfunction}
\begin{equation*}
\ell_\varepsilon(x,y,s) = \left(\frac{1}{4} + \varepsilon_1\right)(x-s_1)^2 + (1+\varepsilon_2 )(y-s_2)^2  
\end{equation*}
\begin{equation*}
    s \in s_{\varepsilon}(x,y):=\arg \max \left\{ (a+b)( -(3+ \varepsilon_3)x + y + 2 + \varepsilon_4 ):\, a \in [0,3 - \varepsilon_5], b \in [0,5 - \varepsilon_6] \right\} \nonumber
\end{equation*}
with $\varepsilon_1,\ldots,\varepsilon_6$ the perturbations. In Figure~\ref{fig:jittercycle}, we consider several realizations of independent Gaussian variables $\varepsilon_1,\ldots, \varepsilon_6 \sim \mathcal{N}(0, \sigma^2)$ with $\sigma^2 = 0.05$; despite this added noise, the unwanted dynamics persist.
\subsubsection{Conic canonicalization} 
Let $c \in \R^2$ be a parameter vector and consider the problem
\begin{equation*}
	\begin{array}{ll}
		\underset{x \in \left[0, 3\right] \times \left[0,5 \right]}{\max}  &c^T x.\\
	\end{array}
\end{equation*}
It can be formulated as a cone program (P) and its dual (D):
\begin{center}
\begin{tabular}{p{6.5cm}p{6.5cm}}
	{\begin{equation*}
			\begin{array}{lll}
				\text{(P)}
				&\inf  &c^T x\\
				&\text{subject to} &  Ax+s=b\\
				&  &s\in  \mathcal{K}
			\end{array}
	\end{equation*}}
	&
	{ \begin{equation}
			\label{eq:primalDualapp}
			\begin{array}{lll}
				\text{(D)}&\inf& b^T y\\
				&\text{subject to}& A^T y+c=0\\
				&&y\in  \mathcal{K}^*,
			\end{array}
	\end{equation}}
\end{tabular}
\end{center}
where
\begin{equation*} 
    A = \begin{bmatrix}
        \Id_2 \\
        - \Id_2
    \end{bmatrix}\mbox{ and } \ 
    b = \begin{bmatrix}
        3 \\
        5 \\
        0 \\
        0
    \end{bmatrix}. 
\end{equation*}
Let $(x, y, s)$ be a solution to the cone program \eqref{eq:primalDualapp} where $x$ is the primal variable, $y$ is the dual variable, and $s$ the primal slack variable. Then it follows from \eqref{eq:solconic} that a solution $z$ to $\mathcal{N}(z,c) = 0$ is obtained by $z = (x, y - s)$. For $c = (0, 0)$, the solutions are $x \in \left[0, 3\right] \times \left[0, 5\right]$, $s = b - Ax$, and $y = (0, 0, 0, 0)$, hence the uniqueness assumption for Proposition~\ref{prop:pathdiffconeprog} is not satisfied.

\subsubsection{A chaotic dynamics in \texorpdfstring{$\R^4$}{R\string^4}}
\label{app:billiarddynamic}
We combine two cycles of the previous example into a gradient dynamics in $\R^4$. To perform this, we consider a block-separable sum of the same function where we add a scaling parameter $\eta > 0$:
\begin{equation*}
    g : (x, y, z, w) \mapsto f(x,y) + \eta f(z,w).
\end{equation*}
This will combine the two cycles but the parameter $\eta$ will make one cycle ``faster'' than the other. Projecting the path of the gradient descent on the variables $(y,z)$ we obtain a chaotic dynamics filling the space as the number of iterations increases.
\begin{figure}[H]
\centering
\begin{subfigure}{.31\textwidth}
  \centering
  \includegraphics[width=1\linewidth]{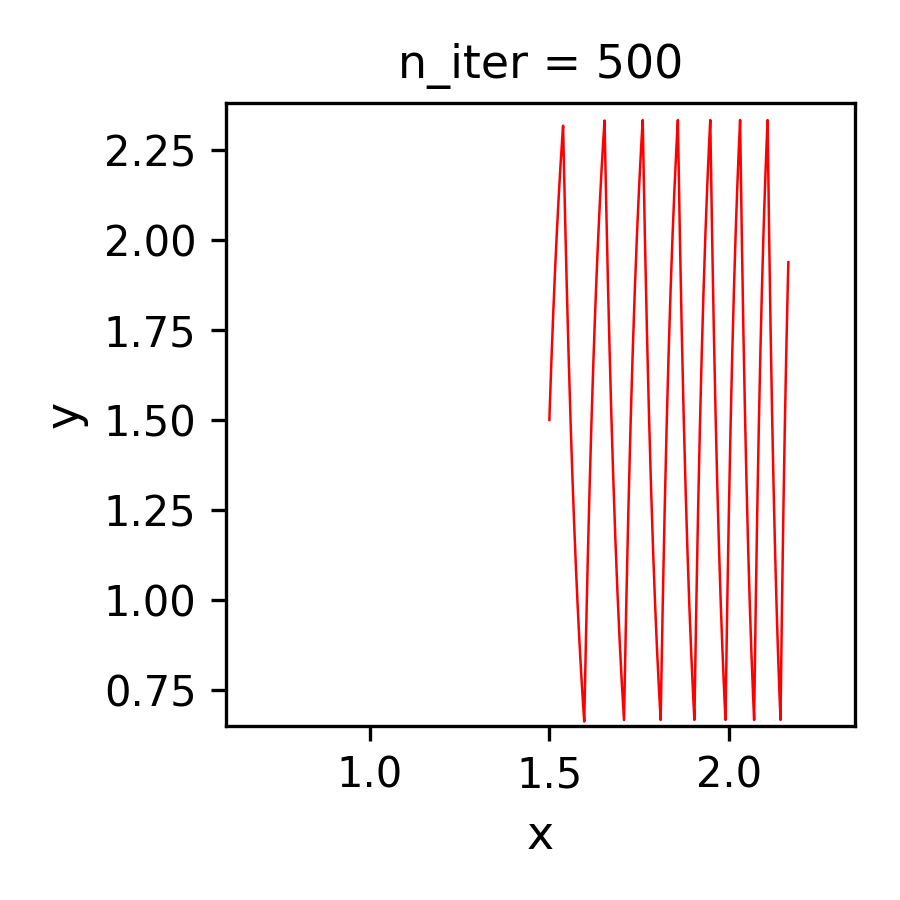}
  \caption{}
\end{subfigure}%
\begin{subfigure}{.31\textwidth}
  \centering
  \includegraphics[width=1\linewidth]{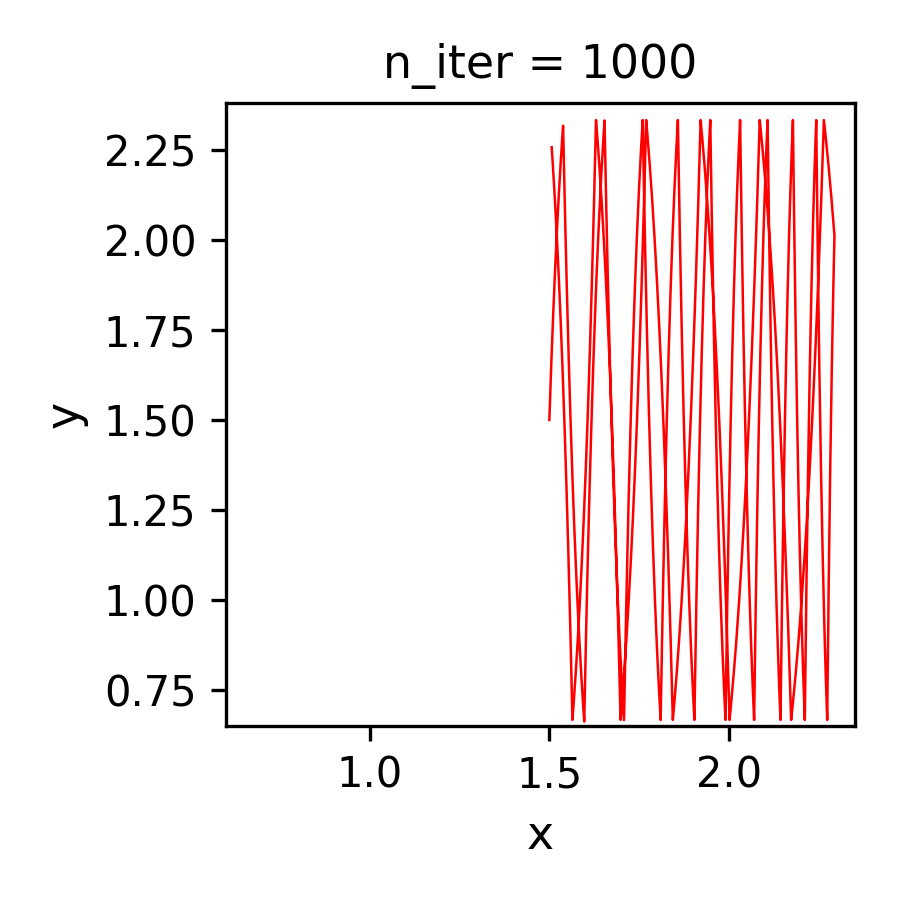}
  \caption{}
\end{subfigure}
\begin{subfigure}{.31\textwidth}
  \centering
  \includegraphics[width=1\linewidth]{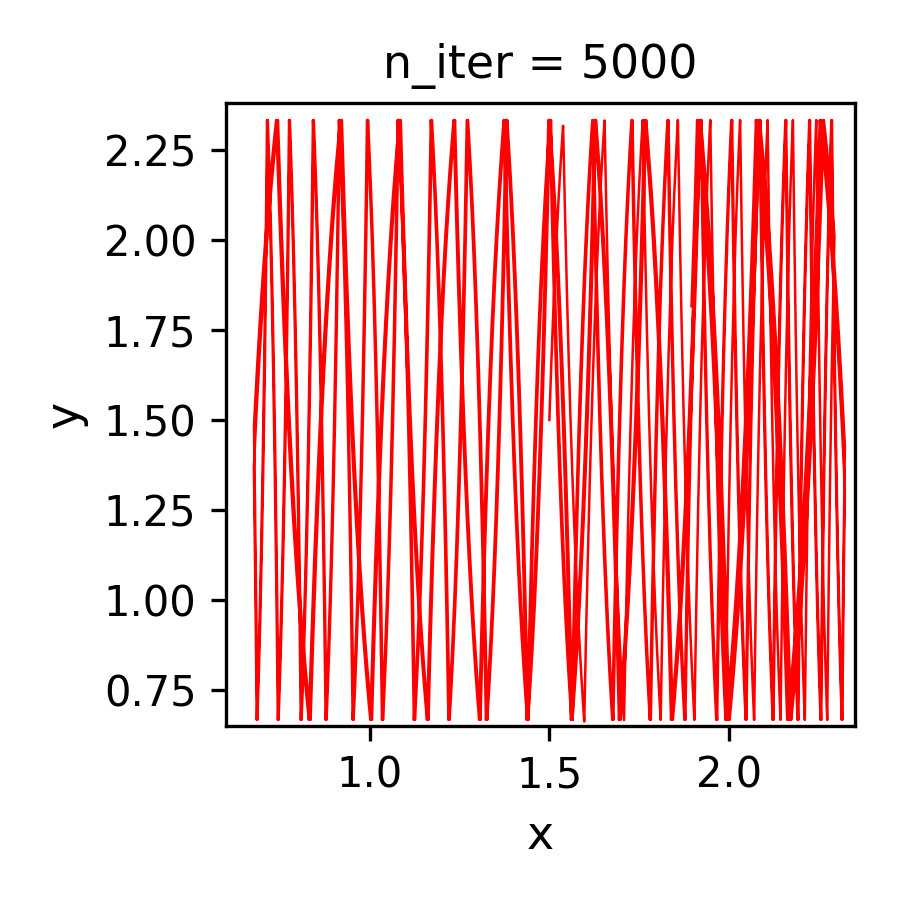}
  \caption{}
\end{subfigure}
\caption{Gradient path after (a) 500, (b) 1000 and (c) 5000 iterations.}
\end{figure}
\subsection{Lorenz-like attractor}
\subsubsection{Objective function is a quadratic form}
\label{app:lorenzquadraticform}
Set $u = (x, y, z)$, then
\begin{align*}
    u^T F(u) &= \sigma x(y - x) +  xy(\rho - z) - y^2 + xyz - \beta z^2 \\
            &= -\sigma x^2 - y^2 - \beta z^2 + (\sigma + \rho)xy\\
            &= \frac{1}{2} u^T H u
\end{align*}
where $H = \begin{bmatrix}
-2\sigma & \sigma + \rho & 0 \\
\sigma + \rho & -2 & 0 \\
0 & 0 & - 2\beta
\end{bmatrix}$. 

For $(\sigma, \rho, \beta) = (10, 28, \frac{8}{3})$, $g$ has for unique critical point $(0, 0, 0)$ which is a strict saddle-point.

\subsection{License of assets used} 
All assets used: cvxpy, cvxpylayers, and JAX were released under the Apache License, Version 2.0, January 2004, \texttt{http://www.apache.org/licenses/}.
\end{document}